\def\eqref#1{equation~\ref{#1}}
\def\1{\bm{1}}
\DeclareMathAlphabet{\mathsfit}{\encodingdefault}{\sfdefault}{m}{sl}
\SetMathAlphabet{\mathsfit}{bold}{\encodingdefault}{\sfdefault}{bx}{n}
\newcommand{\E}{\mathbb{E}}
\theoremstyle{plain}
\newtheorem{theorem}{Theorem}[section]
\newtheorem{proposition}[theorem]{Proposition}
\newtheorem{lemma}[theorem]{Lemma}
\theoremstyle{definition}
\theoremstyle{remark}
\def\cV{{\cal V}}
\def\cH{{\cal H}}
\def\cF{{\cal F}}
\def\cG{{\cal G}}
\def\cX{{\cal X}}
\def\cZ{{\cal Z}}
\def\cC{{\cal C}}
\def\qed{\space$\square$ \par \vspace{.15in}}
\def\hat{\widehat}
\newcommand{\mbP}{\mathbb{P}}
\newcommand{\br}{{\bf r}}
\newcommand{\bZ}{{\bf Z}}
\newcommand{\bz}{{\bf z}}
\newcommand{\bX}{{\bf X}}
\newcommand{\bx}{{\bf x}}
\newcommand{\bV}{{\bf V}}
\newcommand{\bU}{{\bf U}}
\newcommand{\balpha}{\mbox{\boldmath{$\alpha$}}}
\newcommand{\bc}{\begin{center}}
\newcommand{\ec}{\end{center}}
\newcommand{\be}{\begin{equation}}
\newcommand{\ee}{\end{equation}}
\newcommand{\ba}{\begin{array}}
\newcommand{\ea}{\end{array}}
\newcommand{\bean}{\begin{eqnarray*}}
\newcommand{\eean}{\end{eqnarray*}}
\newcommand{\bea}{\begin{eqnarray}}
\newcommand{\eea}{\end{eqnarray}}
\newcommand{\ben}{\begin{enumerate}}
\newcommand{\een}{\end{enumerate}}
\newcommand{\bed}{\begin{itemize}}
\newcommand{\eed}{\end{itemize}}
\icmltitlerunning{Learning fair representation with a parametric integral probability metric}
\begin{document}

\twocolumn[
\icmltitle{Learning fair representation with a parametric integral probability metric}




\begin{icmlauthorlist}
\icmlauthor{Dongha Kim}{swu,dsc}
\icmlauthor{Kunwoong Kim}{snu}
\icmlauthor{Insung Kong}{snu}
\icmlauthor{Ilsang Ohn}{iu}
\icmlauthor{Yongdai Kim}{snu}
\end{icmlauthorlist}

\icmlaffiliation{snu}{Department of Statistics, Seoul National University}
\icmlaffiliation{swu}{Department of Statistics, Sungshin Women's University}
\icmlaffiliation{dsc}{Data Science Center, Sungshin Women's University}
\icmlaffiliation{iu}{Department of Statistics, Inha University}

\icmlcorrespondingauthor{Yongdai Kim}{ydkim0903@gmail.com}

\icmlkeywords{Machine Learning, ICML}

\vskip 0.3in
]



\printAffiliationsAndNotice{}  

\begin{abstract}

As they have a vital effect on social decision-making, AI algorithms should be not only accurate but also fair. Among various algorithms for fairness AI, learning fair representation (LFR), whose goal is to find a fair representation with respect to sensitive variables such as gender and race, has received much attention. For LFR, the adversarial training scheme is popularly employed as is done in the generative adversarial network type algorithms. The choice of a discriminator, however, is done heuristically without justification. In this paper, we propose a new adversarial training scheme for LFR, where the integral probability metric (IPM) with a specific parametric family of discriminators is used. 
The most notable result of the proposed LFR algorithm is its theoretical guarantee about the fairness of the final prediction model, which has not been considered yet. That is, we derive theoretical relations between the fairness of representation and the fairness of the prediction model built on the top of the representation (i.e., using the representation as the input). 
Moreover, by numerical experiments, we show that our proposed LFR algorithm is computationally lighter and more stable, and the final prediction model is competitive or superior to other LFR algorithms using more complex discriminators.

\end{abstract}

\section{Introduction}
\label{intro}
Artificial intelligence (AI) has accomplished tremendous success in various real-world domains.   
The key of success of AI is ``learning from data''.
However, in many cases, data include
historical bias against certain socially sensitive groups such as
gender, race, religion, etc \cite{feldman2015certifying, angwin2016machine, kleinberg2018algorithmic, mehrabi2019survey},
and trained AI models from such biased data
could also impose bias or unfairness against sensitive groups.
As AI has a wide range of influences on human social life, issues of transparency and ethics of AI are emerging.  Therefore, designing an AI algorithm which is accurate and fair simultaneously has become a crucial research topic \cite{calders2009building, feldman2015certifying, barocas2016big, hardt2016equality, zafar2017fairness, donini2018empirical, pmlr-v80-agarwal18a, quadrianto2019discovering}.

Among various researches related to fair AI, learning fair representation (LFR)
has received much attention recently \cite{pmlr-v28-zemel13, 8622525, Quadrianto_2019_CVPR, ruoss2020learning, pmlr-v130-gitiaux21a, zeng2021fair}.
Fair representation typically means a feature vector obtained by transforming the data
such that the distributions of the feature vector for each sensitive group are similar.
Once the fair representation is learned, any prediction models constructed on the top of the fair 
representation (i.e. using the representation as an input vector) are expected to be fair
\cite{pmlr-v28-zemel13, Madras2018LearningAF}. 


A popular approach for LFR is to use the adversarial training scheme \cite{9aa5ba8a091248d597ff7cf0173da151,Madras2018LearningAF}. 
As is done in the generative adversarial network (GAN, \citet{NIPS2014_5ca3e9b1}), the algorithm seeks a representation that fools the discriminator the best
that tries to predict which sensitive group a given representation belongs.
Different algorithms to learn the discriminator result in different algorithms for LFR. 

Despite their considerable success, there are still 
theoretical and practical limitations in the existing learning algorithms for fair representation based on the adversarial training scheme.
First of all, it is not clear how the level of fairness of the representation
affects the level of fairness of the final prediction model (built on the top of the representation).
This problem is important since the final goal of LFR is to construct fair prediction models.

In this paper, we consider the adversarial training scheme based on the integral probability metric (IPM). The IPM, which includes
the Wasserstein distance \cite{KR:58, Villani2008OptimalTO} as a special case, has been widely used for learning
generative models (e.g. Wasserstein GAN, \citet{10.5555/3305381.3305404}), but has not been used for fair
representation. An advantage of using the IPM is that we can control the level of fairness of the final prediction model by controlling the level of fairness of the representation relatively easily.

The second problem we study, which is the main contribution of this paper, is the choice of the class of discriminators.
Deep neural networks (DNNs) are popularly used for the discriminator \cite{NIPS2014_5ca3e9b1, 10.5555/3305381.3305404, Madras2018LearningAF, pmlr-v97-creager19a, ansari2020characteristic},
but the choice of the architecture (the numbers of layers and nodes at each layer)
is decided rather heuristically without justification. 
In this paper, we propose a specific parametric family of discriminators and provide theoretical guarantees of the fairness of the final prediction models in terms of the fairness of the representation for large classes of prediction models.

By applying the IPM with the proposed parametric family of discriminators, 
we propose a new learning algorithm for fair representation abbreviated by
the sIPM-LFR (sigmoid IPM for Learning Fair Representation).
Along with the theoretical guarantees, the sIPM-LFR has several advantages over existing LFR algorithms. For example, the sIPM-LFR is computationally lighter, more stable, and less prone to bad local minima. Moreover, the final prediction model is competitive or superior in prediction performance to those from other LFR algorithms. 


This paper is organized as follows. 
In Section \ref{relat}, we review related studies about fairness of  AI. 
The sIPM-LFR algorithm is proposed in Section \ref{not_pre}, and
the results of theoretical studies are presented in Section \ref{theory}.
Numerical studies are conducted in Section \ref{exps} and concluding remarks follow in Section \ref{conclusion}. 

The main contributions of this work are summarized as follows.
\begin{itemize}
    \item We propose a simple but powerful fair representation learning method by developing a new adversarial training scheme based on a parametric IPM.
    \item We give theoretical guarantees about fairness of the final prediction model in terms of fairness of the representation.
    \item We empirically show that our algorithm is competitive or even superior to other existing LFR algorithms.
\end{itemize}

\section{Related works}
\label{relat}

\paragraph{Algorithmic fairness}

Generally, various concepts of fair prediction models can be summarized into three categories. 
The first category is \textit{group fairness} which requires that certain statistics of the prediction model at each sensitive group are similar \cite{calders2009building, barocas2016big}. 

The second notion of fair prediction models is  \textit{individual fairness}, which aims at treating similar inputs similarly \cite{dwork} regardless of sensitive groups.
Various practical algorithms and their theoretical properties have been proposed and studied by \citet{pacf, average, face, sensr}.

The third concept of fair prediction models is \textit{counterfactual fairness} \cite{NIPS2017_a486cd07}, which can be considered as a compromise between group fairness and individual fairness.
Simply speaking, counterfactual fairness requires that similar individuals only from different sensitive groups should have similar prediction values. 
The notion of counterfactual is used to define similar individuals from different sensitive groups \cite{ijcai2019-199, Chiappa_2019, 10.1145/3306618.3317950}.

\paragraph{Learning fair representations}

LFR has a different strategy than
the fair AI algorithms mentioned in the previous subsection.
Instead of constructing fair prediction models directly, LFR 
first constructs a fair representation such that the distributions of the representation for each
sensitive group are similar. Then, LFR learns a prediction model 
on the top of the representation (i.e. using the fair representation as an input).
LFR has been initially considered by \citet{pmlr-v28-zemel13},
and many advanced algorithms have been developed \cite{8622525, pmlr-v97-creager19a, Quadrianto_2019_CVPR, ruoss2020learning, pmlr-v130-gitiaux21a, zeng2021fair} afterward.

One of the most pivotal learning frameworks of LFR is the adversarial training scheme \cite{9aa5ba8a091248d597ff7cf0173da151,Madras2018LearningAF}.
Those algorithms try to fool a given discriminator similar to that of GAN does \cite{NIPS2014_5ca3e9b1}. The aim of this paper is to propose a new adversarial training scheme for LFR which is computationally easier and has desirable theoretical guarantees.

\section{Learning fair representation by use of a parametric IPM}
\label{not_pre}

In this section, we propose a new learning algorithm for fair representation. 
In particular, we develop a parametric IPM to measure the fairness of a given representation mapping. 
We first review the population version of the existing learning algorithms for fair representation and explain problems when we modify the population version to the sample version and propose a parametric IPM to resolve the problems.

\subsection{Notations and Preliminaries} 

\paragraph{Notations} 
Let  $\mathbf{X} \in \mathcal{X} \subset \mathbb{R}^{d}$, $S \in \{0, 1\}$, 
and $Y \in \{0, 1\}$ be the non-sensitive random input vector, (binary) sensitive random input variable and (binary) output variable whose joint distribution
is $\mathbb{P}.$
 Also let $\mathbf{Z} := h(\mathbf{X},S)$ be the representation of an input vector
$(\bm{X},S)$ obtained by an encoding function $h:\cX\times\{0,1\}\to\cZ \subset \mathbb{R}^{m}$. 
Note that we allow the encoding function depending on both non-sensitive and sensitive inputs as \citet{Madras2018LearningAF} did. 
Let $f:\cZ\to \mathbb{R}$ and $f_D:\cZ\to\cX\times\{0,1\}$ be a prediction model and a decoding function, respectively. 
For technical simplicity, we assume that $\cZ$ is bounded and
$\sup_{\bz\in \cZ} |f(\bz)| \le F$ for some constant $F>0.$

\paragraph{Fairness for DP} 

Fair representation is closely related to demographic parity (DP) which is a concept for group fairness.
In fact, we will see later that the prediction model $f\circ h$ can be fair in view of DP 
when the representation $\mathbf{Z}$ is fair in a certain sense.
Here, we briefly review the notion of fairness for DP.

Let $\phi$ be a function from $\mathbb{R}$ to $\mathbb{R}.$
For a given prediction model $g:\cX\times \{0,1\} \rightarrow \mathbb{R},$
we say that the level of $\phi$-fairness of $g$ is $\epsilon$ if 
$DP_\phi(g)< \epsilon,$ where
\begin{equation}
\label{eq:DP}
DP_\phi(g)= |\E(\phi\circ g(\bm{X},S)|S=0)-\E(\phi\circ g(\bm{X},S)|S=1)|.
\end{equation}
Various definitions of DP-fairness are special cases
of the $\phi$-fairness. The original DP-fairness uses $\phi(w)=\mathbb{I}(w\ge 0)$ \cite{calders2009building, barocas2016big}, and $\phi(w)=(w+1)_{+}$
is popularly used as a convex surrogate of $\mathbb{I}(w\ge 0)$ \cite{10.1145/3308558.3313723, pmlr-v119-lohaus20a}.
When $\phi(w)=w,$ the corresponding fairness measure becomes the mean DP (MDP, \citet{Madras2018LearningAF, chuang2021fair}). 

\subsection{Description of LFR algorithms}

The goal of LFR is to find an encoding function $h$ such that 
\be
\label{eq:ed1}
 \mathbb{P}\left\{ h(\bm{X},S) \in \cdot|S=0 \right\} \approx  \mathbb{P}\left\{ h(\bm{X},S) \in \cdot|S=1 \right\}.
\ee
Once we have the encoding function, we construct a prediction model on
the representation space $\cZ.$ That is, the final prediction model $g$ is given as
$g(\bm{x},s)= f \circ h(\bm{x},s),$ where $f$ is a prediction model from $\cZ$ to $\mathbb{R}.$ 
Due to (\ref{eq:ed1}),
we expect that 
$$
 \mathbb{P}\left\{ g(\bm{X},S) \in \cdot|S=0 \right\} \approx  \mathbb{P}\left\{ g(\bm{X},S) \in \cdot|S=1 \right\}.
$$
and thus the prediction model is expected to be DP-fair.

The basic algorithm of LFR consists of the following two steps.
The first step is to choose a deviance measure $d$ between two distributions and a class $\cH$ of encoding functions
and the second step is to find an encoding function $h$ which minimizes $d(\mathbb{P}_0^h,\mathbb{P}_1^h),$
where $\mathbb{P}_s^h$ is the conditional distribution of $h(\bm{X},S)$ given $S=s$ for $s\in \{0,1\}.$

In turn, to define a deviance measure, the adversarial training scheme is popularly employed. 
For a given class of discriminators $\cV$ and a given classification loss $l,$ one possible deviance measure is defined as
$ d(\mathbb{P}_0^h,\mathbb{P}_1^h) =\sup_{v\in \cV} \E\left\{l(S, v\circ h(\bm{X},S)\right\}.$
Various classification losses have been used for learning fair representation:
\citet{9aa5ba8a091248d597ff7cf0173da151} uses the cross-entropy loss and \citet{Madras2018LearningAF} uses the $L_{1}$ loss.

The minimizer of $d(\mbP_{0}^h,\mbP_{1}^h),$ however, is not unique in most cases. 
For example, if there exists $h$ 
such that $d(\mbP_{0}^h,\mbP_{1}^h)=0,$ then any encoding function given as $\xi\circ h$ for any $\xi: \cZ \rightarrow \cZ$ also has
the zero deviance. 
Also, an encoder derived as such might not provide helpful information (e.g., $h(\cdot)=0$). 

There are two ways to resolve these problems in the adversarial training scheme for LFR - supervised and unsupervised methods. 
For the supervised adversarial training scheme, we choose
a set $\cF$ of prediction models on $\cZ$ and then learn $h$ as well as $f$ by minimizing 
\begin{equation}
\label{eq:obj-sup}
L(f\circ h)+\lambda d(\mbP_{0}^h, \mbP_{1}^h)
\end{equation}
in $f\in \cF$ and $h\in \cH,$ where $L$ is a certain classification risk for $Y$ such as the cross-entropy and $\lambda>0$ is a regularization parameter. 

For the unsupervised adversarial training scheme,
we first choose a set $\cF_D$ of decoding functions from $\cZ$ to $\cX\times \{0,1\},$
then we learn the encoding function by minimizing 
\begin{equation}
\label{eq:obj-unsup}
L_{recon}(f_D\circ h)+ \lambda d(\mbP_{0}^h, \mbP_{1}^h),
\end{equation}
where $L_{recon}$ is a reconstruction error.
When the learning procedure of $h$ finishes, the extracted fair representation are used to solve various downstream classification tasks. 
That is, we do not use the label information $Y$ when we learn $h,$ which is an advantage of the unsupervised adversarial training scheme.

When we do not know the population distribution $\mbP$ but we have data,
a standard method of LFR is to replace $\mbP$
by its empirical counterpart 
$\mbP_n(\cdot)=\sum_{i=1}^n \delta_{(\bx_i,y_i,s_i)}(\cdot)/n,$
the empirical distribution, where $\delta_a$ is the Dirac-delta function and $\{(\bx_i,y_i,s_i)\}_{i=1}^n$ is a given training dataset.

Regarding optimizing the formulas (\ref{eq:obj-sup}) and (\ref{eq:obj-unsup}) in practice, obtaining
the value of $d(\mathbb{P}_{0}^h, \mathbb{P}_{1}^h)$
 is time-consuming since we have to find a discriminator maximizing the classification loss of $S$
 (i.e. $\sup_{v\in \cV} \E\left\{l(S, v\circ h(\bm{X},S)\right\}$).
To reduce this computational burden, at each update, we apply a gradient ascent algorithm to update the parameters
in the discriminator few times, e.g. five times, as is done by \citet{NIPS2014_5ca3e9b1}. 

{The aim of this paper is to propose a novel measure for 
$d(\mathbb{P}_{0}^h, \mathbb{P}_{1}^h)$ used in (\ref{eq:obj-sup}) and (\ref{eq:obj-unsup}), which we will describe in the subsequent sections. 
For details of the corresponding learning algorithm, see Section \ref{appendix:alg}.
}

\subsection{Learning fair representation with IPM}

In this paper, we consider the integral probability metric (IPM) as 
the deviance measure for LFR. For a given class $\cV$ of discriminators from $\cZ$ to $\mathbb{R},$ the IPM $d_{\cV}(\mbP_0,\mbP_1)$ 
for given two probability measures $\mbP_0$ and $\mbP_1$ is defined as
$$d_{\cV}(\mbP_0,\mbP_1)=\sup_{v\in \cV} \left| \int v(\bm{z}) (d\mbP_0(\bm{z})-d\mbP_1(\bm{z}))\right|.$$
When $\cV$ includes all Lipschitz functions\footnote{A given function $v$ on $\cZ$ is a Lipschitz function with the Lipschitz constant $L$ if
$|v(\bz_1)-v(\bz_2)| \le L \|\bz_1-\bz_2\|$ for all $\bz_1,\bz_2\in \cZ,$
 where $\|\cdot\|$ is certain norm defined on $\cZ.$}, then the IPM becomes the well known Wasserstein norm \cite{KR:58}. Even if it is popularly used in various applications of AI including the  generative model learning, the IPM has not been studied
deeply for LFR.

An obvious advantage of the IPM compared to the other deviance measures is that the level of the IPM is directly related to the level of DP-fairness of the final prediction model. That is, suppose that
a given encoding function $h$ satisfies $d_{\cV}(\mbP_{0}^h,\mbP_{1}^h)<\epsilon,$ then any prediction model given as $f \circ h$ automatically satisfies 
the level of $\phi$-fairness less $\epsilon,$ as long as $\phi\circ  f$ belongs to $\cV.$
For example, suppose that $\cV$ is the set of Lipschitz continuous functions.
If $\phi$ is a Lipschitz function with the Lipschitz constant less than or equal to 1, then $\phi\circ f$ belongs to $\cV$ whenever $f\in \cV.$
Examples of $\phi$ with the bounded Lipschitz constant are $\phi(w)=w$ and $\phi(w)=(w+1)_{+}.$

\subsection{The sigmoid IPM: A parametric IPM for fair representation}

We need to set in advance the function spaces for $\cH, \cF,$ and $\cF_D$ as well as $\cV$ to make the minimization of the regularized empirical risk in (\ref{eq:obj-sup}) or
(\ref{eq:obj-unsup}) be possible.
There are many well known and popularly used models for $\cH$ (e.g. DNN and 
ConvNet), $\cF$ (e.g. linear, DNN, and Kernel machine 
\cite{10.1023/A:1022627411411}),
and $\cF_D$ (e.g. DNN and DeConvNet \cite{noh2015learning}). In contrast, the choice of $\cV$ is typically done heuristically. 
DNNs are popularly used for $\cV$ \cite{10.5555/3305381.3305404},
but the choice of the architecture (the numbers of layers and nodes at each layer)
is decided without justification. In this subsection, we focus on the choice of
$\cV$ and propose a specific parametric family with theoretical justifications
in view of DP-fairness.

Suppose that $\cH$ and $\cF$ are given. That is, the final prediction model
is given as $f\circ h,$ where $f\in \cF$ and $h\in \cH.$ Also, the fairness function $\phi$ is given. Our mission is to choose $\cV$ such that
the level of $\phi$-fairness of the final prediction model can be controlled by controlling 
the $d_{\cV}(\mbP_0^h,\mbP_1^h).$ This is an important task for the unsupervised LFR since the label $Y$ is not available when fair representation
is learned.

To be more specific, we derive a non-decreasing function $\rho:\mathbb{R}_+ \rightarrow \mathbb{R}_+$
such that 
$$\sup_{f\in \cF} DP_{\phi}(f\circ h) \le \rho\left\{d_{\cV}(\mbP_0^h,\mbP_1^h)\right\}.$$
That is, we can control the $\phi$-fairness of any $f\circ h$
 by controlling the $d_{\cV}$ of $h.$

A naive choice of $\cV$ would be that $\phi\circ f \in \cV$ for all $f\in \cF,$
in which case $\rho(\epsilon)=\epsilon.$ Such a choice, however,
is not possible for the unsupervised LFR since the prediction model
space $\cF$ is selected after learning the fair representation.
One may choose a very large $\cV$ so that $\phi\circ \cF \subset \cV$
for most classes of $\cF.$ Such a choice, however, would make the computational cost unnecessarily large and increase
the variance of the learned model due to too many parameters in $\cV$ to degrade performance.

We explore an opposite direction: to seek a class of $\cV$ that is small but controls the level of $\phi$-fairness easily.
In this paper, we propose a specific parametric family for $\cV$
and show that the $\phi$-fairness of $f\circ h$ can be controlled
nicely by $d_{\cV}$ of $h$ for fairly large classes of $\cF.$ 

In fact, using the parametric IPM is not new. \citet{ansari2020characteristic} considers $\cV_{char}=\{\exp(i\mathbf{t}^\top \bm{x}): \mathbf{t}\in\mathbb{R}^m\}$
in the GAN algorithm. This class of functions are related to the characteristic function and it is easy to see that $d_{\cV_{char}}(\mbP_0,\mbP_1)=0$
if and only if $\mbP_0(\cdot) \equiv \mbP_1(\cdot).$ However, it is not clear
what happens when $d_{\cV_{char}}(\mbP_0,\mbP_1) < \epsilon.$
That is, not much is known about
which quantities of $\mbP_0$ and $\mbP_1$ are similar.
\citet{mccullagh1994does} noticed that $d_{\cV_{char}}$ would not be a useful metric between probability measures.

The parametric family we propose in this paper is
\begin{align} \label{g_sigma}
    \cV_{sig}=\{ \sigma(\theta^\top \bm{x}+\mu): \theta\in \mathbb{R}^m,
\mu\in \mathbb{R}\},
\end{align}
where $\sigma(z)=(1+\exp(z))^{-1}$ is the sigmoid function. 
It is surprising to see that the IPM with this simple $\cV_{sig}$ can control
the level of $\phi$-fairness of $f\circ h$ for diverse classes of $\cF,$  whose
results are rigorously stated in the following section. 

Before going further, we give a basic property of the IPM with $\cV_{sig},$ whose proof is stated in Appendix \ref{appendix:proofs}.

{\proposition{\label{pro:1}
For two probability measures $\mbP_0$ and $\mbP_1,$
$d_{\cV_{sig}}(\mbP_0,\mbP_1)=0$ if and only if $\mbP_0(\cdot) \equiv \mbP_1(\cdot).$
}}


\section{Theoretical studies of the IPM with $\cV_{sig}$}
\label{theory}


One may concern that the final prediction model $f\circ h$
would not be fair because the class $\cV_{sig}$ of discriminators
is too small. In this section, we show that the IPM with $\cV_{sig}$
can control the level of $\phi$-fairness of $f\circ h$ for quite large classes of $\cF$ even if $\cV_{sig}$ is small.

We start with the DP-fairness of the perfectly fair representation,
which is a direct corollary of Proposition \ref{pro:1}. 
We defer the proofs of all the following theorems to Appendix \ref{appendix:proofs}.

\begin{theorem}
\label{thm1}
If $d_{\cV_{sig}} (\mbP_{0}^h, \mbP_{1}^h) = 0,$ then the $\phi$-fairness of any prediction model
$f\circ h$ is always 0.
\end{theorem}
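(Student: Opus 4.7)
The proof of Theorem 4.1 is essentially a one-line consequence of Proposition 3.1, so my plan is mostly to unpack definitions and make the connection explicit.

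The plan is to proceed in three short steps. First, I would invoke Proposition 3.1 directly: the hypothesis $d_{\cV_{sig}}(\mbP_0^h, \mbP_1^h) = 0$ combined with the ``only if'' direction of Proposition 3.1 gives $\mbP_0^h \equiv \mbP_1^h$ as probability measures on $\cZ$. That is, the conditional law of $\bZ = h(\bX, S)$ given $S=0$ coincides with the conditional law given $S=1$.

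Second, I would push this equality of laws forward through the composition $\phi \circ f$. Since $\phi \circ f$ is a (measurable, bounded by our assumption $|f| \le F$ and by whatever regularity $\phi$ possesses implicitly through the definition of $DP_\phi$) real-valued function on $\cZ$, equality of the two conditional distributions of $\bZ$ immediately yields
\begin{equation*}
\E\bigl(\phi \circ f \circ h(\bX, S) \,\big|\, S=0\bigr) = \E\bigl(\phi \circ f \circ h(\bX, S) \,\big|\, S=1\bigr),
\end{equation*}
because both expectations are integrals of the same function against the same measure.

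Third, I would plug this equality into definition (\ref{eq:DP}) of $DP_\phi$ with $g = f \circ h$, so that $DP_\phi(f \circ h) = 0$ for every choice of $f$. Since neither $f$ nor $\phi$ entered into the argument beyond the measurability required for the expectations to be defined, the conclusion holds for arbitrary prediction models on the top of $h$.

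There is no real obstacle here: Proposition 3.1 does all the heavy lifting, and the remainder is just a transfer of ``equality in distribution implies equality of expectations of any measurable functional.'' The only thing worth being slightly careful about is stating at the outset the mild measurability/boundedness of $\phi \circ f$ needed for the conditional expectations in (\ref{eq:DP}) to make sense; the boundedness of $f$ assumed in Section 3.1 takes care of this as long as $\phi$ is measurable.
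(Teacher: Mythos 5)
Your proposal is correct and coincides with the paper's approach: the paper's entire proof is the single line that the claim is trivial by Proposition \ref{pro:1}, and your three steps simply spell out that triviality (equality of the conditional laws $\mbP_0^h \equiv \mbP_1^h$, hence equality of the expectations of the measurable, bounded function $\phi \circ f$ under the two laws, hence $DP_\phi(f \circ h) = 0$ for every $f$). Your closing remark on the measurability and boundedness needed for the expectations in the definition of $DP_\phi$ to exist is a harmless refinement that the paper leaves implicit.
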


It is not realizable to get a perfectly fair representation in practice. Instead, we learn an encoding function whose IPM value is close to 0. In the next two subsections,
we quantify how small the level of $\phi$-fairness of $f\circ h$ when the $d_{\cV_{sig}}$ of $h$ is small for various classes $\cF$ of $f.$

For technical simplicity, we only consider $\phi$ being a polynomial function (i.e. $\phi(w)=w^k$).
Note that reasonably smooth functions can be approximated
by linear combinations of low order polynomial functions. Hereafter, we denote $\phi_k(w)=w^k.$

\subsection{DP-fairness when $\cF$ is well approximated
by a shallow neural network} 

There is much literature about classes of functions that are
well approximated by shallow neural networks with the sigmoid activation function \cite{barron1993universal, yukich1995sup}. In this section, we show that the level of DP-fairness of such functions can
be controlled by the level of the sigmoid IPM.

We consider the class $\cF_{a,C}$ of functions 
considered by \citet{barron1993universal, yukich1995sup}: 
$$\cF_{a,C}=\left\{f: 
\int |\tilde{f}(\bm{w})|d\bm{w}\le a,
\int \|\bm{w}\|_1 |\tilde{f}(\bm{w})|d\bm{w}\le C\right\}$$
for positive constants $a$ and $C,$
where $\tilde{f}(\bm{w})=\int e^{-i \bm{w}^\top \bz} f(\bz) d\bz.$

It is known that any function in $\cF_{a,C}$ can be approximated closely by a single-layered shallow neural network with a finite number of hidden nodes \cite{yukich1995sup}. 
Using this proposition, we have the following theorem, whose proof is deferred to Appendix \ref{appendix:proofs}.

\begin{theorem}
\label{thm3_1}
There exists a constant $c_k>0$ such that
\be
 \sup_{f \in \cF_{a,C}}  DP_{\phi_k}(f\circ h) \le {c_k} \left\{d_{\cV_{sig}}(\mbP_0^h,\mbP_1^h)\right\}^{1/3}. \label{eq:sobolev1}
\ee
\end{theorem}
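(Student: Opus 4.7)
The plan is to approximate the scalar function $\phi_k\circ f = f^k$ by a linear combination of functions in $\cV_{sig}$, and then bound $DP_{\phi_k}(f\circ h)$ using both the approximation error and the defining property of the IPM.

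First, I would establish a reduction. For any $g_m(\bz)=\sum_{l=1}^{m}\lambda_l\sigma(\alpha_l^\top \bz+\eta_l)$ whose summands lie in $\cV_{sig}$, setting $\epsilon := d_{\cV_{sig}}(\mbP_0^h,\mbP_1^h)$, a triangle inequality combined with the definition of $d_{\cV_{sig}}$ yields
\[
DP_{\phi_k}(f\circ h)\le 2\|f^k-g_m\|_\infty+\Big(\sum_{l=1}^m|\lambda_l|\Big)\,\epsilon.
\]
Hence, it suffices to construct a uniform sigmoidal approximant of $f^k$ while controlling both the sup-norm error and the $\ell^1$ weight.

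Second, I would check that $f^k$ itself lies in a Barron-type class. Since $\widetilde{f^k}$ is the $k$-fold convolution $\tilde f^{*k}$, using $\|\sum_l\omega_l\|_1\le\sum_l\|\omega_l\|_1$ inside the convolution integral gives
\[
\int |\widetilde{f^k}(\omega)|\,d\omega\le a^k,\qquad \int \|\omega\|_1 |\widetilde{f^k}(\omega)|\,d\omega\le k\,C\,a^{k-1}.
\]
Therefore $f^k\in\cF_{a^k,\,kCa^{k-1}}$, with constants uniform over $f\in\cF_{a,C}$.

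Third, I would invoke a Yukich-type sup-norm approximation theorem for the Barron class, producing, for each $m$, a shallow sigmoidal network $g_m$ with
\[
\|f^k-g_m\|_\infty\le A_k/\sqrt{m},\qquad \sum_{l=1}^m|\lambda_l|\le B_k\,m,
\]
where $A_k,B_k$ depend on $a,C,k$. Substituting into the reduction gives
\[
DP_{\phi_k}(f\circ h)\le 2A_k/\sqrt{m}+B_k\,m\,\epsilon.
\]
Optimizing over $m$ yields $m\asymp(A_k/(B_k\epsilon))^{2/3}$, and the minimal value scales as $\epsilon^{1/3}$; collecting constants produces $DP_{\phi_k}(f\circ h)\le c_k\,\epsilon^{1/3}$ as claimed.

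The main obstacle is the third step. A classical $L^2$ form of Barron's theorem gives uniformly bounded coefficient sums which could, in principle, sharpen the exponent toward $1$. Getting precisely the $\epsilon^{1/3}$ rate requires a sup-norm variant in which the $\ell^1$ weight bound is allowed to grow linearly in $m$ while the error decays like $m^{-1/2}$, so that the optimization has a genuine interior minimum. I would either cite the relevant Yukich-Stinchcombe-White (or Makovoz) approximation theorem for the Barron class, or reprove it via a Maurey-style probabilistic selection: sample atoms from an integral representation of $f^k$ and use symmetrization and concentration to bound the sup-norm error, explicitly tracking the $\ell^1$ weight along the way.
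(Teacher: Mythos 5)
Your proposal is correct and follows essentially the same route as the paper's proof: the same triangle-inequality reduction trading off sup-norm approximation error against the $\ell^1$ coefficient weight times $d_{\cV_{sig}}$, the same closure fact $f^k\in\cF_{a^k,\,kCa^{k-1}}$ (which the paper also takes from Barron), the same sup-norm approximation theorem (the paper cites exactly Theorem 2.2 of Yukich et al.\ 1995, with error $O(1/\sqrt{n})$ and coefficients bounded so the total weight grows linearly in $n$), and the same optimization $n\asymp \epsilon^{-2/3}$ yielding the $\epsilon^{1/3}$ rate. The step you flagged as the main obstacle is resolved in the paper precisely by the citation you anticipated, so no gap remains.
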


Theorem \ref{thm3_1} implies that we can control the level of $\phi$-fairness of the final prediction model $f\circ h$
only by making the $d_{\cV_{sig}}$ of the encoding function $h$ sufficiently small. The exponent term $1/3$ on the right hand side of (\ref{eq:sobolev1})
suggests that a smaller value of $d_{\cV_{sig}}$ 
of the encoding function $h$ is needed to control
the level of $\phi$-fairness of $f.$
This is a price we pay for using a simpler class of discriminators.

The exponent $1/3$ in the right-hand side of (\ref{eq:sobolev1}) may not be tight. We can improve this exponent by assuming more on 
$\cF_{a,C}.$ The main message of Theorem \ref{thm3_1} is that
the $\phi$-fairness is controlled when shallow neural networks approximate
the final prediction model well.
However, in the subsequent subsection,
we give an interesting example in that
the sigmoid IPM amply controls the $\phi$-fairness for a class of functions which
is not well approximated by shallow neural networks. 



\subsection{DP-fairness for $f$ being infinitely differentiable}

In general, the encoding function is a complicated mapping (e.g. DNNs) from the input space to the representation space and thus
it is reasonable to expect that
the prediction model from the representation space to the output 
is a simple function such as linear models or sufficiently smooth functions (e.g. the reproducing
kernel Hilbert space (RKHS) with a smooth kernel). Otherwise, the final prediction model would be overly complicated.
For such nice prediction models, we can show that the adversarial training scheme with the sigmoid IPM can control
the level of $\phi$-fairness of the final prediction model more tightly. 

Let $\mathcal{F}_{\mathcal{C}^{\infty},B}$ be the set of infinite times differentiable functions given as 
\begin{align*}
	    \mathcal{F}_{\mathcal{C}^{\infty},B} 
	    = \big\{&f : \cZ \to \mathbb{R} \ : \ \forall \mathbf{r} \in \mathbb{N}_0^m,\\
	    & ||D^{\mathbf{r}} f||_{\infty} \leq \sqrt{{\mathbf{r}}!} B^{|\bm{r}|_1} 
	    \big\}
\end{align*}
for some constant $B>0,$ where $|\bm{r}|_1=\sum_{j=1}^m r_j$ {and $D$ is the derivative operator, that is, for a vector $\br=(r_1,\cdots,r_m)$, $D^{\br}f:=\frac{\partial^{|\br|_1}f}{\partial z_1^{r_1}\cdots\partial z_1^{r_1}}.$}
The specific bound $\sqrt{{\mathbf{r}}!} B^{|\bm{r}|_1} $ for the sup norm of the derivatives
is used for $\mathcal{F}_{\mathcal{C}^{\infty}, B}$ to include some RKHS with smooth kernels (e.g. radial basis function (RBF) kernel).
The following theorem proves that the level of $\phi$-fairness has the same order of the sigmoid IPM for any $f$ in $\mathcal{F}_{\mathcal{C}^{\infty},B}.$

\begin{theorem}
\label{thm3}
There exists a constant $c_k>0$ such that 
\be
 \sup_{f \in \mathcal{F}_{\mathcal{C}^{\infty},B}} DP_{\phi_k}(f\circ h) \le   c_k d_{\cV_{sig}}(\mathbb{P}_0^h, \mathbb{P}_1^h).\label{eq:inf-d}
\ee
\end{theorem}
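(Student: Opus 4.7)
My plan is to express $f^k$ as an absolutely convergent Taylor series about a base point $\bm{z}_0\in\cZ$ and bound each mixed-moment difference by reducing to ridge moments along linear projections.

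\textbf{Step 1 (Projection CDF control).} I first show that $d_{\cV_{sig}}$ dominates a one-sided Kolmogorov--Smirnov distance along every direction: for every $\theta_0\in\mathbb{R}^m$ and $a\in\mathbb{R}$,
$$\bigl|\mathbb{P}_0^h(\theta_0^{\top}\mathbf{Z}>a)-\mathbb{P}_1^h(\theta_0^{\top}\mathbf{Z}>a)\bigr|\le d_{\cV_{sig}}(\mathbb{P}_0^h,\mathbb{P}_1^h).$$
Indeed $v_c(\bm{z}):=\sigma(c(\theta_0^{\top}\bm{z}-a))\in\cV_{sig}$ for every $c>0$, so its integral against $\mathbb{P}_0^h-\mathbb{P}_1^h$ is at most $d_{\cV_{sig}}$ in absolute value; letting $c\to\infty$ and applying dominated convergence (noting $|v_c|\le 1$) replaces the sigmoid by the half-space indicator.

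\textbf{Step 2 (Ridge moment bound).} From Step 1, integration by parts yields, for every $\theta\in\mathbb{R}^m$ and integer $n\ge 0$,
$$\bigl|\mathbb{E}_0[(\theta^{\top}\mathbf{Z})^n]-\mathbb{E}_1[(\theta^{\top}\mathbf{Z})^n]\bigr|\le 2(\|\theta\|_2 R)^n\,d_{\cV_{sig}}(\mathbb{P}_0^h,\mathbb{P}_1^h),$$
where $R:=\sup_{\bm{z}\in\cZ}\|\bm{z}\|_2$, using that the CDF of $\theta^{\top}\mathbf{Z}$ is supported in $[-\|\theta\|_2R,\|\theta\|_2R]$. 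The classical polarization identity writes any mixed monomial $(\bm{z}-\bm{z}_0)^{\bm{r}}$ of total degree $n=|\bm{r}|_1$ as a signed sum of $2^n$ ridge powers $(\theta_S^{\top}(\bm{z}-\bm{z}_0))^n$ with coefficients $\pm 1/n!$ and $\|\theta_S\|_2\le n$; combining with Stirling's bound $n^n/n!\le e^n$ yields
$$\bigl|\mathbb{E}_0[(\mathbf{Z}-\bm{z}_0)^{\bm{r}}]-\mathbb{E}_1[(\mathbf{Z}-\bm{z}_0)^{\bm{r}}]\bigr|\le 2(2eR)^{|\bm{r}|_1}\,d_{\cV_{sig}}(\mathbb{P}_0^h,\mathbb{P}_1^h).$$

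\textbf{Step 3 (Taylor assembly).} A routine application of Leibniz's rule shows $f^k\in\mathcal{F}_{\mathcal{C}^{\infty},B_k}$ for some $B_k=B_k(B,k)$. The Taylor series of $f^k$ about $\bm{z}_0$,
$$f^k(\bm{z})=\sum_{\bm{r}\in\mathbb{N}_0^m}\frac{D^{\bm{r}}f^k(\bm{z}_0)}{\bm{r}!}(\bm{z}-\bm{z}_0)^{\bm{r}},$$
has coefficients bounded by $B_k^{|\bm{r}|_1}/\sqrt{\bm{r}!}$, so it converges uniformly on $\cZ$ since $\sqrt{\bm{r}!}$ grows super-exponentially. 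Integrating term-by-term against $\mathbb{P}_0^h-\mathbb{P}_1^h$ and invoking Step 2 gives
$$\bigl|\mathbb{E}_0[f^k(\mathbf{Z})]-\mathbb{E}_1[f^k(\mathbf{Z})]\bigr|\le 2\,d_{\cV_{sig}}(\mathbb{P}_0^h,\mathbb{P}_1^h)\prod_{j=1}^{m}\sum_{n=0}^{\infty}\frac{(2eRB_k)^n}{\sqrt{n!}},$$
where the right-hand factor is finite by the ratio test, defining the desired $c_k$.

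\textbf{Main obstacle.} The delicate part is Step 2: the polarization--IBP bound must give only an exponential (rather than factorial) dependence on $|\bm{r}|_1$; anything slower would cause the Taylor series in Step 3 to diverge and destroy the linear scaling in $d_{\cV_{sig}}$. The closure $f^k\in\mathcal{F}_{\mathcal{C}^{\infty},B_k}$ under Leibniz is technically straightforward, but extracting an explicit $B_k$ in terms of $B$ and $k$ requires patient bookkeeping of derivative norms.
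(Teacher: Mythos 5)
Your proposal is correct and shares the paper's three-stage skeleton---(i) reduce $d_{\cV_{sig}}$ to a Kolmogorov--Smirnov-type bound on one-dimensional projections, (ii) upgrade that to mixed-moment control with constants growing only exponentially in the degree, (iii) assemble via the Taylor series of $f^k$ using the $\sqrt{\bm{r}!}\,B^{|\bm{r}|_1}$ derivative bounds---but your middle step takes a genuinely different and more elementary route. Where the paper builds a bespoke ridge decomposition of monomials by inverting a Vandermonde matrix (its Lemmas A.2--A.3, with coefficient sums bounded by $e^{(u-1)r}$ via Stirling estimates on $\prod_{j\neq i}(\lambda_i-\lambda_j)$), you invoke the classical polarization identity: $2^n$ ridge powers with coefficients $\pm 1/n!$ and directions of norm at most $n$, so that $2^n\cdot n^n/n!\le (2e)^n$ gives the needed exponential bound in one line. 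Your Step 1 is also cleaner than the paper's Lemma A.1: sending the sigmoid slope to infinity and applying dominated convergence yields the projection-CDF bound with constant $1$, instead of the paper's quantitative $\delta$-argument. Similarly, your integration-by-parts bound $2(\|\theta\|_2R)^n$ replaces the paper's Lipschitz-rescaling/Wasserstein step; both are equivalent. What the paper's heavier machinery buys is essentially nothing extra here---your route reaches the same conclusion with less work.

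Two caveats you should patch. First, in Step 1 the pointwise limit of $\sigma(c(\theta_0^{\top}\bm{z}-a))$ is $\mathbb{I}(\theta_0^{\top}\bm{z}>a)+\tfrac12\mathbb{I}(\theta_0^{\top}\bm{z}=a)$, so your bound as stated holds only at atom-free thresholds; you need the standard fix (atoms are countable, approach $a$ through atom-free points and use monotone/right continuity), exactly as the paper does at the end of its Lemma A.1. Second, the claim $f^k\in\mathcal{F}_{\mathcal{C}^{\infty},B_k}$ is true but is not a throwaway: it is precisely where the paper spends its induction on $k$ (proving $|a_{\bm{j},k}|\le\sqrt{\bm{j}!}\,(kB)^{|\bm{j}|_1}$ for the Taylor coefficients), and via Leibniz you must control $\sum_{\bm{r}_1+\cdots+\bm{r}_k=\bm{r}}\sqrt{\binom{\bm{r}}{\bm{r}_1,\dots,\bm{r}_k}}$, e.g.\ by Cauchy--Schwarz against the multinomial theorem, which yields $B_k$ of order $\sqrt{k}\,B$ up to dimension-dependent factors. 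That bookkeeping (or the paper's induction) must appear for the proof to be complete; with it, and noting the harmless factor-of-two slips in your ridge-support radius, your argument goes through and your final product-form constant $c_k$ is finite exactly as claimed.
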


Theorem \ref{thm3} indicates
that controlling the sigmoid IPM value of the representation $h$ is equivalent to controlling the $\phi$-fairness (up to a constant)
of $f\circ h$ whenever $f\in \mathcal{F}_{\mathcal{C}^{\infty},B}.$
This result justifies the sufficiency of the sigmoid IPM for LFR.

Note that the function class $\cF_{\cC^{\infty},B}$ is large enough to include certain function spaces 
popularly used as the class of prediction models in modern machine learning algorithms.
The RKHS with the RBF kernel is such an example, which is stated in the following proposition.

\begin{proposition}
\label{prop:rbf}
    Let $k_{\gamma}: \mathbb{R}^m \times \mathbb{R}^m \to \mathbb{R}$ be the RBF kernel with the width $\gamma$ defined as
    \begin{align*}
        k_{\gamma}(\bz,\bz^{\prime}) = \exp\left(-\frac{\|\bz - \bz^{\prime}\|_2^2}{\gamma^2}\right)
    \end{align*}
    and $(\mathcal{H}_{\gamma}(\cZ), ||\cdot||_{\mathcal{H}_{\gamma}(\cZ)})$ be the RKHS corresponding to $k_{\gamma}.$
     Define $\cF_{k_\gamma,B}=\{f\in \mathcal{H}_{\gamma}(\cZ): ||f||_{\mathcal{H}_{\gamma}(\cZ)} \le B\}$ for $B>0.$
Then, there exists a $B'>0$ such that $\cF_{k_\gamma,B} \subset \cF_{\cC^{\infty},B'}.$
\end{proposition}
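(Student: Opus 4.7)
The plan is to control $D^{\br} f(\bz)$ pointwise by the RKHS norm of a derivative of the kernel, then exploit the tensor–Gaussian structure of the RBF kernel to evaluate that norm explicitly. Concretely, for $f \in \cF_{k_\gamma, B}$ and a multi-index $\br \in \mathbb{N}_0^m$, I will use the standard smooth-kernel fact that differentiation commutes with the reproducing inner product,
\[
D^{\br} f(\bz) \;=\; \bigl\langle f,\; D_1^{\br} k_\gamma(\bz,\cdot)\bigr\rangle_{\mathcal{H}_\gamma(\cZ)},
\]
where $D_1^{\br}$ is differentiation in the first argument. Cauchy--Schwarz then gives $|D^{\br}f(\bz)| \le B \,\|D_1^{\br} k_\gamma(\bz,\cdot)\|_{\mathcal{H}_\gamma(\cZ)}$, and the reproducing property yields the identity
\[
\bigl\|D_1^{\br} k_\gamma(\bz,\cdot)\bigr\|_{\mathcal{H}_\gamma(\cZ)}^{2} \;=\; \bigl[D_1^{\br} D_2^{\br} k_\gamma(\bz_1,\bz_2)\bigr]_{\bz_1=\bz_2=\bz}.
\]
So the entire problem reduces to an explicit derivative calculation on the RBF kernel.

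Next, I would use the product form $k_\gamma(\bz,\bz')=\prod_{i=1}^{m}\exp(-(z_i-z_i')^{2}/\gamma^{2})$, which makes the mixed derivative factorize into one-dimensional pieces. Each factor is of the form $(d^{2r_i}/dt^{2r_i})\exp(-t^{2}/\gamma^{2})$ evaluated at $t=0$, which I compute via the physicists' Hermite polynomials: the well-known identity $H_{2r}(0)=(-1)^{r}(2r)!/r!$ gives
\[
\left.\tfrac{d^{2r}}{dt^{2r}}\exp(-t^{2}/\gamma^{2})\right|_{t=0} \;=\; (-1)^{r}\,\gamma^{-2r}\,\tfrac{(2r)!}{r!}.
\]
Combining the $m$ factors and taking square roots,
\[
\bigl\|D_1^{\br} k_\gamma(\bz,\cdot)\bigr\|_{\mathcal{H}_\gamma(\cZ)} \;=\; \gamma^{-|\br|_1}\sqrt{\textstyle\prod_{i=1}^{m}(2r_i)!/r_i!}.
\]
Using the elementary bound $\binom{2r}{r}\le 4^{r}$, i.e.\ $(2r)!/r! \le 4^{r}\,r!$, this simplifies to $\|D_1^{\br} k_\gamma(\bz,\cdot)\|_{\mathcal{H}_\gamma(\cZ)} \le (2/\gamma)^{|\br|_1}\sqrt{\br!}$, so that
\[
\|D^{\br} f\|_{\infty} \;\le\; B\,(2/\gamma)^{|\br|_1}\sqrt{\br!}.
\]
Choosing $B' \ge \max\{1,\,2B/\gamma\}$ (with any book-keeping needed to absorb the multiplicative factor $B$ into the exponential term $B'^{|\br|_1}$) gives $f \in \cF_{\cC^\infty, B'}$ as required.

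The main obstacle is the justification of commuting pointwise derivatives with the RKHS inner product in step one: one must verify that $D_1^{\br} k_\gamma(\bz,\cdot)$ is itself an element of $\mathcal{H}_\gamma(\cZ)$ and that the functional $f \mapsto D^{\br} f(\bz)$ is bounded there. For the RBF kernel this follows from its real-analyticity together with dominated convergence, but it should be invoked carefully (or cited from standard references on smooth reproducing kernels) rather than taken for granted. The remaining combinatorial bookkeeping with the multi-indices and absorbing $B$ into $B'$ is routine; the Hermite evaluation at the origin is the clean, explicit computation that makes the Gaussian case work.
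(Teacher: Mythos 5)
Your proof is correct, and it takes a genuinely different route from the paper's. The paper (following the proof of Theorem 4.48 in Steinwart and Christmann's book, which it cites) factors $f$ through $L_2$: it writes $f = V_\gamma g$ with $\|g\|_{L_2(\mathbb{R}^m)} \le 2\|f\|_{\mathcal{H}_\gamma(\cZ)}$, where $V_\gamma$ is the Gaussian-convolution metric surjection of $L_2(\mathbb{R}^m)$ onto $\mathcal{H}_\gamma(\cZ)$, then differentiates under the integral, applies Cauchy--Schwarz, and evaluates $\int_{\mathbb{R}}\bigl|\frac{d^r}{dt^r}e^{-2\gamma^{-2}(t-s)^2}\bigr|^2\,ds$ via Hermite orthogonality. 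You instead stay inside the RKHS: the derivative reproducing property $D^{\br}f(\bz)=\langle f, D_1^{\br}k_\gamma(\bz,\cdot)\rangle_{\mathcal{H}_\gamma(\cZ)}$ together with the diagonal identity $\|D_1^{\br}k_\gamma(\bz,\cdot)\|^2_{\mathcal{H}_\gamma(\cZ)}=D_1^{\br}D_2^{\br}k_\gamma(\bz_1,\bz_2)\big|_{\bz_1=\bz_2=\bz}$ reduces the whole problem to Taylor coefficients of the one-dimensional Gaussian at the origin. Each route imports exactly one nontrivial external fact --- yours the derivative reproducing property (Steinwart--Christmann, Corollary 4.36, which also gives $D_1^{\br}k_\gamma(\bz,\cdot)\in\mathcal{H}_\gamma(\cZ)$, the point you rightly flag as needing care), the paper's the surjectivity of $V_\gamma$ with norm control --- and both use Hermite polynomials, but differently: values at zero versus $L_2$ orthogonality. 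What your route buys: it is shorter, it yields the cleaner and slightly tighter bound $\|D^{\br}f\|_\infty \le B(2/\gamma)^{|\br|_1}\sqrt{\br!}$ (the paper's final constant carries an extra dimension-dependent factor $2^{m/4+1}$), and it transfers verbatim to any smooth translation-invariant kernel whose diagonal mixed derivatives one can compute; the paper's route is tied to the Gaussian-specific description of $\mathcal{H}_\gamma(\cZ)$ as the image of $L_2$ under $V_\gamma$. Two bookkeeping remarks. First, the one-dimensional factor of the diagonal derivative is $\partial_t^{r}\partial_s^{r}e^{-(t-s)^2/\gamma^2}\big|_{t=s}=(-1)^{r}g^{(2r)}(0)$ with $g(u)=e^{-u^2/\gamma^2}$, so the $(-1)^r$ coming from differentiation in the second argument cancels the sign in $g^{(2r)}(0)=(-1)^r\gamma^{-2r}(2r)!/r!$; your displayed norm formula is correct, but your intermediate description of each factor as $\frac{d^{2r}}{dt^{2r}}e^{-t^2/\gamma^2}\big|_{t=0}$ silently drops that sign. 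Second, absorbing the multiplicative factor $B$ into $(B')^{|\br|_1}$ fails at $\br=\mathbf{0}$, where membership in $\cF_{\cC^{\infty},B'}$ demands $\|f\|_\infty\le 1$; this is not a defect of your argument relative to the paper, since the paper's own choice of $B'$ has exactly the same gap at $\br=\mathbf{0}$.
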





\begin{figure}[t]
\vskip 0.2in
\begin{center}
\centerline{
    \includegraphics[width=0.24\textwidth]{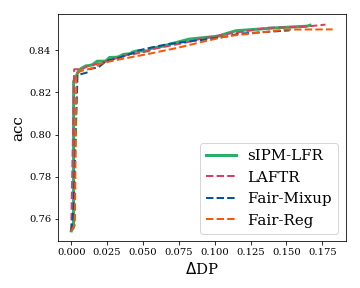}
    \includegraphics[width=0.24\textwidth]{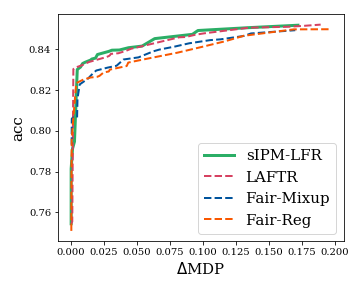}
}
\centerline{
    \includegraphics[width=0.24\textwidth]{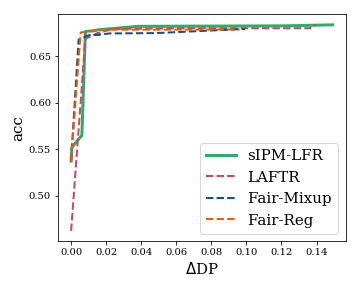}
    \includegraphics[width=0.24\textwidth]{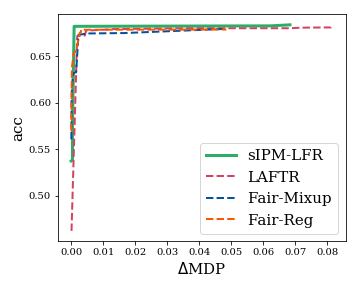}
}
\centerline{
    \includegraphics[width=0.24\textwidth]{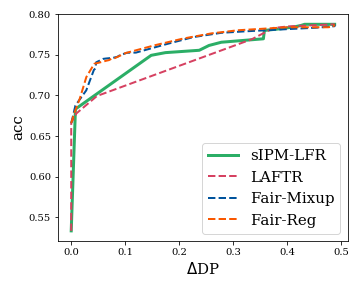}
    \includegraphics[width=0.24\textwidth]{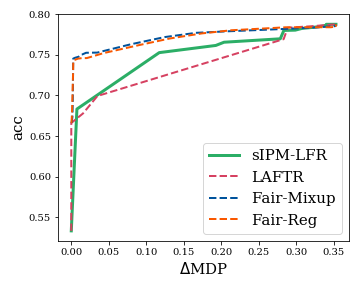}
}
\caption{Supervised LFR: Pareto-front lines between the levels of DP-fairness and \texttt{acc} on the test data of (top) \textit{Adult}, (middle) \textit{COMPAS}, and (bottom) \textit{Health}. For the fairness measure,  (left) $\Delta\texttt{DP}$ and (right) $\Delta\texttt{MDP}$ are considered.}
\label{fig:sup}
\end{center}
\vskip -0.2in
\end{figure}

\begin{figure*}[ht]
\vskip 0.2in
\begin{center}
\centerline{
    \includegraphics[width=0.19\textwidth]{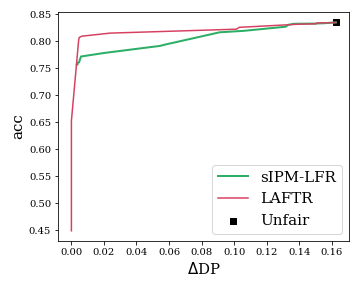}
    \includegraphics[width=0.19\textwidth]{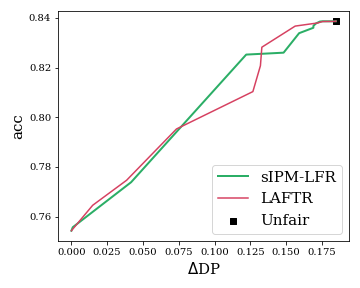}
    \includegraphics[width=0.19\textwidth]{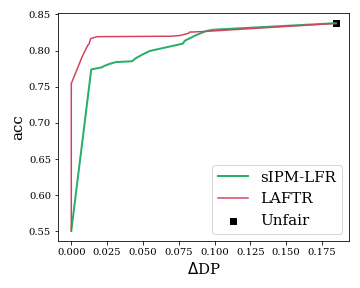}
    \includegraphics[width=0.19\textwidth]{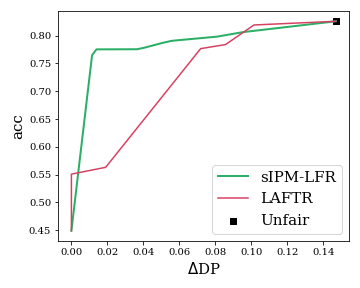}
    \includegraphics[width=0.19\textwidth]{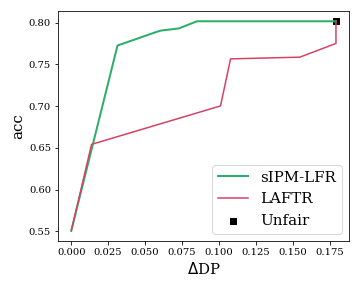}
}
\centerline{
    \includegraphics[width=0.19\textwidth]{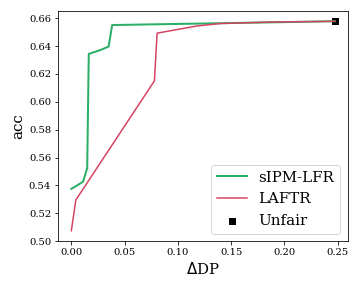}
    \includegraphics[width=0.19\textwidth]{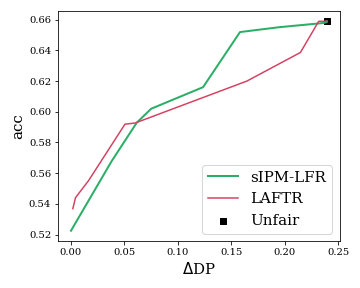}
    \includegraphics[width=0.19\textwidth]{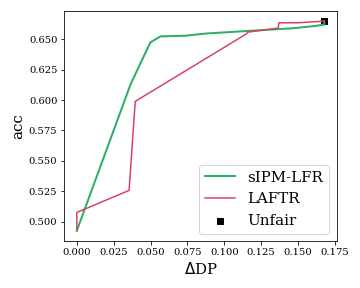}
    \includegraphics[width=0.19\textwidth]{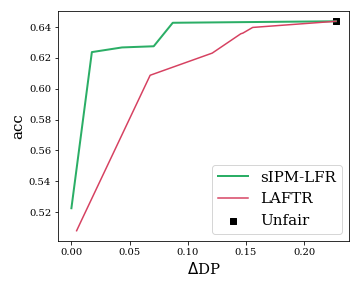}
    \includegraphics[width=0.19\textwidth]{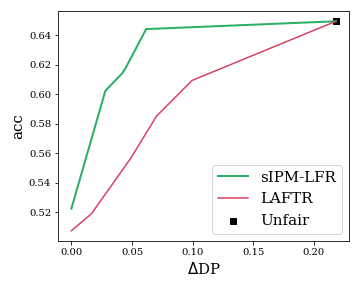}
}
\centerline{
    \includegraphics[width=0.19\textwidth]{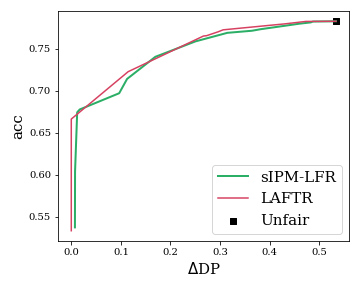}
    \includegraphics[width=0.19\textwidth]{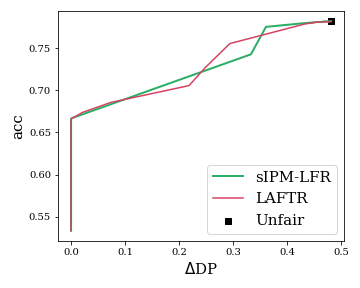}
    \includegraphics[width=0.19\textwidth]{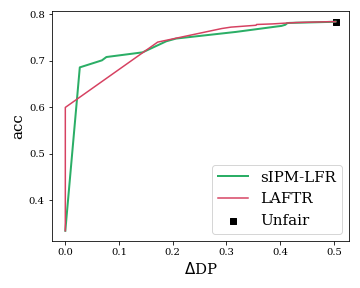}
    \includegraphics[width=0.19\textwidth]{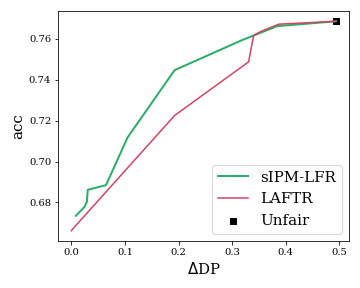}
    \includegraphics[width=0.19\textwidth]{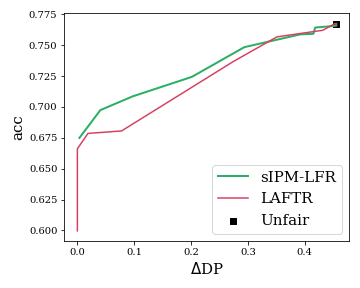}
}
\caption{Unsupervised LFR: Pareto-front lines between $\Delta \texttt{DP}$ and \texttt{acc} on the test data of (top) \textit{Adult}, (middle) \textit{COMPAS}, and (bottom) \textit{Health}.
The results of the five prediction models are given: 
(left to right)  linear, RBF-SVM, 1-LeakyReLU-NN, 1-Sigmoid-NN, and 2-Sigmoid-NN.
}
\label{fig:unsup}
\end{center}
\vskip -0.2in
\end{figure*}

\subsection{Extension to other fairness measures}
{The parametric IPM for DP can be easily extended to other group fairness measures such as the equal opportunity (EOpp) or equalized odds (EO). 
Let $\mbP_{s,y}^h$ be the distribution of $\bZ|S=s,Y=y$ for $s\in\{0,1\}$ and $y\in\{0,1\}$.
For a given function $\phi$ and a prediction function $f$, the fairness levels of EOpp and EO are defined as 
$$
EOpp_{\phi}(f)=|\E_{Z\sim\mathbb{P}_{0,0}^h}[\phi\circ f(Z)]-\E_{Z\sim\mathbb{P}_{1,0}^h}[\phi\circ f(Z)]|
$$
and
$$EO_{\phi}(f)=\sum_{y\in\{0,1\}}|\E_{Z\sim\mathbb{P}_{0,y}^h}[\phi\circ f(Z)]-\E_{Z\sim\mathbb{P}_{1,y}^h}[\phi\circ f(Z)]|.$$ 
}

{
Note that the main result of the previous section is to characterize the relationship between $d_{\cV_{sig}} (\mbP_{0}, \mbP_{1})$ and  $|\E_{Z\sim\mathbb{P}_0}[\phi\circ f(Z)]-\E_{Z\sim\mathbb{P}_1}[\phi\circ f(Z)]|$ for given two distributions $\mbP_{0}$ and $\mbP_{1}.$
We can derive similar theoretical results for EOpp and EO simply by letting $\mbP_{s}$ to $\mbP_{s,y}^h$. 
If we let $\mbP_{s}=\mbP_{s,0}^h$, 
we would obtain the connection between $d_{\cV_{sig}} (\mbP_{0,0}^h, \mbP_{1,0}^h)$ and $EOpp_\phi(f)$. 
Similarly, we could obtain the connection between
$\sum_{y\in\{0,1\}} d_{\cV_{sig}} (\mbP_{0,y}^h, \mbP_{1,y}^h)$
and $EO_{\phi}(f).$ For learning $f$ and $h$ for EOpp and EO, 
we minimize (\ref{eq:obj-sup}) and (\ref{eq:obj-unsup})
after replacing $d_{\cV_{sig}} (\mbP_{0}^h, \mbP_{1}^h)$
with $d_{\cV_{sig}} (\mbP_{0,0}^h, \mbP_{1,0}^h)$ and 
$\sum_{y\in\{0,1\}} d_{\cV_{sig}} (\mbP_{0,y}^h, \mbP_{1,y}^h),$
respectively.}

\section{Experiments}
\label{exps}

This section empirically shows that LFR using the sigmoid IPM (sIPM-LFR) performs well by analyzing supervised and unsupervised LFR tasks. 
Among these two tasks, we focus more on the latter because it is the case where fair representations is more important.
We show that the sIPM-LFR yields better and more stable performances than other baselines. 
For unsupervised LFR, in particular, the representations generated by our method usually give improved prediction accuracies for various downstream tasks.

We also do several ablation studies for the sIPM-LFR algorithm,
where the results for the stability issue are
reported in the main manuscript, and the others are presented in Appendix \ref{appendix:abl}. 
We here inform that we obtain the results of the baseline algorithms
of LFR by our own experiments (i.e. not copied from the related literature)
and report the averaged results from five random implementations. 

\subsection{Experimental setup}
\label{exps_setup}
\paragraph{Datasets}

We analyze three benchmark datasets - 1) \textit{Adult} \cite{Dua:2019}, 2) \textit{COMPAS} \footnote{https://github.com/propublica/compas-analysis}, 
and 3) \textit{Health} \footnote{https://foreverdata.org/1015/index.html},
which are analyzed in 
\citet{pmlr-v28-zemel13,9aa5ba8a091248d597ff7cf0173da151,Madras2018LearningAF,ruoss2020learning} for LFR.

\textit{Adult} contains personal information of over 40,000 individuals from the 1994 US Census.
The label indicates whether each person's income is over 50K\$ or not, and the sensitive variable is gender information.

\textit{COMPAS} contains criminal information of over 5,000 individuals from Florida.
The label is whether each person commits recidivism within two years, and the sensitive variable is race information.

\textit{Health} contains hospitalization records and insurance claims of over 60,000 patients.
The label is the binary Charlson index that estimates the death risk in the future ten years, and the sensitive variable is the binarized age information with a threshold of 70.
\textit{Health} also has tens of auxiliary binary labels, called the primary condition group (PCG) labels, indicating patients' insurance claim to the specific medical conditions, which can be utilized to conduct further downstream classification tasks. 
In our experiments, five auxiliary labels that are commonly
used in related literature are analyzed. 

We split the whole data into training and test data randomly, except for \textit{Adult} which already consists of training and test data.
We split the training data once more into two parts of the ratio 80\% and 20\%, each of which is used for training and validation, respectively. 
See Appendix \ref{appendix:exp_setup} for more detailed descriptions of the datasets including their pre-processing procedures.

\paragraph{Architectures}
We set up the architecture construction scheme similar to other works for LFR \cite{9aa5ba8a091248d597ff7cf0173da151,Madras2018LearningAF}. 
The  architecture of the encoder is fixed to a single-layered neural network with the LeakyReLU activation and we consider the value of $m$ as 60, 8, and 40 for \textit{Adult}, \textit{COMPAS}, and \textit{Health}, respectively. 
Regarding the prediction model $f$, while only single-layered neural network with the LeakyReLU activation (1-LeakyReLU-NN)
is used for the supervised LFR, 
we take four more prediction models into account
for unsupervised LFR. That is,  
we consider five prediction models in total: (i) linear, (ii) SVM with RBF kernel, (iii) 1-LeakyReLU-NN, (iv) 1-Sigmoid-NN, and (v) 2-Sigmoid-NN, where 
the last two models  stand for single-layered and two-layered neural networks with the sigmoid activation, respectively.

\paragraph{Implementation details}

We refer to other related studies \cite{9aa5ba8a091248d597ff7cf0173da151,Madras2018LearningAF}  for overall implementation options. 
To solve the supervised LFR, we train the encoder $h$ and classifier $f$ by applying the stochastic gradient descent step to the objective function (\ref{eq:obj-sup}) for 400 training epochs, and the best networks are chosen based on the value of the difference between accuracy and level of DP-fairness, i.e., $\texttt{acc}-\Delta\texttt{DP}$, on validation data. 

For the unsupervised LFR, we first minimize the formula (\ref{eq:obj-unsup}) to optimize the encoder and decoder for 300 training epochs. 
From the encoder-decoder pairs obtained at each epoch, we select the best one with the minimum validation loss. 
Afterward, for given label information $Y,$ 
we train and select the best downstream classifier by minimizing the standard cross-entropy loss for 100 epochs while freezing the encoder. 

Following what \citet{2020alg} did, for all cases, we apply the Adadelta \cite{journals/corr/abs-1212-5701} optimizer with a learning rate of 2.0 and a mini-batch size of 512. 
More detailed descriptions including our pseudo algorithm are in Appendix \ref{appendix:exp_setup}.

\paragraph{Evaluation metric} 
We assess the trade-off between the prediction accuracy (\texttt{acc}) and level of DP-fairness which are summarized by Pareto-front graphs and tables. 
For the fairness measure, we mainly deal with the original DP
denoted by
$\Delta \texttt{DP}$ and also consider other variants such as MDP denoted by $\Delta \texttt{MDP}.$ 
See Appendix \ref{appendix:fairmeasures} for formulas of the other fairness measures we consider. 

\begin{table}[t]
	\caption{Unsupervised LFR: \texttt{acc} and $\Delta \texttt{DP}$ for downstream classification tasks on 
	five PCG labels in \textit{Health}.  
	We use the RBF-SVM for the prediction model. 
	}
	\label{table:pcg}
	\vskip 0.15in
	\begin{center}
		\begin{small}
			\begin{tabular}{l|l||c|c|c}
				\toprule
				Target label &  & Unfair & LAFTR & sIPM-LFR \checkmark \\
				\midrule
				\midrule
				\multirow{2}{*}{MSC2A3} & \texttt{acc} & 0.665 & 0.642 & 0.646 \\
				& $\Delta \texttt{DP}$ & 0.110 & 0.103 & \textbf{0.055} \\
				\midrule
				\multirow{2}{*}{METAB3} & \texttt{acc} & 0.669 & 0.662 & 0.664 \\
				& $\Delta \texttt{DP}$ & 0.093 & 0.091 & \textbf{0.084} \\
				\midrule
				\multirow{2}{*}{ARTHSPIN} & \texttt{acc} & 0.695 & 0.690 & 0.692 \\
				& $\Delta \texttt{DP}$ & 0.062 & 0.047 & \textbf{0.036} \\
				\midrule
				\multirow{2}{*}{NEUMENT} & \texttt{acc} & 0.759 & 0.730 & 0.728 \\
				& $\Delta \texttt{DP}$ & 0.302 & 0.170 & \textbf{0.138} \\
				\midrule
				\multirow{2}{*}{RESPR4} & \texttt{acc} & 0.730 & 0.727 & 0.727 \\
				& $\Delta \texttt{DP}$ & 0.011 & 0.009 & \textbf{0.003} \\
				\bottomrule
			\end{tabular}
		\end{small}
	\end{center}
	\vskip -0.1in
\end{table}

\begin{figure}[ht]
\vskip 0.2in
\begin{center}
\centerline{
    \includegraphics[width=0.24\textwidth]{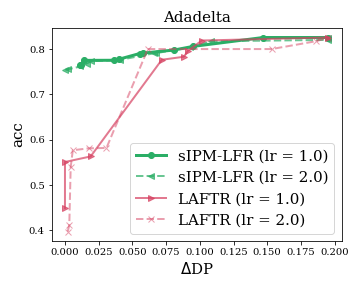}    \includegraphics[width=0.24\textwidth]{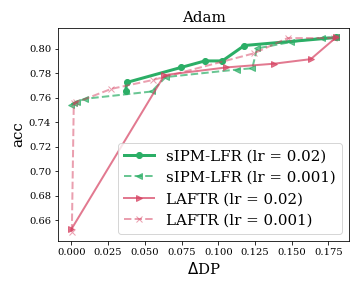}
}
\centerline{
    \includegraphics[width=0.24\textwidth]{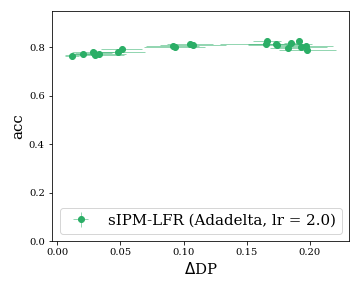}    \includegraphics[width=0.24\textwidth]{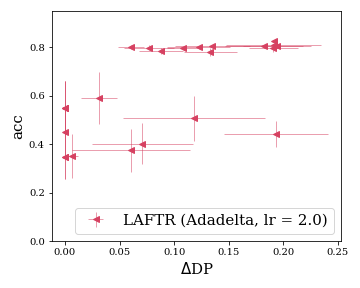}
}
\caption{ 
(Upper) Pareto-front lines between $\Delta\texttt{DP}$ and \texttt{acc} with various learning options. 
(Lower) Scatter plot with standard error bar of $\Delta\texttt{DP}$ and \texttt{acc} with various $\lambda$. 
Each horizontal and vertical bars present the standard errors
for $\Delta\texttt{DP}$ and \texttt{acc}, respectively.
All results are from \textit{Adult} test dataset.}
\label{fig:stable}
\end{center}
\vskip -0.2in
\end{figure}

\subsection{Supervised learning case}

We first evaluate our method in supervised LFR tasks and compare with other baselines including one LFR approach of \citet{Madras2018LearningAF} (i.e., LAFTR) and two non-LFR approaches in \citet{chuang2021fair}. 
Figure \ref{fig:sup} presents the Pareto-front trade-off graphs between the level of fairness ($\Delta \texttt{DP}$ and $\Delta \texttt{MDP}$) and \texttt{acc} on each test data. See Appendix \ref{appendix:sup} for 
the results of other fairness measures. 

We can clearly see that the proposed sIPM-LFR is compared favorably to the LAFTR even though a 
much simpler class of discriminators is used.
The results amply confirm our theoretical results that
the sigmoid IPM is sufficient for learning  
representations that are fair and good for prediction 
simultaneously.

It is also interesting to see that the sIPM-LFR is competitive
to the two non-LFR algorithms which learn a fair prediction
model without learning a representation. That is, the learned fair representation does not lose much information about the label. That is, the sIPM-LFR successively learns a good fair representation.


\subsection{Unsupervised learning case}

We show that the unsupervised sIPM-LFR provides fair representations of high quality that suit various subsequent downstream supervised tasks. 
As mentioned in Section \ref{exps_setup}, we first train an encoder by minimizing the objective function (\ref{eq:obj-unsup}) without label information of $Y,$ and then train the prediction model with label information
while freezing the encoder.

Figure \ref{fig:unsup} shows the Pareto-front lines
between \texttt{acc} and $\Delta \texttt{DP}$
of various prediction models on the three datasets. 
See Appendix \ref{appendix:exp} for the Pareto-front results for other fairness measures.

From the results, we can conclude that the sIPM-LFR is desirable to learn fair representations applicable better to various downstream tasks.
In particular, for \textit{COMPAS} the sIPM-LFR consistently gives superior results with large margins for all of the 5 prediction models. 
The superiority of the sIPM-LFR regardless of the final prediction model
supports our theoretical results that the sigmoid IPM
can control the level of fairness well for a large class
of prediction models.

We conduct further downstream classification tasks on \textit{Health} using five auxiliary PCG labels, whose results are summarized in Table \ref{table:pcg}.
We measure the level of DP-fairness while fixing the accuracies at certain levels.
It is obvious that the sIPM-LFR consistently achieves lower levels of DP-fairness than the other baselines do, again confirming the superiority of our method.

We also conduct experiments about visualization of the representation distributions and downstream classification with artificial labels. 
We report the results in Appendix \ref{appendix:exp}.

\paragraph{Experiments with additional datasets}
{Recently, there have been some concerns about the validity of widely-used benchmark datasets in the fair AI domain \cite{https://doi.org/10.48550/arxiv.2108.04884, bao2021its}. 
To answer this concern, we evaluate the sIPM-LFR on two additional datasets: \textit{ACSIncome} and \textit{Toxicity}. 
\textit{ACSIncome} is a pre-processed version of \textit{Adult}, and \textit{Toxicity} is a language dataset containing a large number of Wikipedia comments with ratings of toxicity. 
For \textit{Toxicity}, we generate the embedding vectors obtained by the BERT \cite{devlin-etal-2019-bert} and regard them as input vectors. 
For the detailed descriptions of those datasets and implementations, see Appendix \ref{app}.}

{
Table \ref{table:additional} shows that for a fixed prediction performance, the sIPM-LFR achieves lower levels of DP-fairness with large margins on the both datasets. 
We present more results for various $\lambda$ values and various prediction models in Appendix \ref{app}.
}

\begin{table}[h]
\caption{Unsupervised LFR: \texttt{acc} and $\Delta \texttt{DP}$ for downstream classification tasks on \textit{ASSIncome} and \textit{Toxicity}.
	We use the 1-Sigmoid-NN for the prediction model. }
	\label{table:additional}
	\vskip -0.1in
	\begin{center}
		\begin{small}
			\begin{tabular}{l|l||c|c|c}
				\toprule
				\multicolumn{2}{l||}{Data \scriptsize{(1-Sigmoid-NN)}}  & Unfair & LAFTR & sIPM-LFR \checkmark \\
				\midrule
				\midrule
				\multirow{2}{*}{\textit{ACSIncome}} & \texttt{acc} & 0.716 & 0.694  & 0.695  \\
				& $\Delta \texttt{DP}$ & 0.135 & 0.027 & \textbf{0.017} \\
				\midrule
				\multirow{2}{*}{\textit{Toxicity}} & \texttt{acc} & 0.802 & 0.790  & 0.790  \\
				& $\Delta \texttt{DP}$ & 0.042  & 0.021  & \textbf{0.013} \\
				\bottomrule
			\end{tabular}
		\end{small}
	\end{center}
	\vskip -0.25in
\end{table}

\subsection{Stability issue}

Compared to other adversarial LFR approaches, the learning procedure 
of the sIPM-LFR is numerically more stable. 
We demonstrate this advantage with two additional experiments, whose results are summarized in Figure \ref{fig:stable}.
The two plots at the first row of Figure \ref{fig:stable}
are the Pareto-front lines of the sIPM-LFR and LAFTR
for two optimizers and two learning rates on \textit{Adult}. 
It is noticeable that the results of the LAFTR are quite different for different learning rates when
the optimizer \textit{Adam} is used. In contrast, the results of the sIPM-LFR are stable.
This stability would be partly because the sIPM-LFR is simpler and thus less vulnerable to bad local minima.

The two plots at the second row are the scatter plots of ($\Delta\texttt{DP}$, \texttt{acc})
for various values of the regularization parameters for \textit{Adult}. 
There are many bad solutions observed for the LAFTR while the results for the sIPM-LFR vary smoothly.
These results confirm again that the sIPM-LFR is easier to learn good fair representation.

\section{Conclusion}
\label{conclusion}

In this paper, we devised a simple but powerful LFR method based on the sigmoid IPM called the sIPM-LFR. 
We proved that the sIPM-LFR can control the level of DP-fairness for a large class of prediction models
by controlling the fairness of the representation measured by the proposed parametric IPM.
We demonstrated that our learning method is competitive or better than other baselines, especially for unsupervised learning tasks, and is also numerically stable. 

{We note that any bounded, increasing, and measurable function instead of the sigmoid can be used and similar theoretical results can be derived. We focused on the sigmoid IPM in this paper because the sigmoid is popularly used in machine learning societies.} 

There are various directions for future works. Theoretically, the level of DP-fairness for diverse classes of functions other than the RKHS with the RBF kernel would be worth pursuing.
Also, it would be interesting to investigate other parametric IPMs which have similar properties to the sigmoid IPM.

It would also be interesting to apply the parametric IPM to other AI tasks, such as the generation of tabular data. Unlike image data, it is presumable that tabular data have 
a relatively smooth distribution.  
In this case, we conjecture that the parametric IPM would be enough to measure the similarity of two tabular data, which we will pursue in the near future.

\section*{Acknowledgements}
This work was supported by 
Institute of Information \& communications Technology Planning \& Evaluation(IITP) grant funded by the Korea government(MSIT) [No. 2019-0-01396, Development of framework for analyzing, detecting, mitigating of bias in AI model and training data], 
and supported by
Institute of Information \& communications Technology Planning \& Evaluation(IITP) grant funded by the Korea government(MSIT) [No. 2022-0-00184, Development and Study of AI Technologies to Inexpensively Conform to Evolving Policy on Ethics].


\bibliography{pfr}
\bibliographystyle{icml2022}

\newpage
\appendix
\onecolumn

\counterwithin{figure}{section}
\counterwithin{table}{section}

{\Large \textbf{Appendix}}

Appendix \ref{appendix:proofs} provides the rigorous proofs of theoretical results in Sections \ref{not_pre} and \ref{theory}. 
Also, we include an additional theoretical result that the sigmoid IPM can ensure more general types of DP-fairness if the prediction model is simple. 
The formulas of various fairness measures we consider are listed in Appendix \ref{appendix:fairmeasures}, and the detailed settings for the experiments
are explained in Appendix \ref{appendix:exp_setup}.
The results of additional experiments are presented in Appendix \ref{appendix:exp}.

\numberwithin{equation}{section}

\section{Theoretical proofs}\label{appendix:proofs}
\renewcommand{\theequation}{A.\arabic{equation}}

\subsection{Proofs of Proposition \ref{pro:1}, Theorem \ref{thm1}, and Theorem \ref{thm3_1}}

In this subsection, we let $\bZ_0$ and $\bZ_1$ are random vectors following the distributions $\mathbbm{P}^h_0$ and $\mathbbm{P}^h_1$, respectively. 
We start with the following lemma which plays a key role
in the other proofs.

\begin{lemma} \label{lemma1}
	For any $\epsilon>0$, there exists $c >0$ not depending on $\epsilon$ such that for any two probability measures $\mathbb{P}_{0}$ and $\mathbb{P}_{1}$ defined on $\mathbb{R}^m$,
	\begin{align*} 
	    d_{\mathcal{V}_{sig}}(\mathbb{P}_{0}, \mathbb{P}_{1}) < \epsilon
	\end{align*}
	implies
	\begin{align*}
	\sup_{\mathbf{a} \in \mathbb{R}^m} \sup_{t\in\mathbb{R}} \left| \mathbbm{P}\left( \mathbf{a}^{\top} \mathbf{U}_{0} \leq t \right)
	- \mathbbm{P}\left( \mathbf{a}^{\top} \mathbf{U}_{1} \leq t \right) \right| <& c \epsilon.
	\end{align*}
	where $\bU_0$ and $\bU_1$ are random vectors following the distributions $\mathbbm{P}_0$ and $\mathbbm{P}_1$, respectively. 
\end{lemma}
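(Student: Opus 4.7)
The plan is to exploit that a steep sigmoid approximates the indicator of a half-space. Although $\mathcal{V}_{sig}$ contains only a three-parameter family of functions, it is closed under the rescaling $(\theta,\mu)\mapsto(\lambda\theta,\lambda\mu)$, and letting $\lambda\to\infty$ turns the sigmoid IPM control into control over every half-space probability---exactly the events defining $\mathbb{P}(\mathbf{a}^{\top}\mathbf{U}_{s}\leq t)$.

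First I would fix $\mathbf{a}\in\mathbb{R}^{m}$, $t\in\mathbb{R}$, and for each $\lambda>0$ consider $v_{\lambda}(\mathbf{z}):=\sigma(-\lambda\mathbf{a}^{\top}\mathbf{z}+\lambda t)\in\mathcal{V}_{sig}$ (take $\theta=-\lambda\mathbf{a}$, $\mu=\lambda t$). By definition of the IPM, $|\E\,v_{\lambda}(\mathbf{U}_{0})-\E\,v_{\lambda}(\mathbf{U}_{1})|\leq d_{\mathcal{V}_{sig}}(\mathbb{P}_{0},\mathbb{P}_{1})<\epsilon$ for every $\lambda$. As $\lambda\to\infty$, $\sigma(\lambda(t-u))\to\mathbb{I}(u<t)+\tfrac{1}{2}\mathbb{I}(u=t)$ pointwise and $|\sigma|\le 1$, so dominated convergence gives
\begin{equation*}
\E\,v_{\lambda}(\mathbf{U}_{s})\ \longrightarrow\ \mathbb{P}(\mathbf{a}^{\top}\mathbf{U}_{s}<t)+\tfrac{1}{2}\mathbb{P}(\mathbf{a}^{\top}\mathbf{U}_{s}=t),\qquad s=0,1.
\end{equation*}
Passing to the limit in the IPM bound therefore yields
\begin{equation*}
\bigl|\mathbb{P}(\mathbf{a}^{\top}\mathbf{U}_{0}<t)-\mathbb{P}(\mathbf{a}^{\top}\mathbf{U}_{1}<t)+\tfrac{1}{2}\bigl(\mathbb{P}(\mathbf{a}^{\top}\mathbf{U}_{0}=t)-\mathbb{P}(\mathbf{a}^{\top}\mathbf{U}_{1}=t)\bigr)\bigr|\leq d_{\mathcal{V}_{sig}}(\mathbb{P}_{0},\mathbb{P}_{1}).
\end{equation*}

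The hard part will be converting this into a bound on the CDF difference $F_{s}^{\mathbf{a}}(t):=\mathbb{P}(\mathbf{a}^{\top}\mathbf{U}_{s}\leq t)$, since the display above only carries half of each atom's mass at $t$. To bypass this I would apply the estimate at a sequence $t_{n}\downarrow t$ chosen outside the at-most-countable union of atoms of $\mathbf{a}^{\top}\mathbf{U}_{0}$ and $\mathbf{a}^{\top}\mathbf{U}_{1}$; at each such $t_{n}$ the atom terms vanish and the display collapses to $|F_{0}^{\mathbf{a}}(t_{n})-F_{1}^{\mathbf{a}}(t_{n})|\leq d_{\mathcal{V}_{sig}}(\mathbb{P}_{0},\mathbb{P}_{1})<\epsilon$. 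Right-continuity of the CDFs then gives $|F_{0}^{\mathbf{a}}(t)-F_{1}^{\mathbf{a}}(t)|<\epsilon$ after letting $n\to\infty$. Since the right-hand side is independent of $(\mathbf{a},t)$, taking the supremum establishes the lemma with $c=1$. Thus the whole proof reduces to one dominated convergence step plus a standard ``avoid the atoms'' trick, the atom handling being the only real subtlety.
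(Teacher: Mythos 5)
Your proof is correct, and it rests on the same core idea as the paper's---rescaled sigmoids in $\mathcal{V}_{sig}$ recover half-space indicators, followed by the identical ``avoid the atoms, then use right-continuity of the CDF'' step---but the technical execution is genuinely different and cleaner. The paper works at a \emph{fixed} finite steepness: at a non-atom $t$ it first selects $\delta>0$ (depending on $\mathbb{P}_0,\mathbb{P}_1$) so that $\mathbb{P}(\mathbf{a}^{\top}\mathbf{U}_s\in[t-\delta,t+\delta])<\epsilon$, tests against the single discriminator $\sigma\left((\mathbf{a}^{\top}\mathbf{u}-t)/\delta^{2}\right)$, and sandwiches it between the scaled indicators $\frac{1}{1+e^{-1/\delta}}\mathbb{I}(z>t+\delta)$ and $1-\frac{1}{1+e^{-1/\delta}}\mathbb{I}(z\le t-\delta)$; assembling the resulting three error terms gives the conclusion with an implicit constant of roughly $7$. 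You instead send the steepness $\lambda\to\infty$ and invoke dominated convergence, which eliminates both the distribution-dependent $\delta$-selection and the sandwich bookkeeping, and yields in the limit the exact statement that $d_{\mathcal{V}_{sig}}(\mathbb{P}_0,\mathbb{P}_1)$ dominates the difference of $\mathbb{P}(\mathbf{a}^{\top}\mathbf{U}_s<t)+\tfrac{1}{2}\mathbb{P}(\mathbf{a}^{\top}\mathbf{U}_s=t)$ at \emph{every} $t$; specializing to non-atoms and passing to the limit $t_n\downarrow t$ then gives the sharp constant $c=1$. What the paper's finite-$\delta$ route buys in exchange is a fully quantitative argument with an explicit single witness discriminator at finite steepness, but nothing in the lemma requires that, so your argument is both shorter and quantitatively sharper. (One cosmetic remark: the paper's $\sigma(z)=(1+\exp(z))^{-1}$ is decreasing, so your parametrization $\theta=-\lambda\mathbf{a}$, $\mu=\lambda t$ implicitly uses the increasing convention; since $\mathcal{V}_{sig}$ is invariant under negating $(\theta,\mu)$, this is immaterial.)
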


\begin{proof}
Fix $\epsilon>0$ and
 $\mathbf{a} \in \mathbb{R}^m$. We first consider the value of $t\in\mathbb{R}$ that the random variables $\mathbf{a}^{\top} \mathbf{U}_{0}$ and $\mathbf{a}^{\top} \mathbf{U}_{1}$ do not have a point mass at $t.$ 
 Then, there exists a small $\delta$ with $0<\delta < \min(\frac{1}{\log(1/\epsilon)}, \frac{1}{\log(1/10)})$ such that
	\begin{equation}
	\label{eq:1}
	\begin{split}
	\mathbbm{P} \left( \mathbf{a}^{\top} \mathbf{U}_{0} \in [t - \delta , t + \delta]\right) &< \epsilon \\
	\mathbbm{P} \left( \mathbf{a}^{\top} \mathbf{U}_{1} \in [t - \delta , t + \delta]\right) &< \epsilon
	\end{split}
	\end{equation}
	hold. 
	By the definition of $d_{\mathcal{V}_{sig}}(\mathbb{P}_{0}, \mathbb{P}_{1}) < \epsilon$, 
	we have 
	\begin{align}
	\label{eq:2}
	\left| \E \left[\sigma\left(\frac{\mathbf{a}^{\top} \mathbf{U}_{0} - t}{\delta^2}\right) \right]
	- \E \left[\sigma\left(\frac{\mathbf{a}^{\top} \mathbf{U}_{1} - t}{\delta^2}\right)\right]  \right| < \epsilon.
	\end{align} 
	On the other hand, for any $z\in\mathbb{R}$, the following inequality holds:
	\begin{align*}
	\frac{1}{1 + e^{-\frac{1}{\delta}}} \cdot \mathbb{I}\left( z > t + \delta\right)  
	\leq \sigma\left(\frac{z - t}{\delta^2}\right)
	\leq 1 - \frac{1}{1 + e^{-\frac{1}{\delta}}} \cdot \mathbb{I}\left( z \leq t - \delta\right).
	\end{align*}
	Thus for $s=0,1$ we have
	\begin{align}
	\label{eq:3}
	\frac{1}{1 + e^{-\frac{1}{\delta}}} \mathbbm{P}\left(\mathbf{a}^{\top}\bU_s>t+\delta\right)
	\le\E \left[\sigma\left(\frac{\mathbf{a}^{\top} \mathbf{U}_{s} - t}{\delta^2}\right) \right]
	\le 1-\frac{1}{1 + e^{-\frac{1}{\delta}}} \mathbbm{P} \left(\mathbf{a}^{\top}\bU_s\le t-\delta\right).
	\end{align}
	Also, from (\ref{eq:1}), we can bound the difference of the upper and lower bounds in (\ref{eq:3}):
	\begin{align}
	\label{eq:4}
	\begin{split}
	1-\frac{1}{1 + e^{-\frac{1}{\delta}}}\left\{
	\mathbbm{P} \left(\mathbf{a}^{\top}\bU_s \leq t-\delta\right)+
	\mathbbm{P}\left(\mathbf{a}^{\top}\bU_s>t+\delta\right)
	\right\}&\le 1-\frac{1}{1 + e^{-\frac{1}{\delta}}}(1-\epsilon)\\
	&\le 1-\frac{1-\epsilon}{1+\epsilon}\\
	&\le 2\epsilon.
	\end{split}
	\end{align}
	In turn, from (\ref{eq:3}) and (\ref{eq:4}), we have 
	\begin{align}
	\label{eq:5}
    \left|\E \left[\sigma\left(\frac{\mathbf{a}^{\top} \mathbf{U}_{s} - t}{\delta^2}\right) \right] - \left(1-
    \frac{1}{1 + e^{-\frac{1}{\delta}}}\mathbbm{P}\left(\mathbf{a}^{\top}\bU_s\le t-\delta\right)\right)\right|
    \le 2\epsilon.
	\end{align}
Therefore, by (\ref{eq:1}), (\ref{eq:2}), and (\ref{eq:5}), we obtain the following inequality:
	\begin{align*}
	\label{eq:6}
	\begin{split}
\frac{1}{1 + e^{-\frac{1}{\delta}}}\left| \mathbbm{P}\left( \mathbf{a}^{\top} \mathbf{U}_{0} \leq t \right)
	- \mathbbm{P}\left( \mathbf{a}^{\top} \mathbf{U}_{1} \leq t \right) \right| 
	&= \left|\left(1-\frac{1}{1 + e^{-\frac{1}{\delta}}}\mathbbm{P}\left( \mathbf{a}^{\top} \mathbf{U}_{0} \leq t \right)\right)-
	\left(1-\frac{1}{1 + e^{-\frac{1}{\delta}}}\mathbbm{P}\left( \mathbf{a}^{\top} \mathbf{U}_1 \leq t \right)\right)\right|\\
	&\le \frac{1}{1 + e^{-\frac{1}{\delta}}}\sum_{s\in\{0,1\}}\mathbbm{P} \left( \mathbf{a}^{\top} \mathbf{U}_s \in [t - \delta , t]\right)\\
	&+\sum_{s\in\{0,1\}}
	\left|\E \left[\sigma\left(\frac{\mathbf{a}^{\top} \mathbf{U}_{s} - t}{\delta^2}\right) \right] - \left(1-
    \frac{1}{1 + e^{-\frac{1}{\delta}}}\mathbbm{P}\left(\mathbf{a}^{\top}\bU_s\le t-\delta\right)\right)\right|\\
    &+
    \left| \E \left[\sigma\left(\frac{\mathbf{a}^{\top} \mathbf{U}_{0} - t}{\delta^2}\right) \right]
	- \E \left[\sigma\left(\frac{\mathbf{a}^{\top} \mathbf{U}_{1} - t}{\delta^2}\right)\right]  \right|\\
	&\le \frac{2\epsilon}{1 + e^{-\frac{1}{\delta}}}+5\epsilon,
	\end{split}
	\end{align*}
	which completes the proof.

For the case where either $\mathbf{a}^{\top}\bU_0$ or $\mathbf{a}^{\top}\bU_1$ has 
a point mass at $t$, we can construct a sequence $\{t_j\}_{j=1}^\infty$ such that 1) $t_j\downarrow t$ and 2) neither $\mathbf{a}^{\top}\bU_0$ nor $\mathbf{a}^{\top}\bU_1$ has a point mass at $\{t_j\}_{j=1}^\infty$. As $\mathbbm{P}\left( \mathbf{a}^{\top} \mathbf{U}_{s} \leq \cdot \right)$ is right-continuous, the following holds:
\begin{align*}
\lim_{j\to\infty}\left| \mathbbm{P}\left( \mathbf{a}^{\top} \mathbf{U}_{0} \leq t_j \right)
	- \mathbbm{P}\left( \mathbf{a}^{\top} \mathbf{U}_{1} \leq t_j \right) \right|
	=
	\left| \mathbbm{P}\left( \mathbf{a}^{\top} \mathbf{U}_{0} \leq t \right)
	- \mathbbm{P}\left( \mathbf{a}^{\top} \mathbf{U}_{1} \leq t \right) \right|<c\epsilon,     
\end{align*}
and the proof is done.
\end{proof}

\paragraph{Proof of Proposition \ref{pro:1}}
Let $\mathbf{U}_0$ and $\mathbf{U}_1$ are two random vectors whose distributions are $\mathbbm{P}_0$ and $\mathbbm{P}_1$, respectively.
    
    $(\implies)$ From $d_{\mathcal{V}_{sig}}(\mathbb{P}_{0}, \mathbb{P}_{1}) = 0,$
    we have 
    \begin{align*}
	\sup_{\mathbf{a} \in \mathbb{R}^m} \sup_{t} \left| \mathbbm{P}\left( \mathbf{a}^{\top} \mathbf{U}_{0} \leq t \right)
	- \mathbbm{P}\left( \mathbf{a}^{\top} \mathbf{U}_{1} \leq t \right) \right| = 0
	\end{align*}
	by Lemma \ref{lemma1}.
	Hence we have $\mathbf{a}^{\top} \mathbf{U}_{0} \overset{d}{=} \mathbf{a}^{\top} \mathbf{U}_{1}$ holds for all $\mathbf{a} \in \mathbb{R}^m$, which implies $\mathbb{P}_{0} \equiv \mathbb{P}_{1}$ due to the uniqueness of the characteristic function.
    
    $ (\impliedby)$ It is trivial since for any $v\in\mathcal{V}_{sig}$, we have 
    $\int v(\bm{u}) (d\mathbbm{P}_{0}(\bm{u}) - d\mathbbm{P}_{1}(\bm{u}))  = 0.$ \qed

\paragraph{Proof of Theorem \ref{thm1}}
The proof is trivial by Proposition \ref{pro:1}.

\paragraph{Proof of Theorem \ref{thm3_1}}


Let 
\begin{align*}
\cF_{n,B}^{NN}= \Big\{ &f(\bz)=\sum_{k=1}^n v_k \sigma(\bm{a}_k^\top \bz+b_k)+v_0: v_0 \in \mathbb{R},
|v_k| \le B, \bm{a}_k\in \mathbb{R}^m, 
b_k\in \mathbb{R}, k=1,\ldots,n\Big\},
\end{align*}
for $n\in \mathbb{N}$ and $B>0.$
Since $\cZ$ is bounded, there exists $M>0$ such that $\cZ \subset [-M,M]^d$.
By Theorem 2.2 of \citet{yukich1995sup},
for any $f\in \cF_{a,C}$ and $n \in \mathbb{N}$, there exist $\bm{a}_k \in \mathbb{R}^m$, $b_k \in \mathbb{R}$, $|v_k| \leq B$ for $k \in \{1,\dots,n\}$ and $v_0 \in \mathbb{R}$ such that
\begin{align*}
    \sup_{\bz\in \cZ}\left|f(\bz) - \sum_{k=1}^n v_k \sigma(\bm{a}_k^\top \bz+b_k)-v_0\right| \leq \frac{C^{\prime}}{\sqrt{n}}
\end{align*}
for some constant $C^{\prime}>0$.
Thus, we have
\begin{align*}
    \left|\int \left(f(\bz) - \sum_{k=1}^n v_k \sigma(\bm{a}_k^\top \bz+b_k)-v_0 \right) d\mbP_0^h(\bz)\right|
    \leq& \frac{C^{\prime}}{\sqrt{n}}.
\end{align*}
A similar bound holds for $\mbP_1^h.$ Hence, by Proposition \ref{pro:1},
\begin{align*}
        \left| \int f(\bz) (d\mbP_0^h(\bz)-d\mbP_1^h(\bz)) \right| 
        \leq &  \sum_{k=1}^n \left| \int (v_k \sigma(\bm{a}_k^\top\bz+b_k)+v_0)(d\mbP_0^h(\bz)-d\mbP_1^h(\bz)) \right| + 2\frac{C^{\prime}}{\sqrt{n}}\\
        \leq & nB d_{\mathcal{V}_{sig}}(\mathbb{P}_{0}^{h}, \mathbb{P}_{1}^{h}) + 2\frac{C^{\prime}}{\sqrt{n}}\\
        \leq & C^{\prime\prime} d_{\mathcal{V}_{sig}}(\mathbb{P}_{0}^{h}, \mathbb{P}_{1}^{h})^{1/3}
\end{align*}
holds for some constant $C^{\prime\prime} > 0$ 
if we let $n = \lceil\frac{1}{d_{\mathcal{V}_{sig}}(\mathbb{P}_{0}^{h}, \mathbb{P}_{1}^{h})^{2/3}} \rceil,$
and thus the proof for $k=1$ is complete with $c_1=C^{\prime\prime}$.

For $k>1,$ note that $f^k\in \cF_{a^K, k a^{k-1}C}$ if $f\in \cF_{a,C}$
(see p 940 of \cite{barron1993universal}) and thus the proof can be done similarly.
\qed

\subsection{Proof of Theorem \ref{thm3}}

To prove Theorem \ref{thm3}, we need the following three Lemmas \ref{lemma2}, \ref{lemma3}, and \ref{lemma:inf}. 


\begin{lemma}\label{lemma2}
	For $\forall r_1, r_2 \in \mathbb{N}$, let $r:=r_1+r_2$ and $\lambda_i := -1 + \frac{2i}{r}$ for $i=0,1,\dots,r$.
	Then there exists a vector $(\beta_0, \beta_1, \ldots, \beta_r) \in \mathbb{R}^{r+1}$ such that
	\begin{equation}
	\label{lemmaA.2_1}
	\sum_{i=0}^{r} \beta_i (x+\lambda_i y)^r = x^{r_1} y^{r_2} 
	\end{equation}
	and
	\begin{equation}
	 \label{lemmaA.2_2}
	\sum_{i=0}^r |\beta_i| < e^r
	\end{equation}
 for all $x,y \in \mathbb{R}$.
\end{lemma}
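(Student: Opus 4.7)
The plan is to view $p(s) := (x+sy)^r$ as a degree-$r$ polynomial in the single variable $s$ (with $x,y$ treated as parameters) and apply Lagrange interpolation at the $r+1$ distinct nodes $\lambda_0, \ldots, \lambda_r \in [-1,1]$. Since $p$ has degree exactly $r$, interpolation gives the exact polynomial identity
\begin{equation*}
(x+sy)^r = \sum_{i=0}^r (x+\lambda_i y)^r \, L_i(s), \quad L_i(s) = \prod_{j \ne i} \frac{s-\lambda_j}{\lambda_i - \lambda_j}.
\end{equation*}
Extracting the coefficient of $s^{r_2}$ on both sides (the left side yields $\binom{r}{r_2} x^{r_1} y^{r_2}$) immediately produces (\ref{lemmaA.2_1}) with the explicit choice
\begin{equation*}
\beta_i = \binom{r}{r_2}^{-1} \, [s^{r_2}] L_i(s).
\end{equation*}

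The main obstacle is establishing the bound (\ref{lemmaA.2_2}). My plan is to estimate $|\beta_i|$ factor by factor. For the denominator of $L_i$, using $\lambda_i - \lambda_j = 2(i-j)/r$ yields
\begin{equation*}
\Big|\prod_{j \ne i}(\lambda_i-\lambda_j)\Big| = (2/r)^r \, i!\,(r-i)!.
\end{equation*}
For the numerator, the coefficient of $s^{r_2}$ in $\prod_{j \ne i}(s-\lambda_j)$ is $\pm e_{r-r_2}(\{\lambda_j : j \ne i\})$, the elementary symmetric polynomial of degree $r-r_2$ in $r$ variables each of absolute value at most $1$; this is bounded by $\binom{r}{r-r_2} = \binom{r}{r_2}$. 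Combining the two estimates gives
\begin{equation*}
|\beta_i| \le \binom{r}{r_2}^{-1} \cdot \frac{\binom{r}{r_2}}{(2/r)^r \, i!(r-i)!} = \left(\frac{r}{2}\right)^r \frac{\binom{r}{i}}{r!}.
\end{equation*}

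Summing over $i$ and using $\sum_{i=0}^r \binom{r}{i} = 2^r$, I obtain
\begin{equation*}
\sum_{i=0}^r |\beta_i| \le \left(\frac{r}{2}\right)^r \cdot \frac{2^r}{r!} = \frac{r^r}{r!}.
\end{equation*}
A routine application of Stirling's inequality $r! \ge \sqrt{2\pi r}\,(r/e)^r$ then gives $r^r/r! \le e^r/\sqrt{2\pi r} < e^r$ for $r \ge 1$ (the $r=0$ case being trivial), completing the proof. The only subtle point I expect to need care with is the elementary symmetric polynomial bound, but it reduces cleanly to counting the number of size-$(r-r_2)$ subsets of an $r$-element index set, so no deep inequality is required.
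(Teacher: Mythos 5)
Your proof is correct, and it actually produces the same coefficients $\beta_i$ as the paper's proof — the conditions determine a unique solution of a Vandermonde system — but it reaches them by a genuinely cleaner route. The paper constructs $\beta_i$ by explicitly inverting the Vandermonde matrix (citing a factorization $V^{-1}=WA$ from the literature) and then carries out a telescoping computation to reduce the resulting expression to an elementary symmetric polynomial divided by $\prod_{j\neq i}(\lambda_i-\lambda_j)$; you obtain the same object in one step by reading off the $s^{r_2}$-coefficient of the Lagrange interpolation identity for $(x+sy)^r$, which needs no external formula. The estimates are also organized differently: the paper lower-bounds the denominator uniformly in $i$ by $(r+1)e^{-r}$ (the worst case $i\approx r/2$, via Stirling) and then counts the terms of the symmetric polynomial, whereas you keep the exact per-index value $(2/r)^r\, i!\,(r-i)!$ of each denominator, sum $\binom{r}{i}$ over $i$ by the binomial theorem, and invoke Stirling only once at the end, yielding the marginally sharper bound $e^r/\sqrt{2\pi r}$. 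Both arguments ultimately rest on the same two ingredients — the term count $\binom{r}{r_2}$ for the elementary symmetric polynomial in the nodes $\lambda_j$ and Stirling's inequality — so what your version buys is self-containedness and brevity rather than a new idea. One cosmetic remark: since $r_1,r_2\in\mathbb{N}$, you always have $r\ge 2$, so the ``$r=0$ trivial case'' you mention never arises; likewise no separate treatment of $y=0$ is needed in the interpolation step, since $(x+sy)^r$ has degree at most $r$ in $s$ regardless.
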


\begin{proof}
We first find a closed form solution of $(\beta_0, \beta_1, \ldots, \beta_r)$ of (\ref{lemmaA.2_1}) and then show that it satisfies (\ref{lemmaA.2_2}).
    Note that if a vector $(\beta_0, \beta_1, \dots, \beta_r) \in \mathbb{R}^{r+1}$ satisfies
	\begin{align*}
	\sum_{i=0}^r \beta_i \lambda_i^{k} &= 0 \text{ for } k \in \{0,1,\dots, r\} \setminus \{r_2\}\\
	\end{align*}
	and
	\begin{align*}
	\sum_{i=0}^r \beta_i \lambda_i^{r_2} &= \frac{1}{{r \choose r_2}},
	\end{align*}
	then it is a solution of (\ref{lemmaA.2_1}). 
	Let V be the Vandermonde matrix defined as
	\begin{equation*}
	V = \begin{pmatrix}
	1 & 1 & \dots & 1 \\
	\lambda_0 & \lambda_1 & \dots & \lambda_r \\
	\dots & \dots & \ddots & \dots \\
	\lambda_0^r & \lambda_1^r & \dots & \lambda_r^r 
	\end{pmatrix} .
	\end{equation*}
	Using the Vandermonde matrix, the above two equations can be re-formulated as
	\begin{equation*}
	V \times (\beta_0, \beta_1, \dots, \beta_r)^{\top} = \frac{1}{{r \choose r_2}} \bm{e}_{r_2+1},
	\end{equation*}
	where $\bm{e}_{r_2+1} \in \mathbb{R}^{r+1}$ is the
	vector whose $(r_2+1)$-th element is 1 and the rests are 0. 
	Then by \citet{man2017computing}, it is known that $V^{-1}$ can be expressed as the product of two matrices $W$ and $A$, where the matrices $W$ and $A$ are given as  
	\begin{equation*}
	W=\left(\begin{array}{ccccc}
	\frac{\lambda_{0}{ }^{r}}{\prod_{j \neq 0}\left(\lambda_{0}-\lambda_{j}\right)} & \frac{\lambda_{0}{ }^{r-1}}{\prod_{j \neq 0}\left(\lambda_{0}-\lambda_{j}\right)} & \dots & \frac{1}{\prod_{j \neq 0}\left(\lambda_{0}-\lambda_{j}\right)} \\
	\frac{\lambda_{1}{ }^{r}}{\prod_{j \neq 1}\left(\lambda_{1}-\lambda_{j}\right)} & \frac{\lambda_{1}{ }^{r-1}}{\prod_{j \neq 1}\left(\lambda_{1}-\lambda_{j}\right)} & \dots & \frac{1}{\prod_{j \neq 1}\left(\lambda_{1}-\lambda_{j}\right)} \\
	\vdots & \vdots & \ddots & \vdots \\
	\frac{\lambda_{r}{ }^{r}}{\prod_{j \neq r}\left(\lambda_{r}-\lambda_{j}\right)} & \frac{\lambda_{r}{ }^{r-1}}{\prod_{j \neq r}\left(\lambda_{r}-\lambda_{j}\right)} & \dots & \frac{1}{\prod_{j \neq r}\left(\lambda_{r}-\lambda_{j}\right)}
	\end{array}\right)
	\textup{ and }
	A=\left(\begin{array}{ccccc}
	a_0 & 0 & 0 & \cdots & 0 \\
	a_{1} & a_0 & 0 & \cdots & 0 \\
	a_{2} & a_{1} & a_0 & \cdots & 0 \\
	\vdots & \vdots & \vdots & \ddots & \vdots \\
	a_{r} & a_{r-1} & a_{r-2} & \cdots & a_0
	\end{array}\right), 
	\end{equation*}
	where $a_0 = 1, a_{1}=-\sum \lambda_{j}, a_{2}=\sum_{j < l} \lambda_{j} \lambda_{l}, a_{3}=-\sum_{j < l < s} \lambda_{j} \lambda_{l} \lambda_{s}, \dots,$ and $a_{r+1}=(-1)^{r+1} \prod_{j=0}^r \lambda_{j}.$
	Since
	\begin{align*}
	(\beta_0, \beta_1, \dots, \beta_r)^{\top} = \frac{1}{{r \choose r_2}} V^{-1} \bm{e}_{r_2 + 1}
	= \frac{1}{{r \choose r_2}} W A \bm{e}_{r_2 + 1}
	= \frac{1}{{r \choose r_2}} W (0,\dots,0,a_0,a_1,\dots,a_{r_1})^{\top},
 	\end{align*}
    we obtain the closed form solution $\beta_i, i=0,1,\ldots,r$
    of (\ref{lemmaA.2_1}) given as
	\begin{align}
	\beta_i = \frac{1}{{r \choose r_2}\prod_{j \neq i}\left(\lambda_{i}-\lambda_{j}\right)} 
	\left(\lambda_i^{r_1} + \sum_{l=1}^{r_1} (-1)^l \lambda_{i}^{r_1 - l}
	\sum_{k_1 < \dots < k_{l}} \lambda_{k_1} \dots \lambda_{k_{l}}\right). \label{lemmaA.2_3}
	\end{align}
	
	Now we are going to show that the vector $(\beta_0, \beta_1, \dots, \beta_r)$ of (\ref{lemmaA.2_3}) satisfies (\ref{lemmaA.2_2}).
	The numerator of $\beta_i$ in (\ref{lemmaA.2_3}) can be rewritten as
	\begin{align*}
	\lambda_i^{r_i}+\sum_{l=1}^{r_1} (-1)^l \lambda_{i}^{r_1 - l}
	\sum_{k_1 < \dots < k_{l}} \lambda_{k_1} \dots \lambda_{k_{l}} =& 
	\lambda_i^{r_i}+\sum_{l=1}^{r_1} (-1)^l \lambda_{i}^{r_1 - l}
	\sum_{\underset{\{k_1, \dots, k_{l}\} \not\ni i}{k_1 < \dots < k_{l}}} \lambda_{k_1} \dots \lambda_{k_{l}}\\
	&+ \sum_{l=1}^{r_1} (-1)^l \lambda_{i}^{r_1 - l} \lambda_i \sum_{\underset{\{k_1, \dots, k_{l-1}\} \not\ni i}{k_1 < \dots < k_{l-1}}} \lambda_{k_1} \dots \lambda_{k_{l-1}} \\
	&=\lambda_i^{r_i}+\sum_{l=1}^{r_1} (-1)^l \lambda_{i}^{r_1 - l}
	\sum_{\underset{\{k_1, \dots, k_{l}\} \not\ni i}{k_1 < \dots < k_{l}}} \lambda_{k_1} \dots \lambda_{k_{l}}\\
	&-\lambda_i^{r_1}+(-1)\sum_{l=1}^{r_1-1} (-1)^l \lambda_{i}^{r_1 - l} \lambda_i \sum_{\underset{\{k_1, \dots, k_{l}\} \not\ni i}{k_1 < \dots < k_{l}}} \lambda_{k_1} \dots \lambda_{k_{l}}\\
	&= (-1)^{r_1} \sum_{\underset{\{k_1, \dots, k_{{r_1}}\} \not\ni i}
	{k_1 < \dots < k_{{r_1}}}} \lambda_{k_1} \dots \lambda_{k_{{r_1}}}.
	\end{align*}
	Thus, $\beta_i$ is given as
	\begin{equation*}
	\beta_i = \frac{(-1)^{r_1}}{{r \choose r_2}}
	\Big(\sum_{\underset{\{k_1, \dots, k_{{r_1}}\} \not\ni i}{k_1 < \dots < k_{{r_1}}}} \lambda_{k_1} \dots \lambda_{k_{{r_1}}}\Big)\Big/
	\Big(\prod_{j \neq i}\left(\lambda_{i}-\lambda_{j}\right)\Big).
	\end{equation*}
	Finally, we can find the lower bound of $|\prod_{j \neq i}\left(\lambda_{i}-\lambda_{j}\right)|$ given as
	\begin{align*}
	|\prod_{j \neq i}\left(\lambda_{i}-\lambda_{j}\right)| & \geq 
	\begin{cases}
	(\frac{r}{2})!(\frac{r}{2})! / (\frac{r}{2})^r ,& \mbox{if }r\mbox{ is even} \\
	(\frac{r+1}{2})!(\frac{r-1}{2})! / (\frac{r}{2})^r,& \mbox{if }r\mbox{ is odd}
	\end{cases} \\
	& > (r+1)e^{-r}. 
	\end{align*}
	The second inequality is derived by the inequality from the Stirling's approximation, that is,
	\begin{align*}
	    n!>\sqrt{2\pi n}\left( \frac{n}{e}\right)^n e^{\frac{1}{12n+1}}>\sqrt{2\pi n}\left( \frac{n}{e}\right)^n.
	\end{align*}
	Therefore, we finally obtain
	\begin{align*}
	\sum_{i=0}^{r} |\beta_i| 
	&<\frac{e^r}{(r+1){r \choose r_2}} \sum_{i=0}^{r} \sum_{\underset{\{k_1, \dots, k_{{r_1}}\} \not\ni i}{k_1 < \dots < k_{{r_1}}}} | \lambda_{k_1} \dots \lambda_{k_{{r_1}}}|\\
	&<\frac{e^r}{(r+1){r \choose r_2}} \sum_{i=0}^{r} \sum_{\underset{\{k_1, \dots, k_{{r_1}}\} \not\ni i}{k_1 < \dots < k_{{r_1}}}} 1^{r_1}\\
	&=\frac{e^r}{(r+1){r \choose r_2}}(r+1){r\choose r_2}=e^r,
	\end{align*}
	and the proof is done.
\end{proof}


\begin{lemma} \label{lemma3}
For $\forall u \in \mathbb{N}$ and $\forall r_1, r_2, \dots, r_u \in \mathbb{N}$, let $r=r_1 + \dots + r_{u}$.
Then there exist a real-valued sequence $\{\beta_i\}_{i=0}^{\infty}$ and a 2-dimensional array $\{\lambda_{ij}\}_{i\in\mathbb{N}_0,j\in\{1,\dots,u\}}$  each of whose elements is bounded by $[-1,1]$ such that
\begin{align*}
    \sum_{i} \left[ \beta_i ( \sum_{j=1}^{u} \lambda_{ij} z_j )^r \right] = z_{1}^{r_1} z_{2}^{r_2} \dots z_{u}^{r_u}
    \textup{ and }
    \sum_{i} |\beta_i| \leq e^{(u-1)r}.
\end{align*}
holds for all $z_1 , \dots , z_u \in \mathbb{R}$.

\end{lemma}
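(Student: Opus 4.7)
The plan is to proceed by induction on $u$, using Lemma~\ref{lemma2} as the key tool in the inductive step.

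For the base case $u=1$, we have $r = r_1$, so $z_1^{r_1} = z_1^r$. Taking $\beta_0 = 1$ and $\lambda_{01} = 1$ (with all other $\beta_i = 0$) gives the representation, with $\sum_i|\beta_i| = 1 = e^{(1-1)r}$. The constraint $|\lambda_{ij}| \le 1$ is trivially satisfied.

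For the inductive step, assume the claim holds for $u-1$ variables. Apply the inductive hypothesis to $(z_2,\ldots,z_u)$ with exponents $(r_2,\ldots,r_u)$ (whose sum is $r - r_1$) to obtain
\begin{equation*}
z_2^{r_2}\cdots z_u^{r_u} = \sum_k \gamma_k \left(\sum_{j=2}^u \mu_{kj} z_j\right)^{r - r_1}
\end{equation*}
with $|\mu_{kj}| \le 1$ and $\sum_k |\gamma_k| \le e^{(u-2)(r - r_1)}$. Multiplying by $z_1^{r_1}$ and applying Lemma~\ref{lemma2} termwise with $x = z_1$ and $y = \sum_{j=2}^u \mu_{kj} z_j$ (using exponents $r_1$ and $r - r_1$), we get coefficients $\beta_i$ (independent of $k$) and $\lambda_i = -1 + 2i/r \in [-1,1]$ with $\sum_i |\beta_i| < e^r$ such that
\begin{equation*}
z_1^{r_1}\left(\sum_{j=2}^u \mu_{kj} z_j\right)^{r - r_1}
= \sum_{i=0}^r \beta_i \left(z_1 + \lambda_i \sum_{j=2}^u \mu_{kj} z_j\right)^r.
\end{equation*}
Combining these two expansions yields $z_1^{r_1}\cdots z_u^{r_u}$ as a sum over pairs $(k,i)$ of terms of the form $\gamma_k \beta_i (z_1 + \lambda_i\sum_{j=2}^u \mu_{kj} z_j)^r$, i.e., $r$-th powers of linear combinations of $z_1,\ldots,z_u$. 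The coefficient of $z_1$ is $1$ and the coefficient of $z_j$ for $j\ge 2$ is $\lambda_i \mu_{kj}$, whose absolute value is at most $1$. Relabeling the pair $(k,i)$ by a single index and computing
\begin{equation*}
\sum_{k,i} |\gamma_k \beta_i| \le e^{(u-2)(r-r_1)}\cdot e^r = e^{(u-1)r - (u-2)r_1} \le e^{(u-1)r},
\end{equation*}
closes the induction.

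The only subtlety is the bookkeeping: verifying that composing the inner linear combination (from the inductive hypothesis) with the outer one (from Lemma~\ref{lemma2}) keeps all coefficients of the $z_j$'s in $[-1,1]$, and that the exponential constant is not exceeded. Both points are handled by the fact that $|\lambda_i|\le 1$ in Lemma~\ref{lemma2} and by the favorable arithmetic $r + (u-2)(r-r_1) \le (u-1)r$ for $r_1 \ge 0$. No obstacle beyond careful indexing is anticipated.
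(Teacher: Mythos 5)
Your proof is correct and follows essentially the same route as the paper's: induction on the number of variables, invoking Lemma~\ref{lemma2} in the inductive step to merge one variable into an $r$-th power of a linear form, with the same bookkeeping on the $\lambda$'s and the product bound on the $\beta$'s. The only (immaterial) difference is that you peel off the first variable $z_1$ and treat the inductive linear form as the $y$-argument of Lemma~\ref{lemma2}, whereas the paper peels off the last variable $z_N$ and treats the inductive linear form as the $x$-argument; both yield exponent arithmetic that closes under $e^{(u-1)r}$.
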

\begin{proof}
    We prove the lemma with the mathematical induction. The statement is obvious for $u=1,$ and we have shown in Lemma \ref{lemma2} that the statement also holds for $u=2$.
    Suppose that the statement holds for some $u=N-1\in\mathbb{N}$, and we will prove the statement is also valid when $N$.
    For given $r_1, r_2, \dots, r_{N-1} \in \mathbb{N}$, let $r=r_1 + \dots + r_{N-1}$. By the assumption, there exist $\beta^{\prime}_0, \beta^{\prime}_1, \dots \in \mathbb{R}$ and $\lambda^{\prime}_{ij} \in [-1,1] \text{ for } i \in \mathbb{N}_0$ and $j \in \{1,\dots,N-1\}$ such that
\begin{align*}
    \sum_{i} \left[ \beta^{\prime}_i ( \sum_{j=1}^{N-1} \lambda^{\prime}_{ij} z_j )^r \right] = z_{1}^{r_1} z_{2}^{r_2} \dots z_{N-1}^{r_{N-1}}
    \textup{ and }
    \sum_{i} |\beta^{\prime}_i| < e^{(N-2)r}.
\end{align*}
Note that by Lemma \ref{lemma2}, for any $r_{N}\in\mathbb{N}$ there exist $\beta^{\prime\prime}_0, \beta^{\prime\prime}_1, \dots, \beta^{\prime\prime}_{r+r_{N}} \in \mathbb{R}$ with $\sum_{k=0}^{r+r_{N}} |\beta^{\prime\prime}_k| < e^{r+r_{N}}$ and $\lambda^{\prime\prime}_k \in [-1,1]$ for $k \in \{0,1,\dots,(r+r_{N})\}$ such that
\begin{align*}
    z_{1}^{r_1} z_{2}^{r_2} \dots z_{N-1}^{r_{N-1}} z_{N}^{r_{N}} 
    =& \sum_{i} \left[ \beta^{\prime}_i ( \sum_{j=1}^{N-1} \lambda^{\prime}_{ij} z_j )^r z_{N}^{r_{N}} \right]\\
    =& \sum_{i} \left[ \beta^{\prime}_i \sum_{k=0}^{r+r_{N}} \left(\beta^{\prime\prime}_k(\sum_{j=1}^{N-1} \lambda^{\prime}_{ij} z_j + \lambda^{\prime\prime}_k z_{N})^{r+r_{N}}\right) \right]\\
    =& \sum_{i}  \sum_{k=0}^{r+r_{N}}  \left( \beta^{\prime}_i \beta^{\prime\prime}_k(\sum_{j=1}^{N-1} \lambda^{\prime}_{ij} z_j + \lambda^{\prime\prime}_k z_{N})^{r+r_{N}}\right).
\end{align*} 
Also, we can check that $\sum_{i}  |\sum_{k=0}^{r+r_N} \beta^{\prime}_i \beta^{\prime\prime}_k| < e^{(N-1)(r+r_{N})}$ holds. 
Thus, the statement holds for $N$ if we set $\beta_i=\sum_{k=1}^{r+r_{N}}\beta_i^{\prime}\beta_k^{\prime\prime},$ for  $i\in\mathbb{N}_0$ and  $\{\lambda_{ij}\}_{i\in\mathbb{N}_0,j\in\{1,\dots,N\}}$ accordingly.
\end{proof}


\begin{lemma}
\label{lemma:inf}
	Suppose that $d_{\cV_{sig}}(\mathbb{P}_0^h, \mathbb{P}_1^h) < \epsilon$ for a given $\epsilon > 0.$ Then for a $m$-dimensional index $\mathbf{r} := (r_1 , \dots , r_m)^{\top} \in \mathbb{N}_0^m$, there exist $c_1,c_2>0$ not depending on $\epsilon$ and $\mathbf{r}$ such that 
    \begin{align*}
\left| \mathbbm{E}(\mathbf{Z}_0^{\mathbf{r}}) - \mathbbm{E}(\mathbf{Z}_1^{\mathbf{r}}) \right|<
c_1 c_2^{{|\mathbf{r}|_1}} \epsilon\\
    \end{align*}
    holds where $\bZ_0$ and $\bZ_1$ are random vectors following the distributions $\mathbbm{P}_0^h$ and $\mathbbm{P}_1^h$, respectively. 
\end{lemma}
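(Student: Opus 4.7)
The plan is to prove the moment-bound of Lemma \ref{lemma:inf} by chaining three ingredients already available: (i) the Kolmogorov-type closeness for one-dimensional projections provided by Lemma \ref{lemma1}, (ii) an integration-by-parts conversion from Kolmogorov closeness to closeness of polynomial moments of a linear projection, and (iii) the polarization-style identity of Lemma \ref{lemma3} that represents the monomial $z_1^{r_1}\cdots z_m^{r_m}$ as a combination of $r$-th powers of linear functionals.

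Concretely, I would first apply Lemma \ref{lemma1} to the hypothesis $d_{\cV_{sig}}(\mathbb{P}_0^h,\mathbb{P}_1^h)<\epsilon$ to get, for some absolute constant $c>0$,
\begin{equation*}
\sup_{\mathbf{a}\in\mathbb{R}^m}\sup_{t\in\mathbb{R}}\bigl|F_{\mathbf{a}^\top\mathbf{Z}_0}(t)-F_{\mathbf{a}^\top\mathbf{Z}_1}(t)\bigr|<c\epsilon,
\end{equation*}
where $F_{\mathbf{a}^\top\mathbf{Z}_s}$ is the CDF of $\mathbf{a}^\top\mathbf{Z}_s$. Since $\cZ$ is bounded there exists $M>0$ with $\cZ\subset[-M,M]^m$, so for any $\mathbf{a}\in\mathbb{R}^m$ with $\|\mathbf{a}\|_1\leq m$ the random variables $\mathbf{a}^\top\mathbf{Z}_0,\mathbf{a}^\top\mathbf{Z}_1$ are supported on $[-mM,mM]$. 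Using integration by parts together with the vanishing of $F_{\mathbf{a}^\top\mathbf{Z}_0}-F_{\mathbf{a}^\top\mathbf{Z}_1}$ at $\pm mM$, I obtain for every integer $r\geq 1$
\begin{equation*}
\bigl|\mathbb{E}[(\mathbf{a}^\top\mathbf{Z}_0)^r]-\mathbb{E}[(\mathbf{a}^\top\mathbf{Z}_1)^r]\bigr|=\Bigl|\int_{-mM}^{mM} r x^{r-1}\bigl(F_{\mathbf{a}^\top\mathbf{Z}_0}(x)-F_{\mathbf{a}^\top\mathbf{Z}_1}(x)\bigr)dx\Bigr|\leq 2(mM)^{r}c\epsilon.
\end{equation*}

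Next I invoke Lemma \ref{lemma3} with $u$ equal to the number of nonzero entries of $\mathbf{r}$ (wlog $u=m$, otherwise restrict to the relevant coordinates) and with $r=|\mathbf{r}|_1$ to write
\begin{equation*}
\mathbf{Z}^{\mathbf{r}}=z_1^{r_1}\cdots z_m^{r_m}=\sum_{i}\beta_i\Bigl(\sum_{j=1}^{m}\lambda_{ij}Z_j\Bigr)^{r},\qquad |\lambda_{ij}|\leq 1,\qquad \sum_i|\beta_i|\leq e^{(m-1)r}.
\end{equation*}
Because each coefficient vector $\mathbf{a}_i=(\lambda_{i1},\ldots,\lambda_{im})$ has $\|\mathbf{a}_i\|_1\leq m$, the projection bound from the previous step applies to each summand. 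Taking expectations under $\mathbb{P}_0^h$ and $\mathbb{P}_1^h$, subtracting, and using the triangle inequality gives
\begin{equation*}
\bigl|\mathbb{E}(\mathbf{Z}_0^{\mathbf{r}})-\mathbb{E}(\mathbf{Z}_1^{\mathbf{r}})\bigr|\leq \sum_i|\beta_i|\cdot 2(mM)^{r}c\epsilon\leq 2c\,\bigl(mM e^{m-1}\bigr)^{|\mathbf{r}|_1}\epsilon,
\end{equation*}
which is the claimed bound with $c_1=2c$ and $c_2=mMe^{m-1}$ (the case $|\mathbf{r}|_1=0$ is trivial).

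The main obstacle I anticipate is the second step: turning the Kolmogorov-closeness of Lemma \ref{lemma1}, which controls only differences of CDFs, into a bound on $r$-th moment differences that is uniform in $\mathbf{a}$ with an explicit, manageable dependence on $r$ and $\|\mathbf{a}\|_1$. The integration-by-parts argument resolves this cleanly provided the support of the projections is bounded by $mM$, which in turn requires the boundedness of $\cZ$ assumed in Section \ref{not_pre} and the unit bound on the $\lambda_{ij}$ coming from Lemma \ref{lemma3}; these together are exactly what keeps the constant $c_2$ finite and independent of $\mathbf{r}$.
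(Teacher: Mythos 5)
Your proof is correct and follows essentially the same route as the paper's: both reduce the monomial $\mathbf{Z}^{\mathbf{r}}$ to powers of linear projections via the polarization identity of Lemma \ref{lemma3}, bound the resulting one-dimensional moment differences using the uniform CDF closeness from Lemma \ref{lemma1} together with the boundedness of $\cZ$, and absorb the coefficient sum $e^{(m-1)|\mathbf{r}|_1}$ into the constant $c_2$. The only difference is cosmetic: where the paper converts CDF closeness into moment closeness by citing the Wasserstein-metric identity (Lipschitz rescaling of $x^{|\mathbf{r}|_1}$ plus the bound on $\int\left|F_0-F_1\right|dt$), you prove the same bound inline by integration by parts, which even saves a harmless factor of $|\mathbf{r}|_1$.
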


\begin{proof}
Since $\cZ$ is bounded, there exists $M>0$ such that $\cZ \subset [-M,M]^m$.
By Lemma \ref{lemma3}, there exist a real-valued sequence $\{\beta_i\}_{i=0}^{\infty}$ and a 2-dimensional array $\{\lambda_{ij}\}_{i\in\mathbb{N}_0,j\in\{1,\dots,m\}}$ that each element is bounded by $[-1,1]$ such that
\begin{align*}
    \sum_{i} \left[ \beta_i ( \sum_{j=1}^{m} \lambda_{ij} z_j )^{|\mathbf{r}|_1} \right] = z_{1}^{r_1} z_{2}^{r_2} \dots z_{m}^{r_m}
    \textup{ and }
    \sum_{i} |\beta_i| \leq e^{(m-1)|\mathbf{r}|_1}
\end{align*}
hold for all $\bz\in\cZ$. Thus, we have
\begin{align*}
    \left| \mathbbm{E}(\mathbf{Z}_0^{\mathbf{r}}) - \mathbbm{E}(\mathbf{Z}_1^{\mathbf{r}}) \right|
    \leq & \sum_{i} |\beta_i| \left| 
    \mathbbm{E}\left( ( \sum_{j=1}^{m} \lambda_{ij} \mathbf{Z}_{0j} )^{|\mathbf{r}|_1}\right)
    - \mathbbm{E}\left( ( \sum_{j=1}^{m} \lambda_{ij} \mathbf{Z}_{1j} )^{|\mathbf{r}|_1}\right)\right|  \\
    \leq & e^{(m-1) |\mathbf{r}|_1} \sup_{|\mathbf{a}|_{\infty}\leq1} \left| \mathbbm{E}\left( (\mathbf{a}^{\top} \mathbf{Z}_{0})^{|\mathbf{r}|_1} \right) 
    - \mathbbm{E}\left( (\mathbf{a}^{\top} \mathbf{Z}_{1})^{|\mathbf{r}|_1} \right) \right|.
\end{align*}
In addition, since  the function $s(\cdot):=(\cdot)^{|\mathbf{r}|_1}/({|\mathbf{r}|_1}(mM)^{{|\mathbf{r}|_1}-1})$ is
$1$-Lipschitz on $[-mM, mM]$, we have
\begin{align*}
    \sup_{|\mathbf{a}|_{\infty}\leq1} \left| \mathbbm{E}\left( (\mathbf{a}^{\top} \mathbf{Z}_{0})^{|\mathbf{r}|_1} \right) 
    - \mathbbm{E}\left( (\mathbf{a}^{\top} \mathbf{Z}_{1})^{|\mathbf{r}|_1} \right) \right|
    &\leq {|\mathbf{r}|_1}(mM)^{{|\mathbf{r}|_1}-1} \sup_{|\mathbf{a}|_{\infty}\leq1}
    \int_{t\in\mathbb{R}} \left| \mathbbm{P}\left( \mathbf{a}^{\top} \mathbf{Z}_{0} \leq t \right)
	- \mathbbm{P}\left( \mathbf{a}^{\top} \mathbf{Z}_{1} \leq t \right) \right| dt
\end{align*}
by the property of the Wasserstein metric \cite{gibbs2002choosing}. 
Now, by Lemma \ref{lemma1}, there exist $c_1,c_2>0$ such that 
\begin{align*}
    \left| \mathbbm{E}(\mathbf{Z}_0^{\mathbf{r}}) - \mathbbm{E}(\mathbf{Z}_1^{\mathbf{r}}) \right|
	\leq& c_1 {c_2}^{{|\mathbf{r}|_1}}
	\epsilon, 
\end{align*}
and the proof is done.
\end{proof}

\paragraph{Proof of Theorem \ref{thm3}}
By using Taylor's expansion, we can write
\begin{equation*}
    \begin{split}
        \phi_k(f(\mathbf{z}))=f(\mathbf{z})^{k} = \sum_{\bm{j} \in \mathbb{N}_0^m} \frac{a_{\bm{j}, k}}{\bm{j}!} \mathbf{z}^{\bm{j}},
    \end{split}
\end{equation*}
where $\bm{j} = (j_1 , \dots, j_m) \in \mathbb{N}_0^m$ and 
$a_{\bm{j}, k} = D^{\bm{j}}(f^k)|_{\bz=0}.$ 
For a given $\bm{j},$ we are going to inductively show that 
\begin{equation}\label{eq:induction}
    \begin{split}
        |a_{\bm{j},k}| \le \sqrt{\bm{j} !} ( kB )^{|\bm{j}|_1}
    \end{split}
\end{equation}
for all $k \in \mathbb{N}.$
The case when $k = 1$ is trivial due to the definition of $\mathcal{F}_{\cC^{\infty},B}.$ 
Suppose the equation (\ref{eq:induction}) holds when $k = N - 1 \ge 1$ for some $N.$
Then, for $k = N,$ 
$$ f(\mathbf{z})^{N} = f(\mathbf{z})^{N-1} f(\mathbf{z}) 
    = \left( \sum_{\bm{l} \in \mathbb{N}_0^m} \frac{a_{\bm{l}, N-1}}{\bm{l}!} \mathbf{z}^{\bm{l}} \right) \left( \sum_{\bm{m} \in \mathbb{N}_0^m} \frac{a_{\bm{m},1}}{\bm{m}!} \mathbf{z}^{\bm{m}} \right). $$
Thus, the absolute value of the $\mathbf{z}^{\bm{j}}$'s coefficient for $f(\mathbf{z})^{N}$ satisfies
\begin{align*}
    \left|\sum_{\bm{h} \in \mathbb{N}_0^m, \bm{h} \leq \bm{j}} \frac{a_{\bm{h}, k-1}}{\bm{h}!} \frac{a_{\bm{j}-\bm{h}, 1}}{(\bm{j}-\bm{h})!}\right|
    \le & \sum_{\bm{h} \in \mathbb{N}_0^m, \bm{h} \leq \bm{j}} \frac{ \sqrt{\bm{h} !} ((k-1)B)^{|\bm{h}|_1} }{\bm{h}!} \frac{ \sqrt{\bm{(j-h)} !} B^{|\bm{j}-\bm{h}|_1}}{(\bm{j}-\bm{h})!}\\
    = & \frac{B^{|\bm{j}|_1}}{\sqrt{\bm{j}!}} 
    \sum_{h_1 = 0}^{j_1} \dots \sum_{h_m = 0}^{j_m} \left(\sqrt{{j_1 \choose h_1}} (k-1)^{h_1}\right) \cdots \left(\sqrt{{j_m \choose h_m}} (k-1)^{h_m}\right)\\
    =  & \frac{B^{|\bm{j}|_1}}{\sqrt{\bm{j}!}} 
    \left(\sum_{h_1 = 0}^{j_1} \sqrt{{j_1 \choose h_1}} (k-1)^{h_1}\right) \cdots 
    \left(\sum_{h_m = 0}^{j_m}  \sqrt{{j_m \choose h_m}} (k-1)^{h_m}\right)\\
    \leq & \frac{B^{|\bm{j}|_1}}{\sqrt{\bm{j}!}} 
    \left(\sum_{h_1 = 0}^{j_1} {j_1 \choose h_1} (k-1)^{h_1}\right) \cdots 
    \left(\sum_{h_m = 0}^{j_m}  {j_m \choose h_m} (k-1)^{h_m}\right) \\
    =& \frac{\sqrt{\bm{j}!}(kB)^{|\bm{j}|_1}}{\bm{j}!},
\end{align*}
which implies that (\ref{eq:induction}) holds for all $\bm{j} \in \mathbb{N}_0^m$ and $k \in \mathbb{N}$.
Thus, we have
\begin{align*}
    \begin{split}
        \left|\int f(\mathbf{z})^{k} (d\mathbb{P}_{0}^h(\bz) - d\mathbb{P}_{1}^h(\bz)) \right|\le 
        &\sum_{\bm{j} \in \mathbb{N}_0^m} \left|\frac{a_{\bm{j}, k}}{\bm{j}!}\right| 
        \left|\int  \mathbf{z}^{\bm{j}} (d\mathbb{P}_{0}^h(\bz) - d\mathbb{P}_{1}^h(\bz))\right|\\
        \le& C_2 d_{\cV_{sig}}(\mathbb{P}_0^h,\mathbb{P}_1^h)
        \sum_{\bm{j} \in \mathbb{N}_0^m} \left|\frac{( C_1 kB )^{|\bm{j}|_1}}{\sqrt{\bm{j}!}}\right| \\
        \le& C_2 2^m \max(1, C_1 k B) d_{\cV_{sig}}(\mathbb{P}_0^h,\mathbb{P}_1^h)
        \sum_{\bm{j} \in \mathbb{N}_0^m} \left|\frac{( C_1 kB )^{2|\bm{j}|_1}}{\bm{j}!}\right| \\
        = & C_2 2^m \max(1, C_1 k B) d_{\cV_{sig}}(\mathbb{P}_0^h,\mathbb{P}_1^h)
        \exp( (C_1 kB)^{2} )
    \end{split}
\end{align*}
for some $C_1$, $C_2>0,$ where the second and third inequalities are due to Lemma \ref{lemma:inf} and $\sqrt{\bm{j}!} > (\lfloor \bm{j}/2 \rfloor)!,$ respectively, which completes the proof.
\qed


\subsection{Proof of Proposition \ref{prop:rbf}}

\paragraph{Proof of Proposition \ref{prop:rbf}}
    The proof is a slight modification of the proof of Theorem 4.48 in \citet{steinwart2008support}.
    Let $V_{\gamma} : L_2 (\cZ) \to \mathcal{H}_{\gamma}(\cZ)$ be the metric surjection defined by
    \begin{align*}
        V_{\gamma} g(\bm{z})=\frac{2^{m/2}}{\gamma^{m/2} \pi^{m/4}} \int_{\mathbb{R}^{m}} e^{-2 \gamma^{-2}\|\bm{z}-\bm{y}\|_{2}^{2}} g(\bm{y}) d \bm{y}
    \end{align*}
    for $g \in L_{2}\left(\mathbb{R}^{m}\right)$ and 
    $ \bm{z} \in \cZ.$
    Then for a fixed $f \in \mathcal{H}_{\gamma}(\cZ)$, there exists a $g \in L_{2}\left(\mathbb{R}^{m}\right)$ such that $V_{\gamma} g = f$ and
    $\| g \|_{L_{2}\left(\mathbb{R}^{m}\right)} \leq 2\| f \|_{\mathcal{H}_{\gamma}(\cZ)}$.
    
    For $\bm{r} \in \mathbb{N}_0^m$ and $\bm{z} \in \cZ$, we have
    \begin{align}
        |D^{\mathbf{r}} f(\bm{z})| 
        =& \frac{2^{m/2}}{\gamma^{m/2} \pi^{m/4}} 
         \left| D^{\mathbf{r}} \int_{\mathbb{R}^{m}} e^{-2 \gamma^{-2}\|\bm{z}-\bm{y}\|_{2}^{2}} g(\bm{y}) d \bm{y}\right| \nonumber \\
         \leq& \frac{2^{m/2}}{\gamma^{m/2} \pi^{m/4}} 
         \int_{\mathbb{R}^{m}} \left| D^{\mathbf{r}} e^{-2 \gamma^{-2}\|\bm{z}-\bm{y}\|_{2}^{2}} g(\bm{y})\right| d \bm{y}\nonumber \\
         \leq& \frac{2^{m/2}}{\gamma^{m/2} \pi^{m/4}}
         \| g \|_{L_{2}\left(\mathbb{R}^{m}\right)}
         \sqrt{\int_{\mathbb{R}^{m}} \left( D^{\mathbf{r}} e^{-2 \gamma^{-2}\|\bm{z}-\bm{y}\|_{2}^{2}} \right)^2 d \bm{y}}, \label{cor5.4pf}
    \end{align}
    where the last inequality holds by Hölder's inequality.
    
    Now, recall that for $r \in \mathbb{N}_0$, the $r$-th Hermite polynomial is defined by
    \begin{align}
        h_r(t) = (-1)^r e^{t^2} \frac{d^r}{dt^r} e^{-t^2}, t \in \mathbb{R}, \label{Hermite_def}
    \end{align}
    which has the following property
    \begin{align}
        \int_{-\infty}^{\infty} h_{r_1}(t) h_{r_2}(t) e^{-t^2} dt = 2^{r_1} {r_1}! \sqrt{\pi} \delta_{r_1 , r_2}, \label{Hermite_pro}
    \end{align}
    where $\delta_{r_1 , r_2} := \mathbb{I}(r_1 = r_2)$ is the Kronecker symbol.
    By (\ref{Hermite_def}), we have
    \begin{align*}
        \frac{d^{r}}{d t^{r}} e^{-2 \gamma^{-2}(t-s)^{2}}=\left(-\sqrt{2} \gamma^{-1}\right)^{r} e^{-2 \gamma^{-2}(t-s)^{2}} h_{r}\left(\sqrt{2} \gamma^{-1}(t-s)\right)
    \end{align*}
    and hence
    \begin{align}
        \int_{\mathbb{R}}\left|\frac{d^{r}}{d t^{r}} e^{-2 \gamma^{-2}(t-s)^{2}}\right|^{2} d s 
        =&\left(2 \gamma^{-2}\right)^{r} \int_{\mathbb{R}} e^{-4 \gamma^{-2}(t-s)^{2}} h_{r}^{2}\left(\sqrt{2} \gamma^{-1}(t-s)\right) d s \nonumber\\
        =& \left(2 \gamma^{-2}\right)^{r} \int_{\mathbb{R}} e^{-4 \gamma^{-2} s^{2}} h_{r}^{2}\left(\sqrt{2} \gamma^{-1} s\right) d s \nonumber\\
        =&\left(\sqrt{2} \gamma^{-1}\right)^{2 r-1} \int_{\mathbb{R}} e^{-2 s^{2}} h_{r}^{2}(s) d s \nonumber\\
        \leq& \sqrt{\pi} 2^{2 r-1 / 2} r ! \gamma^{1-2 r}, \label{cor5.4pf2}
    \end{align}
    where the last inequality holds by (\ref{Hermite_pro}).
    
    Since $e^{-2 \gamma^{-2}\|\bm{z}-\bm{y}\|_{2}^{2}}=\prod_{i=1}^{m} e^{-2 \gamma^{-2}\left(z_{i}-y_{i}\right)^{2}}$ holds, (\ref{cor5.4pf}) and (\ref{cor5.4pf2}) imply
    \begin{align*}
        |D^{\mathbf{r}} f(\bm{z})| 
        \leq \frac{2^{m/2 + 1}}{\gamma^{m/2} \pi^{m/4}} \| f \|_{\mathcal{H}_{\gamma}(\cZ)}
        \sqrt{\pi^{m/2} 2^{2|\bm{r}|_{1} - m/2} \bm{r}! \gamma^{m-2|\bm{r}|_{1}}},
    \end{align*}
    and thus the main statement holds if we let
    \begin{align*}
        B^{\prime} = \frac{2}{\gamma}\max\left(1, \frac{2^{m/2 + 1} B}{\gamma^{m/2} \pi^{m/4}} \sqrt{\pi^{m/2} 2^{- m/2} \gamma^{m}}   \right).
    \end{align*} \qed


\subsection{About linear prediction models}

For the prediction model $f$ being linear,
the sigmoid IPM can eusure the level of more general DP-fairness.
In fact, the original DP fairness of a prediction model
can be controlled by the sigmoid IPM, which is stated in the following theorem.


\begin{theorem}[Linear classifier] 
\label{thm2}
Suppose $\cF = \{ f :  f(\mathbf{\bz}) = \bm{a}^{\top} \mathbf{z}+b :  \bm{a} \in \mathbb{R}^{m}, b \in \mathbb{R} \}.$
	Then if $d_{\cV_{sig}}(\mathbb{P}_0^h, \mathbb{P}_1^h) < \epsilon$ for a given $\epsilon >0,$ 
	there exists a constant $c>0$ such that
    \begin{equation}
        \begin{split}
            \sup_{f \in \mathcal{F}}\sup_{\tau\in\mathbb{R}} | 
            \mathbbm{E} ( \mathbb{I}(f(\bZ_0)> \tau)) - \mathbbm{E} ( \mathbb{I}(f(\bZ_1)> \tau)) | 
            < c \epsilon
        \end{split}
    \end{equation}
    holds.
\end{theorem}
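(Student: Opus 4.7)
The plan is to observe that Theorem \ref{thm2} is essentially a direct restatement of Lemma \ref{lemma1} once we rewrite the linear-classifier event in the correct form. Specifically, for any $f(\bz)=\bm{a}^\top\bz+b\in\cF$ and any threshold $\tau\in\mathbb{R}$,
\begin{equation*}
\mathbb{E}\bigl(\mathbb{I}(f(\bZ_s)>\tau)\bigr)=\mathbb{P}\bigl(\bm{a}^\top\bZ_s>\tau-b\bigr)=1-\mathbb{P}\bigl(\bm{a}^\top\bZ_s\le\tau-b\bigr),
\end{equation*}
for $s\in\{0,1\}$. Therefore
\begin{equation*}
\bigl|\mathbb{E}(\mathbb{I}(f(\bZ_0)>\tau))-\mathbb{E}(\mathbb{I}(f(\bZ_1)>\tau))\bigr|=\bigl|\mathbb{P}(\bm{a}^\top\bZ_0\le t)-\mathbb{P}(\bm{a}^\top\bZ_1\le t)\bigr|
\end{equation*}
where $t:=\tau-b\in\mathbb{R}$.

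Next I would take the supremum over $f\in\cF$ and $\tau\in\mathbb{R}$, which amounts to taking the supremum over $\bm{a}\in\mathbb{R}^m$ and $t\in\mathbb{R}$ on the right-hand side, since $b$ and $\tau$ only appear through their difference $t$. This yields
\begin{equation*}
\sup_{f\in\cF}\sup_{\tau\in\mathbb{R}}\bigl|\mathbb{E}(\mathbb{I}(f(\bZ_0)>\tau))-\mathbb{E}(\mathbb{I}(f(\bZ_1)>\tau))\bigr|=\sup_{\bm{a}\in\mathbb{R}^m}\sup_{t\in\mathbb{R}}\bigl|\mathbb{P}(\bm{a}^\top\bZ_0\le t)-\mathbb{P}(\bm{a}^\top\bZ_1\le t)\bigr|.
\end{equation*}

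Finally, applying Lemma \ref{lemma1} with $\mathbb{P}_0=\mathbb{P}_0^h$ and $\mathbb{P}_1=\mathbb{P}_1^h$, the assumption $d_{\cV_{sig}}(\mathbb{P}_0^h,\mathbb{P}_1^h)<\epsilon$ yields a constant $c>0$ (independent of $\epsilon$) such that the right-hand side is bounded by $c\epsilon$, which is exactly the conclusion of Theorem \ref{thm2}. There is no genuine obstacle here: all the real work is already contained in Lemma \ref{lemma1}, and the only step needed is the elementary observation that linear half-space events indexed by $(\bm{a},b,\tau)$ reduce to half-line events for $\bm{a}^\top\bZ$ indexed by a single threshold $t=\tau-b$.
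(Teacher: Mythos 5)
Your proposal is correct and is essentially identical to the paper's own proof: the paper likewise rewrites $\mathbb{I}(f(\bz)>\tau)=\mathbb{I}(\bm{a}^{\top}\bz>\tau-b)$ and then invokes Lemma \ref{lemma1}. Your version simply spells out the complementation step $\mathbb{P}(\bm{a}^\top\bZ_s>t)=1-\mathbb{P}(\bm{a}^\top\bZ_s\le t)$ and the identification of the suprema, which the paper leaves implicit.
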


\begin{proof}[Proof of Theorem \ref{thm2}]
    For a given $f(\bz)=b+\bm{a}^{\top}\bz$ and $\tau$, we have
    \begin{equation*}
        \mathbb{I}(f(\bz)> \tau)=\mathbb{I}(\bm{a}^{\top}\bz> \tau-b).
    \end{equation*}
    Thus, by applying Lemma \ref{lemma1}, the proof is done.
\end{proof}


\newpage

\section{Experimental setup details}
\label{appendix:exp_setup}
\subsection{Dataset pre-processing}\label{appendix:dataset}

For \textit{Adult} and \textit{COMPAS}, we follow the standard pre-processing procedures conducted by \citet{2020alg}. 
As for \textit{Adult}, three variables, education, age, and race, are transformed to categorical variables. 
Specifically, we split the education variable into three categories ($<6$, $6 \le \textup{ and }\le 12$, $< 12$) and we binarize the age variable with a threshold of 70. 
The categorical values for race are repartitioned into two categories, white or non-white. 
And we change all of the categorical variables  to dummy variables. 


And for \textit{COMPAS}, we remove abnormal observations with the pre-specified criterion (days\_b\_screening\_arrest is between -30 and 30, is\_recid is not -1, c\_charge\_degree is not ``O'', and score\_text is not ``N/A'').
Like \textit{Adult}, we replace all the categorical variables to dummy variables.


Regarding \textit{Health}, we pre-process the data as is done in \url{https://github.com/truongkhanhduy95/Heritage-Health-Prize}. 


We summarize the information of three pre-processed datasets in Table \ref{table:dataset}. 

\begin{table}[ht]
	\caption{Descriptions of \textit{Adult}, \textit{COMPAS}, and \textit{Health} after pre-processing.
	}
	\label{table:dataset}
	\vskip 0.15in
	\begin{center}
		\begin{small}
			\begin{tabular}{l|c|c|c|}
				\toprule
				Dataset & Input dimension ($d$) & Representation dimension ($m$) & Sample size (train / val. / test) \\
				\midrule
				\midrule
				\textit{Adult} & 112 & 60 & 24130 / 6032 / 15060 \\
				\textit{COMPAS} & 10 & 8 & 3457 / 864 / 1851 \\
				\textit{Health} & 78 & 40 & 42861 / 14286 / 14287 \\
				\bottomrule
			\end{tabular}
		\end{small}
	\end{center}
	\vskip -0.1in
\end{table}

\subsection{Implementation details}\label{appendix:imp}

The adversarial network is updated two times per each update of the encoder and prediction model (or decoder). 
All the reported results in our paper are achieved by considering various values of $\lambda$. 
We also standardize input vectors for unsupervised LFR because the reconstruction error is well-matched with standardized input vectors rather than raw inputs.
For implementation of other baselines, we refer to the publicly available source codes. 
We re-implement LAFTR with the \texttt{Pytorch} version of LAFTR in \url{https://github.com/VectorInstitute/laftr}. 
And for Fair-Mixup and Fair-Reg, we use the official source codes of Fair-Mixup in \url{https://github.com/chingyaoc/fair-mixup}.

\subsection{Pseudo-code of the sIPM-LFR algorithm}
\label{appendix:alg}

{In this subsection, we provide the sIPM-LFR algorithm  in Algorithm \ref{alg:sipm}.
For unsupervised LFR, we first train the encoder and solve the downstream tasks while fixing the encoder.
The \texttt{Pytorch} implemention of the sIPM-LFR is publicly available in \url{https://github.com/kwkimonline/sIPM-LFR}.
}

\begin{algorithm}[H]
\caption{Algorithm of the sIPM-LFR.}
\label{alg:sipm}
    \begin{algorithmic}[1]
        \REQUIRE: $\texttt{mode} \in \{ \textup{unsup},  \textup{sup} \}:$ the learning setup.
        \REQUIRE $\eta$: parameter of the encoder $h$, $\omega$: parameter of the decoder (if \texttt{mode} == unsup) or prediction function (if \texttt{mode} == sup). 
        \REQUIRE $\psi = [ \theta, \mu ]:$ parameter of the sigmoid discriminator.
        \REQUIRE 
        $\lambda:$ regularization parameter.
        $(\textup{lr},\textup{lr}_{\textup{adv}}):$ two learning rates.
        $(T,T_{\textup{adv}})$: two update numbers.
        $n_{\textup{mb}}:$ mini-batch size.
        \FOR{$i = 1, \cdots, T$}
        \STATE Sample a batch $(\mathbf{x}_{i}, y_{i}, s_{i})_{i=1}^{n_{\textup{mb}}}$ from the training dataset.
        \newline
        \IF{\texttt{mode} == unsup}
        \STATE $\mathcal{L}_{\textup{unsup}}(\eta, \omega) = \frac{1}{n_{\textup{mb}}} \sum_{i=1}^{n_{\textup{mb}}} || \mathbf{x}_{i} - f_{\omega} ( h_{\eta} (\mathbf{x}_{i}) ) ||^{2}$ \hfill \# Compute the reconstruction loss.
        \ELSE
        \STATE $\mathcal{L}_{\textup{sup}}(\eta, \omega) = \frac{1}{n_{\textup{mb}}} \sum_{i=1}^{n_{\textup{mb}}} \textup{cross-entropy}(y_{i}, f_{\omega}(h_{\eta}(\mathbf{x}_{i})) $ \hfill \# Compute the cross-entropy loss.
        \ENDIF
        \newline
        \STATE $\mathcal{L}_{\textup{fair}}(\eta, \psi) = \left| \frac{1}{\sum_{i=1}^{n_{\textup{mb}}} \mathbbm{I}(s_{i} = 0)} \sum_{i : s_{i} = 0} \sigma(\theta^{\top} h_{\eta}(\mathbf{x}_{i}) + \mu) - \frac{1}{\sum_{i=1}^{n_{\textup{mb}}} \mathbbm{I}(s_{i} = 1)} \sum_{i : s_{i} = 1} \sigma(\theta^{\top} h_{\eta}(\mathbf{x}_{i}) + \mu) \right|$ \\ \hfill \# Compute the fair loss.
        \STATE $\mathcal{L}(\eta, \omega, \psi) = \mathcal{L}_{\texttt{mode}}(\eta, \omega) + \lambda \mathcal{L}_{\textup{fair}}(\eta, \psi)$ \hfill \# Compute the total loss. 
        \newline
        \FOR{$t = 1, \cdots, T_{\textup{adv}}$} 
        \STATE $\psi \leftarrow \psi + \textup{lr}_{\textup{adv}}\cdot\nabla_{\psi} \mathcal{L} (\eta, \omega, \psi)$ \hfill \# Update $\psi$ for $T_{\textup{adv}}$ times.
        \ENDFOR
        \STATE $\eta \leftarrow \eta - \textup{lr} \cdot\nabla_{\eta} \mathcal{L} (\eta, \omega, \psi)
        \newline
        \omega \leftarrow \omega - \textup{lr}\cdot\nabla_{\omega} \mathcal{L} (\eta, \omega, \psi)$ \hfill \# Update $\eta$ and $\omega$.
        \ENDFOR
        \newline
        \textbf{Return} $\eta$ and $\omega$ 
    \end{algorithmic}
\end{algorithm}

\section{Fairness measures}\label{appendix:fairmeasures} 
For given a encoder $h$, a prediction model $f$, and a threshold $\tau\in\mathbb{R}$, let $\hat{Y}_\tau=\mathbb{I}(f\circ h(\bm{X},S)>\tau)$ be the predicted label of a random input vector $(\bX,S)$. 
In this paper, we consider four types of DP-fairness measures - 1) original DP, 2) mean DP, 3) strong DP, and 4) variance of DP.
The precise formulas of these fairness measures are provided in Table \ref{table:fairmeasures}.


\begin{table}[ht]
	\caption{Formulas of the four DP-fairness measures. 
   }
	\label{table:fairmeasures}
	\vskip 0.15in
	\begin{center}
		\begin{small}
			\begin{tabular}{l|c}
				\toprule
				Fairness measure & Formula \\
				\midrule
				\midrule
				$\Delta \texttt{DP}$ & $ | \mathbbm{P} ( \hat{Y}_0 = 1 | S = 0 ) - \mathbbm{P} ( \hat{Y}_0 = 1 | S = 1 ) |$ \\
				\midrule
				$\Delta \texttt{MDP}$ & $\left| \mathbbm{E} \left( f\circ h(\mathbf{X},S) | S = 0 \right) - \mathbbm{E} \left( f\circ h(\mathbf{X},S) | S = 1 \right) \right|$ \\
				\midrule
				$\Delta \texttt{SDP}$ & $ \mathbbm{E}_{\tau} ( | \mathbbm{P} ( \hat{Y}_{\tau} = 1 | S = 0 ) - \mathbbm{P} ( \hat{Y}_{\tau} = 1 | S = 1 ) | )$ \\
				\midrule
				$\Delta \texttt{VDP}$ & $\left| \mathbf{Var} \left( f\circ h(\mathbf{X},S) | S = 0 \right) - \mathbf{Var} \left( f\circ h(\mathbf{X},S) | S = 1 \right) \right|$  \\
				\bottomrule
			\end{tabular}
		\end{small}
	\end{center}
	\vskip -0.1in
\end{table}

\newpage

\section{Additional experiments}\label{appendix:exp}

\subsection{Supervised LFR}\label{appendix:sup}

We draw the Pareto-front lines between $\Delta \texttt{SDP}$ and \texttt{acc} in Figure \ref{fig:appensup}. 

\begin{figure}[ht]
\vskip 0.2in
\begin{center}
\centerline{
    \includegraphics[width=0.24\textwidth]{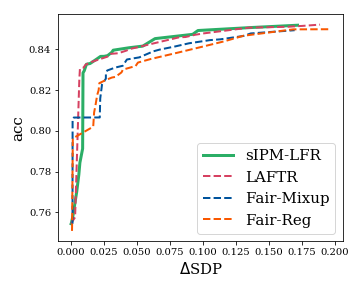}
    \includegraphics[width=0.24\textwidth]{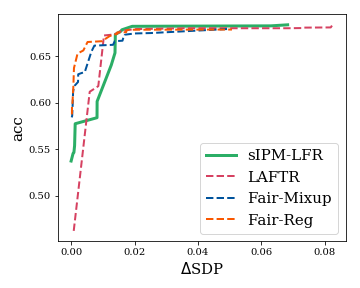}
    \includegraphics[width=0.24\textwidth]{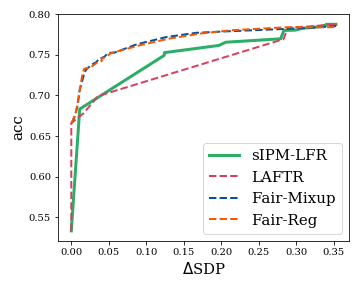}
}
\caption{Supervised LFR: Pareto-front lines between $\Delta \texttt{SDP}$ and \texttt{acc} on the test data
of (left) \textit{Adult}, (center) \textit{COMPAS}, and (right) \textit{Health}.}
\label{fig:appensup}
\end{center}
\vskip -0.2in
\end{figure}

\subsection{Unsupervised LFR}
\label{app}


\paragraph{Additional datasets}

{
Recently, there have been some discussions on the validity of widely-used datasets  for fair AI \cite{https://doi.org/10.48550/arxiv.2108.04884, bao2021its}.
Furthermore, the three tabular datasets analyzed in the main paper have relatively small dimensions.
Under this background, we assess the sIPM-LFR on two additional datasets: \textit{ACSIncome} \textit{Toxicity}.
\begin{itemize}
    \item \textit{ACSIncome} \cite{https://doi.org/10.48550/arxiv.2108.04884}: This dataset is a pre-processed version of \textit{Adult} dataset. Differing from \textit{Adult}, \textit{ACSIncome} only includes individuals above the age of 16, with working hours of at least 1hour/week in the past year, and with income of at least \$100.
    We perform the sIPM-LFR for unsupervised LFR compared to the LAFTR on \textit{ACSIncome} dataset and provide the results in Figure \ref{appendixfig:unsup_acsincome}.
    \item \textit{Toxicity} \footnote{https://www.kaggle.com/c/jigsaw-unintended-bias-in-toxicity-classification}: 
    This dataset is a language dataset (English) containing a large number of Wikipedia comments with ratings of toxicity.
    For input vectors, we use the extracted representations from the encoder of a pre-trained BERT (BERT-base-uncased) \cite{devlin-etal-2019-bert} provided by \texttt{huggingface}\footnote{https://huggingface.co/bert-base-uncased}.
    For class labels, we annotate labels $1$ if the toxicity rating is over $0.5$ and $0$ otherwise. We use the encoder network with two hidden layers and 
    the four classifiers used in Figure \ref{fig:unsup} except the 2-Sigmoid-NN.
    We do not use the 2-Sigmoid-NN due to its gradient vanishing problem.
    We perform the sIPM-LFR for unsupervised LFR compared to the LAFTR on \textit{Toxicity} dataset and provide the Pareto-front lines in Figure \ref{appendixfig:unsup_toxicity}.
\end{itemize}
As can be seen in Figures \ref{appendixfig:unsup_acsincome} and \ref{appendixfig:unsup_toxicity}, we observe similar results to those in Figure \ref{fig:unsup} for the two additional datasets in that the sIPM-LFR is better than the LAFTR in most cases.
}

\begin{figure*}[ht]
\vskip 0.2in
\begin{center}
\centerline{
    \includegraphics[width=0.19\textwidth]{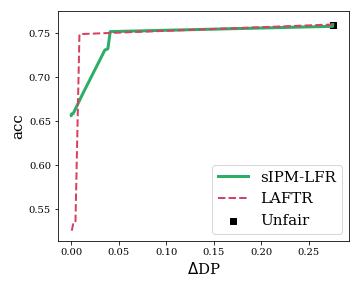}
    \includegraphics[width=0.19\textwidth]{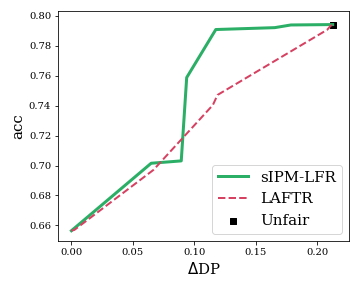}
    \includegraphics[width=0.19\textwidth]{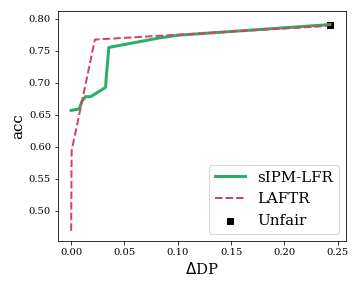}
    \includegraphics[width=0.19\textwidth]{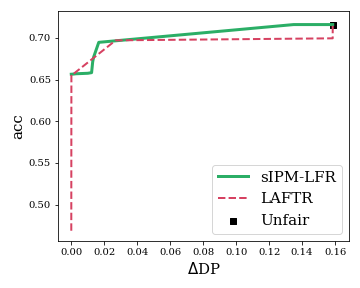}
    \includegraphics[width=0.19\textwidth]{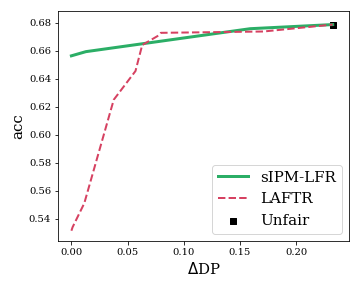}
}
\centerline{
    \includegraphics[width=0.19\textwidth]{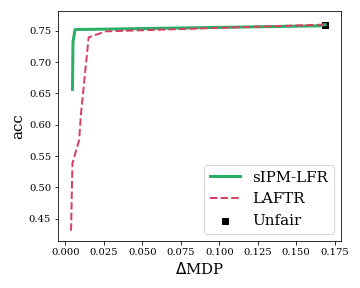}
    \includegraphics[width=0.19\textwidth]{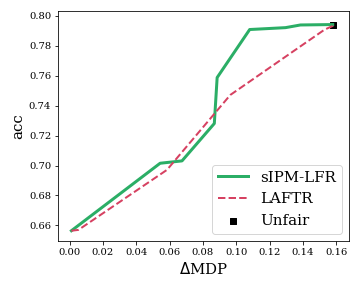}
    \includegraphics[width=0.19\textwidth]{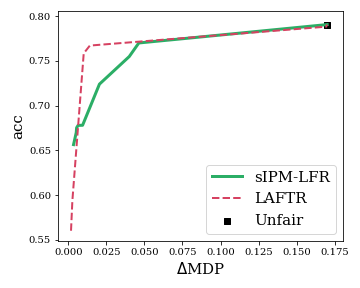}
    \includegraphics[width=0.19\textwidth]{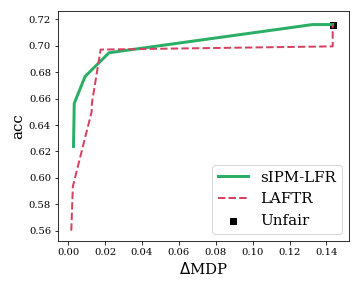}
    \includegraphics[width=0.19\textwidth]{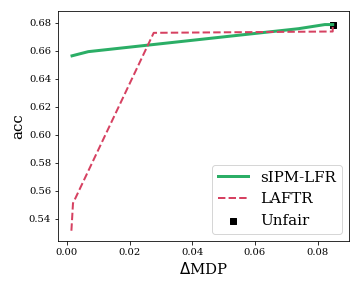}
}
\centerline{
    \includegraphics[width=0.19\textwidth]{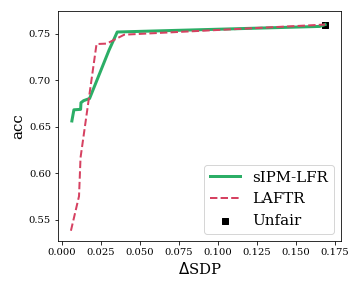}
    \includegraphics[width=0.19\textwidth]{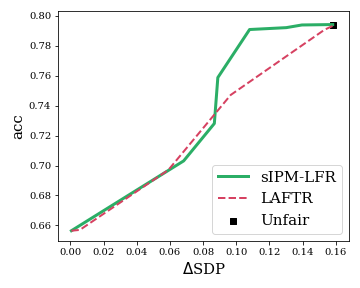}
    \includegraphics[width=0.19\textwidth]{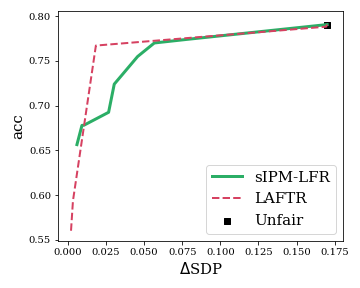}
    \includegraphics[width=0.19\textwidth]{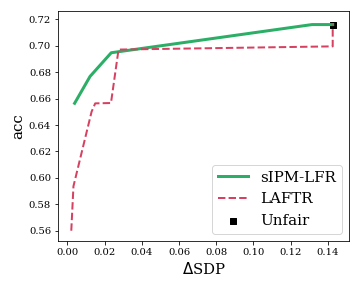}
    \includegraphics[width=0.19\textwidth]{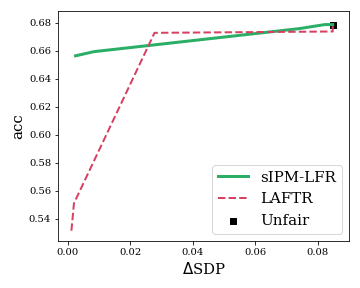}
}
\centerline{
    \includegraphics[width=0.19\textwidth]{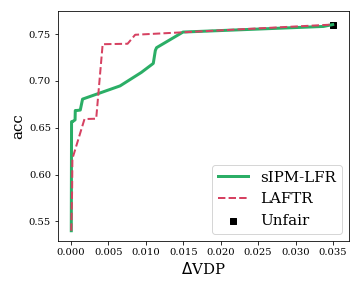}
    \includegraphics[width=0.19\textwidth]{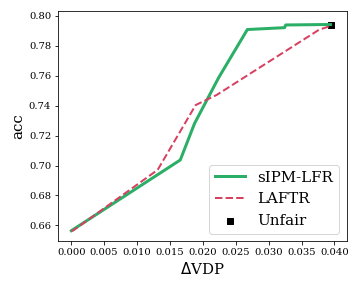}
    \includegraphics[width=0.19\textwidth]{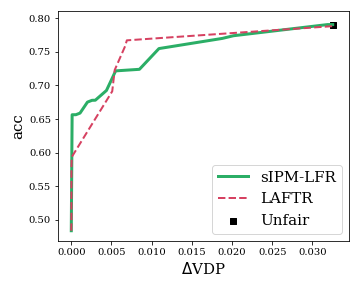}
    \includegraphics[width=0.19\textwidth]{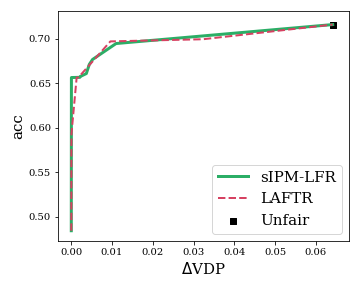}
    \includegraphics[width=0.19\textwidth]{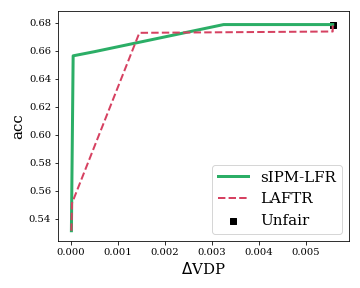}
}
\caption{Unsupervised LFR: Pareto-front lines between $\{ \Delta \texttt{DP}, \Delta \texttt{MDP}, \Delta \texttt{SDP}, \Delta \texttt{VDP} \}$ and \texttt{acc} on the test data of \textit{ACSIncome}.
(left to right) We consider the five prediction models: linear, RBF-SVM, 1-LeakyReLU-NN, 1-Sigmoid-NN, and 2-Sigmoid-NN.}
\label{appendixfig:unsup_acsincome}
\end{center}
\vskip -0.2in
\end{figure*}

\begin{figure*}[ht]
\vskip 0.2in
\begin{center}
\centerline{
    \includegraphics[width=0.19\textwidth]{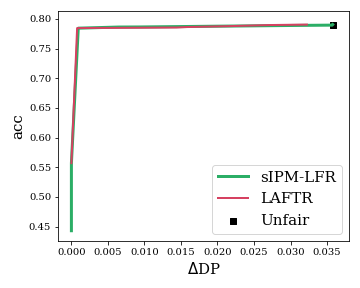}
    \includegraphics[width=0.19\textwidth]{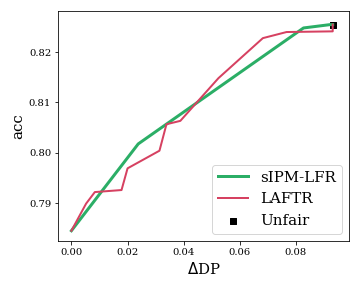}
    \includegraphics[width=0.19\textwidth]{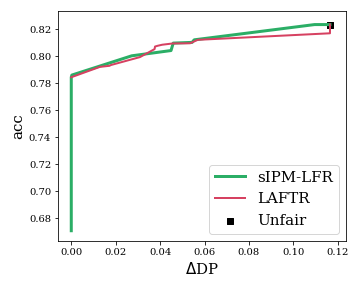}
    \includegraphics[width=0.19\textwidth]{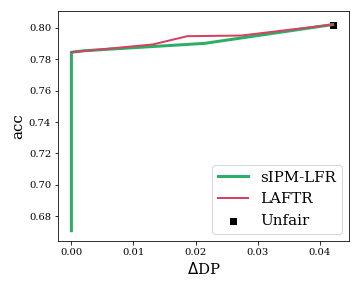}
}
\centerline{
    \includegraphics[width=0.19\textwidth]{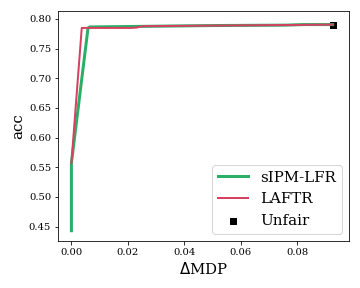}
    \includegraphics[width=0.19\textwidth]{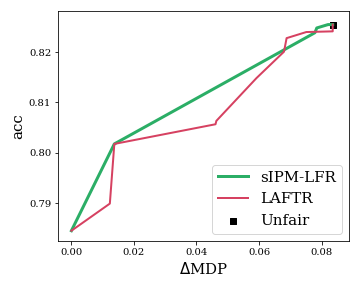}
    \includegraphics[width=0.19\textwidth]{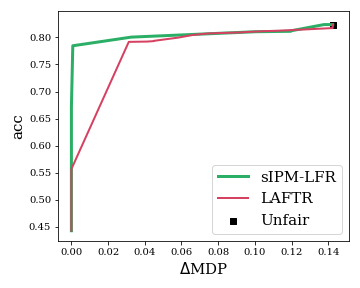}
    \includegraphics[width=0.19\textwidth]{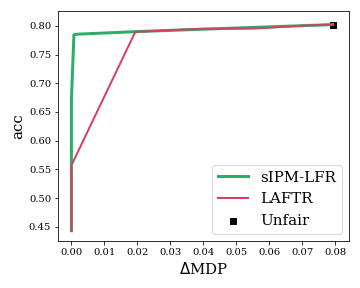}
}
\centerline{
    \includegraphics[width=0.19\textwidth]{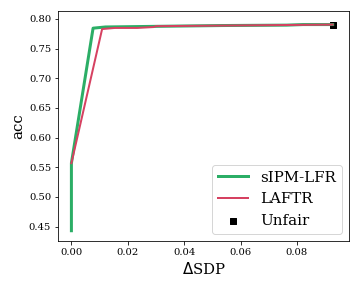}
    \includegraphics[width=0.19\textwidth]{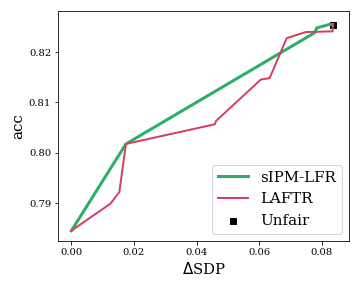}
    \includegraphics[width=0.19\textwidth]{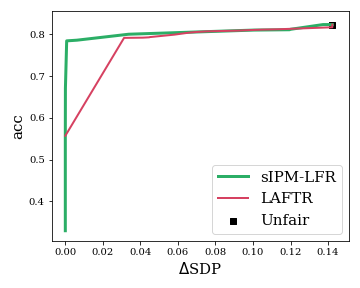}
    \includegraphics[width=0.19\textwidth]{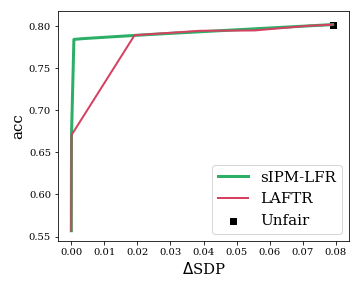}
}
\centerline{
    \includegraphics[width=0.19\textwidth]{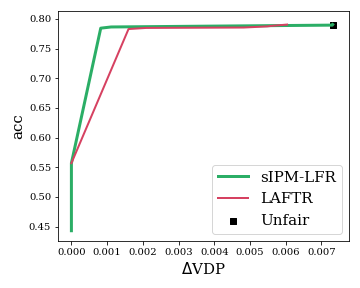}
    \includegraphics[width=0.19\textwidth]{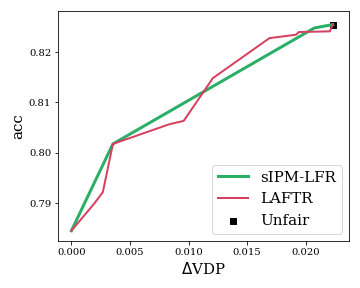}
    \includegraphics[width=0.19\textwidth]{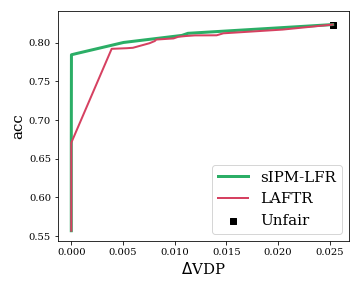}
    \includegraphics[width=0.19\textwidth]{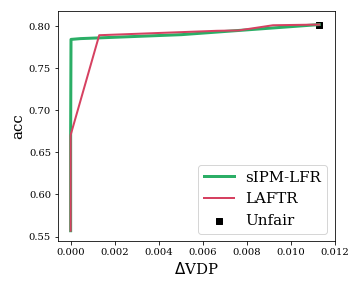}
}
\caption{Unsupervised LFR: Pareto-front lines between $\{ \Delta \texttt{DP}, \Delta \texttt{MDP}, \Delta \texttt{SDP}, \Delta \texttt{VDP} \}$ and \texttt{acc} on the test data of \textit{Toxicity}.
(left to right) We consider the four prediction models: linear, RBF-SVM, 1-LeakyReLU-NN, and 1-Sigmoid-NN.}
\label{appendixfig:unsup_toxicity}
\end{center}
\vskip -0.2in
\end{figure*}

\paragraph{Trade-offs between $\{ \Delta \texttt{MDP}, \Delta \texttt{SDP}, \Delta \texttt{VDP} \}$ and \texttt{acc}.}

We provide the Pareto-front lines (Figure \ref{appendixfig:unsup_adult}, \ref{appendixfig:unsup_compas}, and \ref{appendixfig:unsup_health}) for more measures of fairness: $\Delta \texttt{MDP}, \Delta \texttt{SDP}, \Delta \texttt{VDP}$. We confirm that the results are similar to Figure \ref{fig:unsup}.

\paragraph{Visualization of learned representations}
Figure \ref{fig:tsne} visualizes the representation distributions for each sensitive group derived by the sIPM-LFR
with various regularization parameters. 
We can observe that the larger $\lambda$ becomes, the more fair the encoded representation is. That is, we can control the
fairness of representation (and thus fairness of the final
prediction model) nicely by choosing $\lambda$ accordingly.


\paragraph{Simulation for \textit{Adult} with artificial $Y$}

We verify our method's superiority on unsupervised learning by an additional downstream classification task with artificial labels. 
We consider \textit{Adult} and the artificial labels are generated as follows. 
We first train the encoder $h$ and decoder $f_D$ only with the reconstruction loss. 
And we draw an $m$-dimensional random vector $\gamma$ from $\mathcal{N}(0_{m}, 2 I_{m})$ and fix it until the label generation process ends. 
Then, for each input sample $(\bx,s)$, we sample a random vector $\epsilon\sim\mathcal{N}(0_{m}, 2 I_{m})$ and generate its artificial label as $\mathbb{I}(\gamma^{\top}h(\bx,s)+\epsilon)$. 
We analyze \textit{Adult} with the artificial labels by comparing our method and the LAFTR, whose results are depicted in Figure \ref{fig:art1}. 
We utilize the linear prediction model and consider three DP-fairness measures, $\Delta \texttt{DP}, \Delta \texttt{MDP}, \Delta \texttt{SDP}$. 
Figure \ref{fig:art1} shows that our method achieves consistently better trade-off results between the accuracy and DP measures.


\begin{figure*}[ht]
\vskip 0.2in
\begin{center}
\centerline{
    \includegraphics[width=0.19\textwidth]{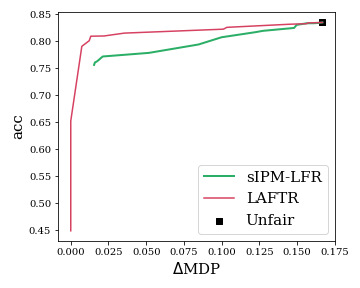}
    \includegraphics[width=0.19\textwidth]{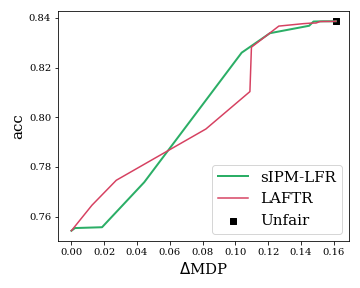}
    \includegraphics[width=0.19\textwidth]{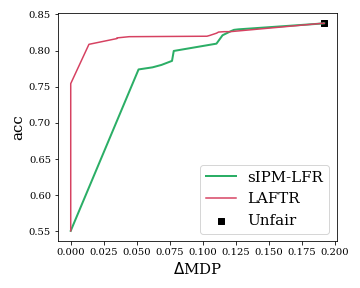}
    \includegraphics[width=0.19\textwidth]{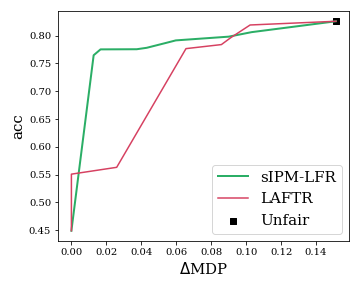}
    \includegraphics[width=0.19\textwidth]{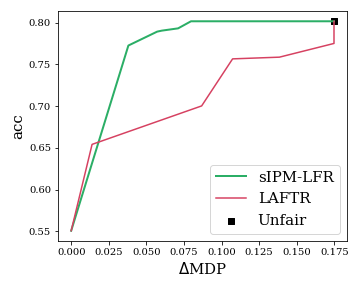}
}
\centerline{
    \includegraphics[width=0.19\textwidth]{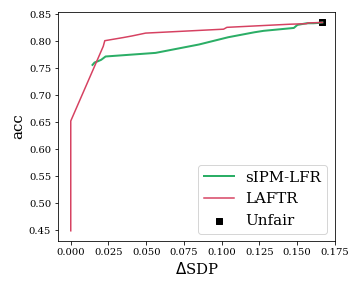}
    \includegraphics[width=0.19\textwidth]{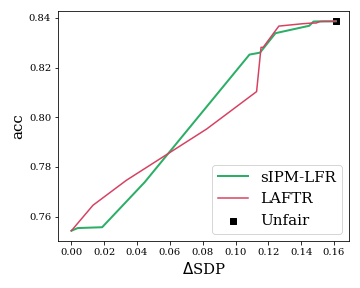}
    \includegraphics[width=0.19\textwidth]{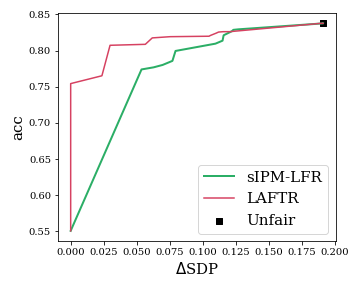}
    \includegraphics[width=0.19\textwidth]{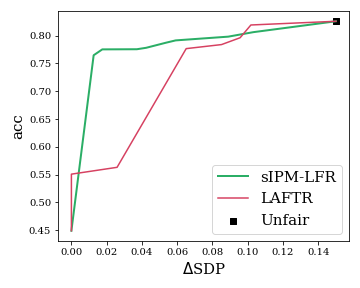}
    \includegraphics[width=0.19\textwidth]{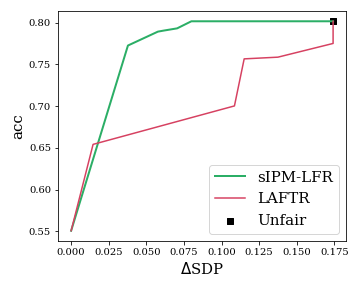}
}
\centerline{
    \includegraphics[width=0.19\textwidth]{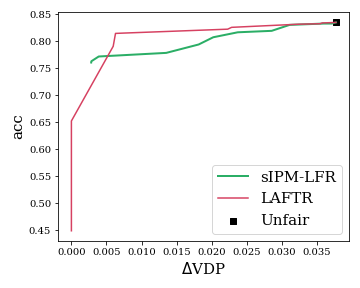}
    \includegraphics[width=0.19\textwidth]{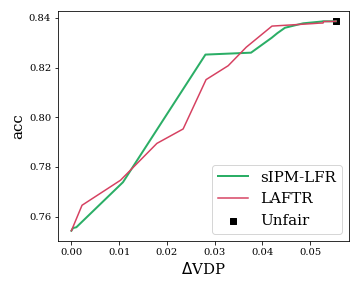}
    \includegraphics[width=0.19\textwidth]{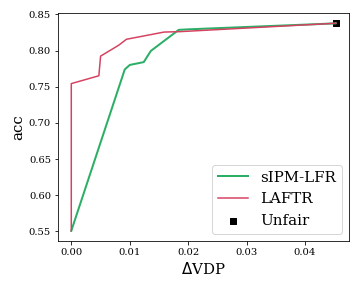}
    \includegraphics[width=0.19\textwidth]{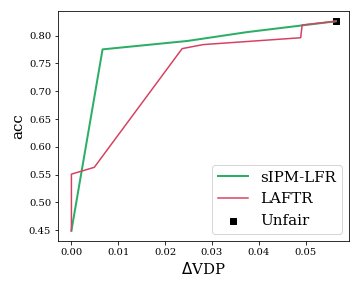}
    \includegraphics[width=0.19\textwidth]{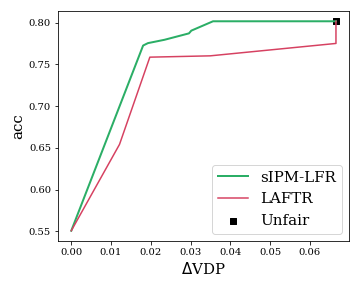}
}
\caption{Unsupervised LFR: Pareto-front lines between $\{ \Delta \texttt{MDP}, \Delta \texttt{SDP}, \Delta \texttt{VDP} \}$ and \texttt{acc} on the test data of \textit{Adult}.
(left to right) We consider the five prediction models: linear, RBF-SVM, 1-LeakyReLU-NN, 1-Sigmoid-NN, and 2-Sigmoid-NN.}
\label{appendixfig:unsup_adult}
\end{center}
\vskip -0.2in
\end{figure*}

\begin{figure*}[ht]
\vskip 0.2in
\begin{center}
\centerline{
    \includegraphics[width=0.19\textwidth]{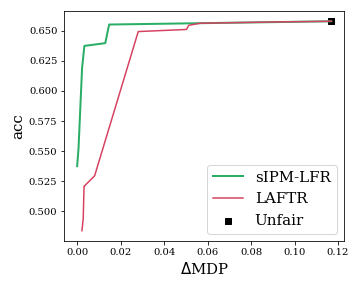}
    \includegraphics[width=0.19\textwidth]{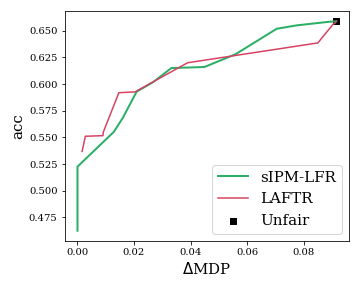}
    \includegraphics[width=0.19\textwidth]{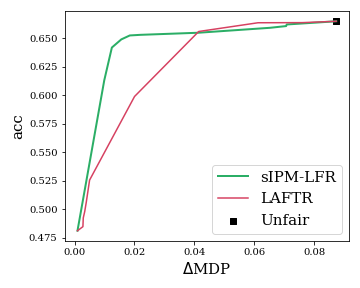}
    \includegraphics[width=0.19\textwidth]{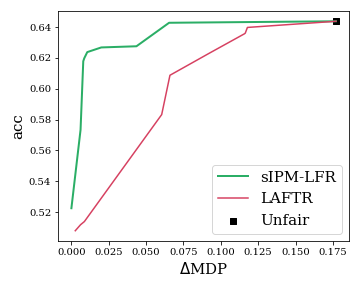}
    \includegraphics[width=0.19\textwidth]{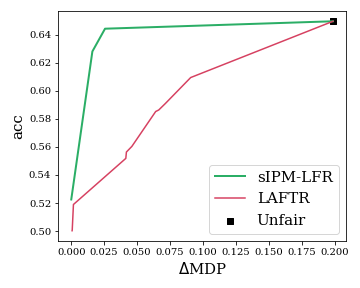}
}
\centerline{
    \includegraphics[width=0.19\textwidth]{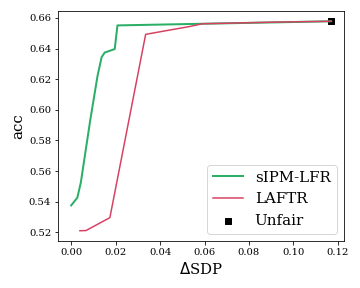}
    \includegraphics[width=0.19\textwidth]{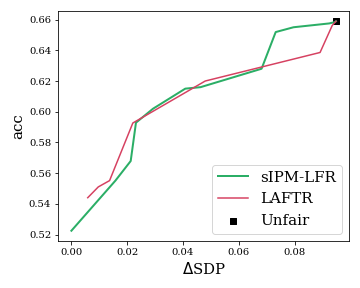}
    \includegraphics[width=0.19\textwidth]{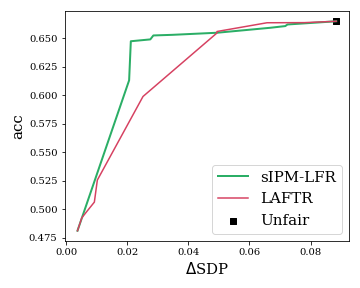}
    \includegraphics[width=0.19\textwidth]{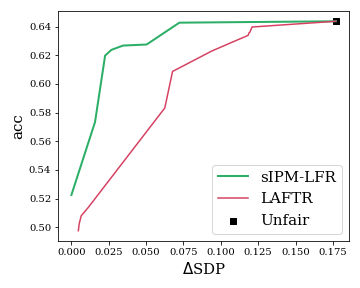}
    \includegraphics[width=0.19\textwidth]{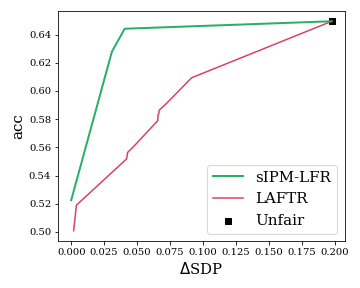}
}
\centerline{
    \includegraphics[width=0.19\textwidth]{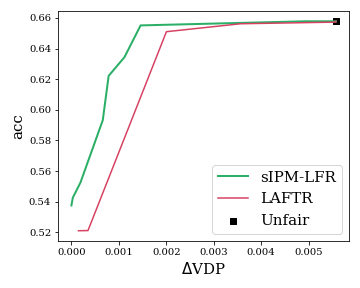}
    \includegraphics[width=0.19\textwidth]{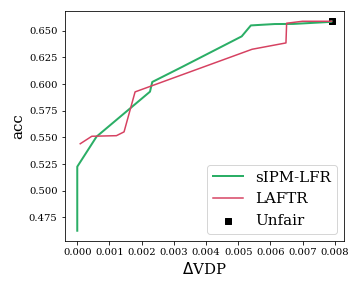}
    \includegraphics[width=0.19\textwidth]{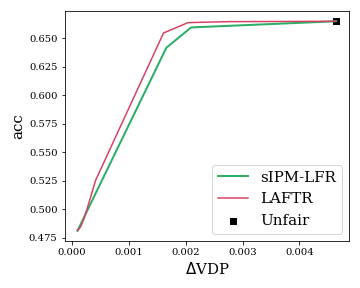}
    \includegraphics[width=0.19\textwidth]{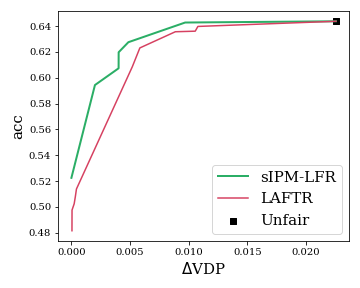}
    \includegraphics[width=0.19\textwidth]{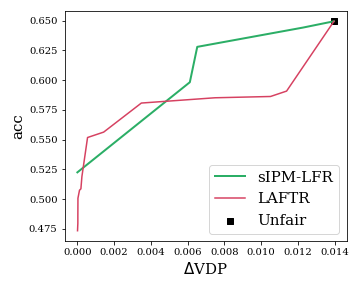}
}
\caption{Unsupervised LFR: Pareto-front lines between $\{ \Delta \texttt{MDP}, \Delta \texttt{SDP}, \Delta \texttt{VDP} \}$ and \texttt{acc} on the test data of \textit{COMPAS}.
(left to right) We consider the five prediction models: linear, RBF-SVM, 1-LeakyReLU-NN, 1-Sigmoid-NN, and 2-Sigmoid-NN.}
\label{appendixfig:unsup_compas}
\end{center}
\vskip -0.2in
\end{figure*}

\begin{figure*}[ht]
\vskip 0.2in
\begin{center}
\centerline{
    \includegraphics[width=0.19\textwidth]{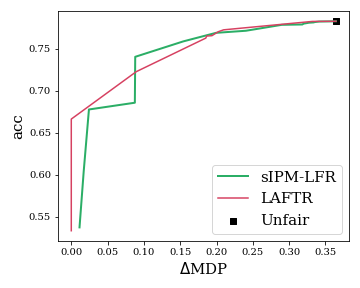}
    \includegraphics[width=0.19\textwidth]{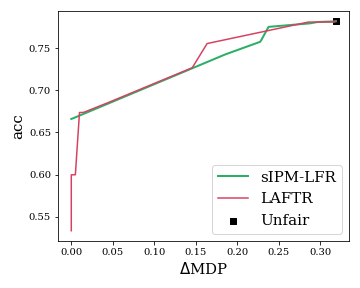}
    \includegraphics[width=0.19\textwidth]{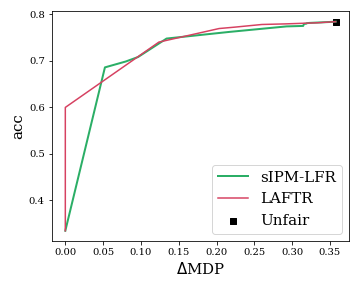}
    \includegraphics[width=0.19\textwidth]{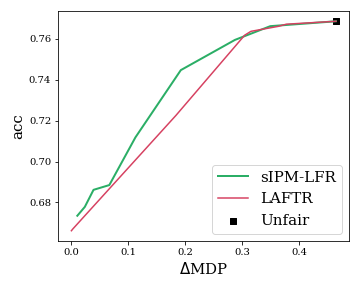}
    \includegraphics[width=0.19\textwidth]{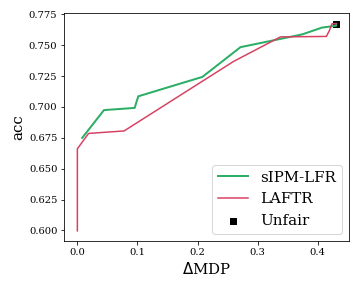}
}
\centerline{
    \includegraphics[width=0.19\textwidth]{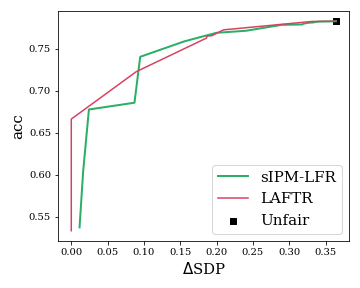}
    \includegraphics[width=0.19\textwidth]{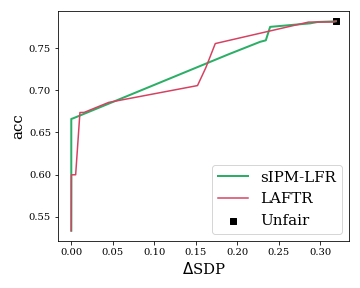}
    \includegraphics[width=0.19\textwidth]{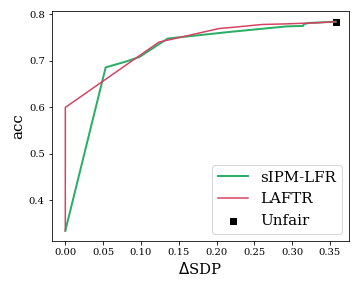}
    \includegraphics[width=0.19\textwidth]{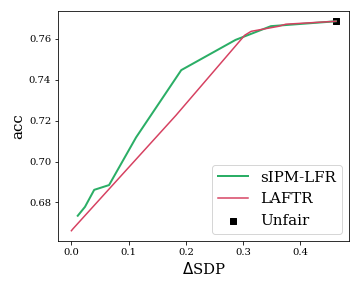}
    \includegraphics[width=0.19\textwidth]{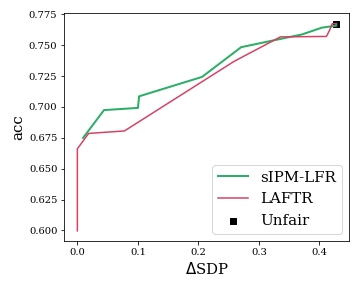}
}
\centerline{
    \includegraphics[width=0.19\textwidth]{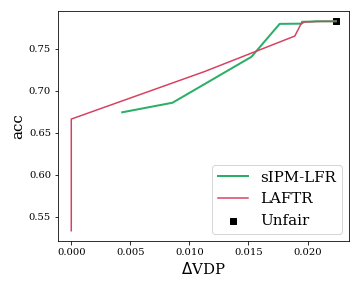}
    \includegraphics[width=0.19\textwidth]{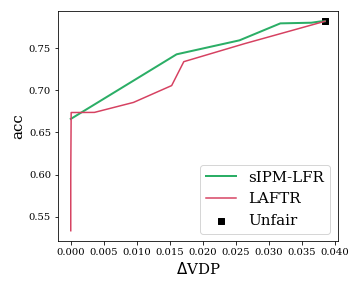}
    \includegraphics[width=0.19\textwidth]{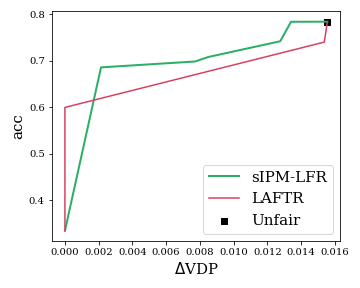}
    \includegraphics[width=0.19\textwidth]{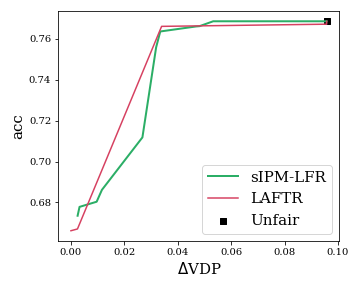}
    \includegraphics[width=0.19\textwidth]{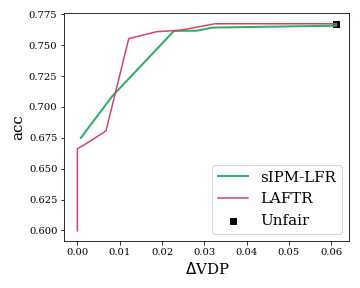}
}
\caption{Unsupervised LFR: Pareto-front lines between $\{ \Delta \texttt{MDP}, \Delta \texttt{SDP}, \Delta \texttt{VDP} \}$ and \texttt{acc} on the test data of \textit{Health}.
(left to right) We consider the five prediction models: linear, RBF-SVM, 1-LeakyReLU-NN, 1-Sigmoid-NN, and 2-Sigmoid-NN.}
\label{appendixfig:unsup_health}
\end{center}
\vskip -0.2in
\end{figure*}

\begin{figure}[ht]
\vskip 0.2in
\begin{center}
\centerline{
    \includegraphics[width=0.24\textwidth]{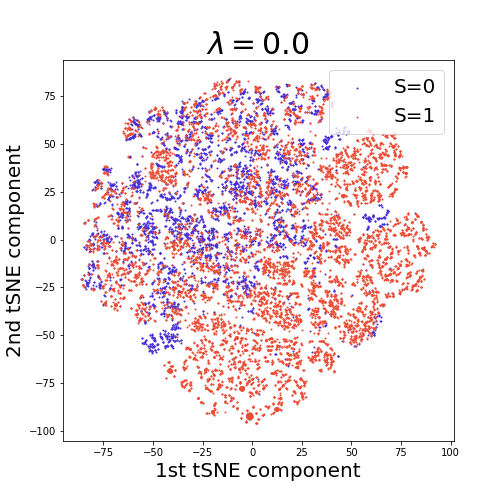}
    \includegraphics[width=0.24\textwidth]{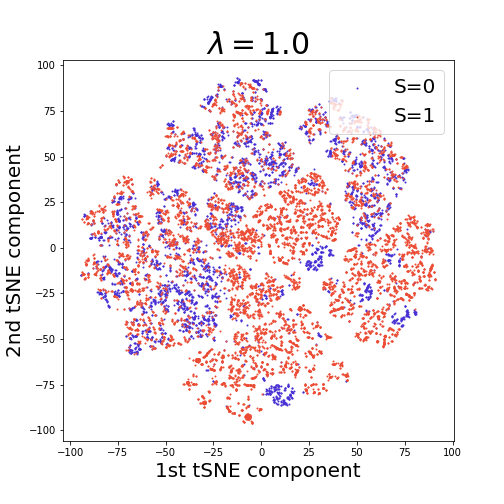}
    \includegraphics[width=0.24\textwidth]{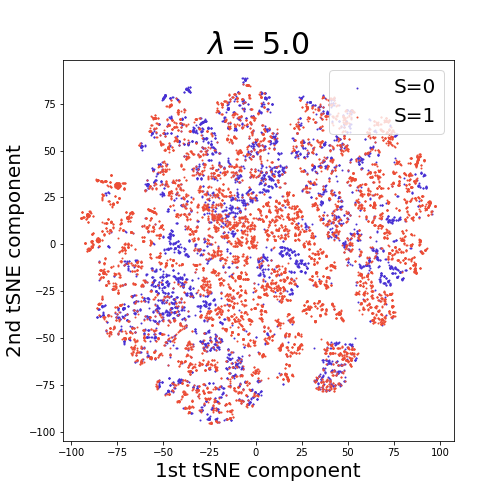}
    \includegraphics[width=0.24\textwidth]{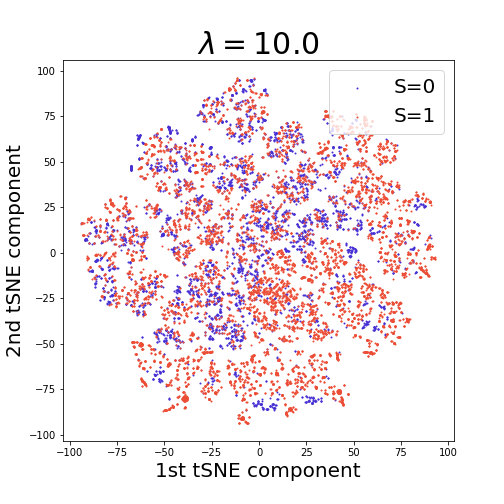}
}
\centerline{
    \includegraphics[width=0.24\textwidth]{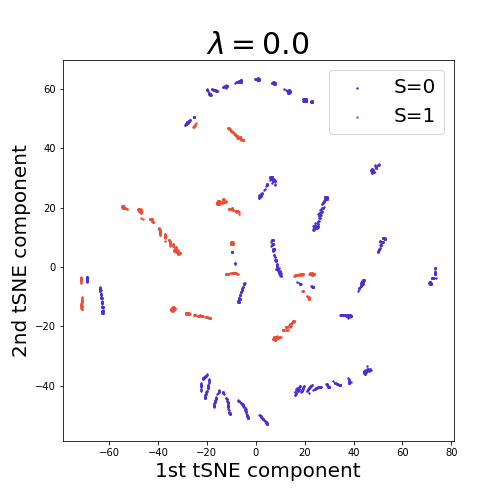}
    \includegraphics[width=0.24\textwidth]{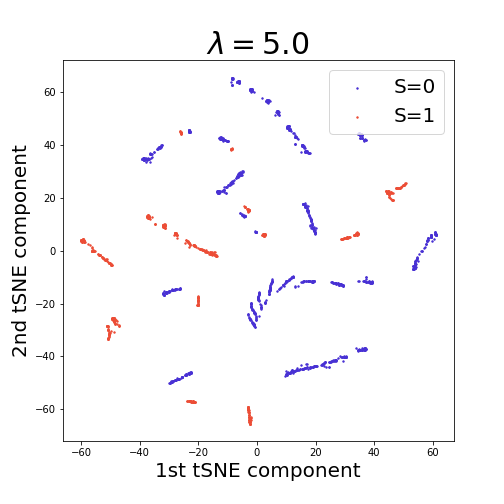}
    \includegraphics[width=0.24\textwidth]{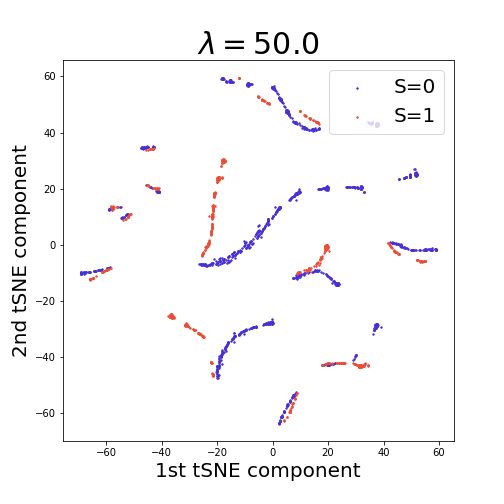}
    \includegraphics[width=0.24\textwidth]{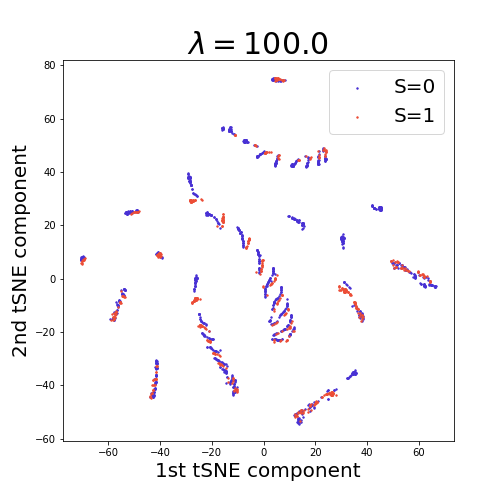}
}
\caption{Unsupervised LFR: tSNE visualization of 
the learned fair representation for
(upper) \textit{Adult} and (lower) \textit{COMPAS} with various values of $\lambda$.  
}
\label{fig:tsne}
\end{center}
\vskip -0.2in
\end{figure}

\begin{figure}[ht]
\vskip 0.2in
\begin{center}
\centerline{
    \includegraphics[width=0.24\textwidth]{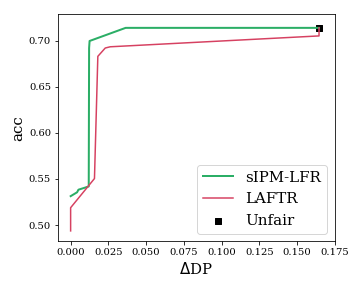}
    \includegraphics[width=0.24\textwidth]{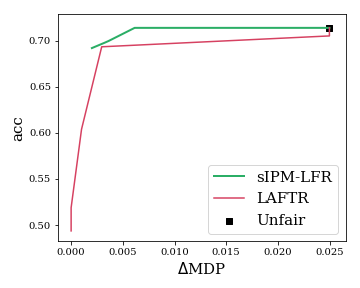}
    \includegraphics[width=0.24\textwidth]{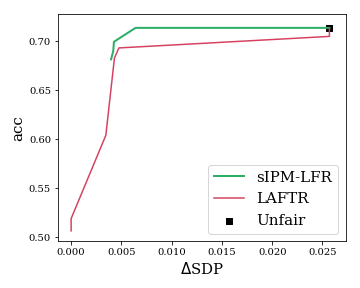}
}
\caption{Unsupervised LFR: Pareto-front lines of $\{ \Delta \texttt{DP}, \Delta \texttt{MDP}, \Delta \texttt{SDP} \}$ (x-axis) vs. \texttt{acc} (y-axis) on \textit{Adult} with the artificial label.}
\label{fig:art1}
\end{center}
\vskip -0.2in
\end{figure}

\section{Ablation studies}
\label{appendix:abl}

This section provides additional ablation experiments that are not included in the main manuscript.

\paragraph{Computation time}
We conduct learning-time comparisons for the sIPM-LFR and LAFTR.  
As can be seen in Table \ref{table:comptime}, sIPM-LFR requires about 20\% less computation times compared to the LAFTR. 

\paragraph{Varying the dimension of the representation}
We analyze the effect of varying the representation's dimension $m$. 
For each dataset, we consider two values of $m$, and compare their performances with the Pareto-front lines. 
As shown in Figure \ref{fig:nodesdims}, our method is more insensitive to the selection of $m$ compared to the LAFTR. 

{
\paragraph{sIPM-LFR vs. MMD-LFR}
We compare the sIPM-LFR to the FVAE \cite{https://doi.org/10.48550/arxiv.1511.00830} which is one of the MMD-based LFR methods.
Theoretically, the MMD regularization in the FVAE is also a kind of IPM that utilizes a unit ball in an RKHS as $\mathcal{V}$ (the class of discriminators).
We can easily show that Theorem \ref{thm3} and Proposition \ref{prop:rbf} imply that the MMD with the Gaussian kernel is upper bounded by the sIPM. 
That is, by controlling the parametric IPM, we expect that the MMD will be also reduced.
}

{
An obvious practical advantage of the sIPM-LFR over the FVAE would be computational simplicity. 
We conduct an experiment to compare the stability and performance between the sIPM-LFR and FVAE. 
Figure \ref{appendixfig:unsup_mmd} depicts the scatter points with standard errors for $\Delta \texttt{DP}$ and \texttt{acc} for 1-Sigmoid-NN on \textit{Adult} dataset.
We can check that the sIPM-LFR is more stable as well as superior compared to the FVAE, which again validates the superiority of our method. 
}





\begin{table}[ht]
	\caption{Training time comparisons between the sIPM-LFR and LAFTR. We report each method's mean and standard values with five random implementations.
}
	\label{table:comptime}
	\vskip 0.15in
	\begin{center}
		\begin{small}
			\begin{tabular}{c||c|c}
				\toprule
				Dataset 
				& Method & Computation Time (s.e.) \\
				\midrule
				\midrule
				{\textit{Adult} } & sIPM-LFR \checkmark & \textbf{100.00\%} (0.80\%)  \\
				& LAFTR & 117.48\% (0.33\%) \\
				\midrule
				{\textit{COMPAS} }& sIPM-LFR \checkmark & \textbf{100.00\%} (3.23\%)  \\
				& LAFTR & 121.13\% (1.77\%) \\
				\midrule
				{\textit{Health} }& sIPM-LFR \checkmark & \textbf{100.00\%} (0.91\%)  \\
				& LAFTR & 117.81\% (0.53\%) \\
				\bottomrule
			\end{tabular}
		\end{small}
	\end{center}
	\vskip -0.1in
\end{table}

\begin{figure}[ht]
\vskip 0.2in
\begin{center}
\centerline{
    \includegraphics[width=0.24\textwidth]{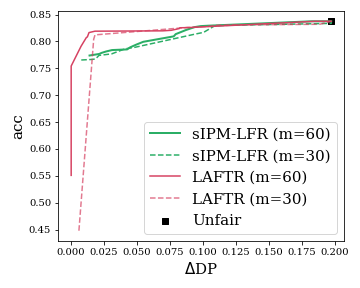}
    \includegraphics[width=0.24\textwidth]{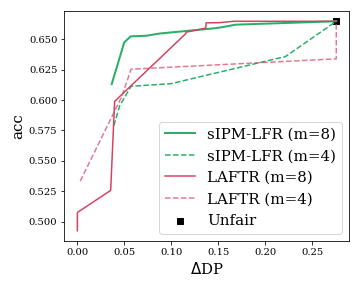}
    \includegraphics[width=0.24\textwidth]{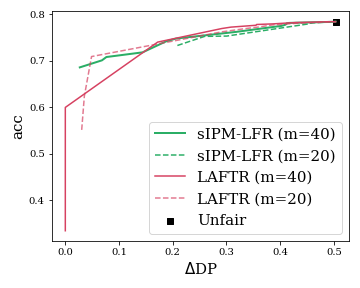}
}
\caption{Unsupervised LFR: Pareto-front lines between $\Delta \texttt{DP}$ (x-axis) and \texttt{acc} (y-axis) with different values of the representation dimension. We analyze three datasets: (left) \textit{Adult}, (center) \textit{COMPAS}, and (right) \textit{Health}. We utilize the 1-LeakyReLU-NN as the prediction model. 
}
\label{fig:nodesdims}
\end{center}
\vskip -0.2in
\end{figure}

\begin{figure*}[ht]
\vskip 0.2in
\begin{center}
\centerline{
    \includegraphics[width=0.35\textwidth]{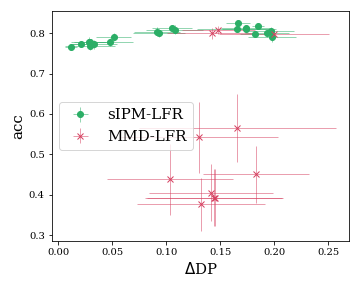}
}
\caption{Scatter plot with standard error bar of $\Delta\texttt{DP}$ and \texttt{acc} with various $\lambda$. 
Each horizontal and vertical bars present the standard errors for $\Delta\texttt{DP}$ and \texttt{acc}, respectively.
All results are from \textit{Adult} test dataset.}
\label{appendixfig:unsup_mmd}
\end{center}
\vskip -0.2in
\end{figure*}



\end{document}